%% file: samplepaper.tex
\PassOptionsToPackage{svgnames}{xcolor}
\documentclass[runningheads]{llncs}
\usepackage[T1]{fontenc}
\usepackage{graphicx}
\input{definitions}

\usepackage{soul}
\usepackage{url}
\usepackage[hidelinks]{hyperref}
\usepackage[utf8]{inputenc}
\usepackage{amsmath}
\usepackage{booktabs}
\usepackage{algorithm}
\usepackage{algorithmic}
\usepackage{longtable} 
\usepackage{pdflscape} 

\newcommand{\future}[1]{}

\allowdisplaybreaks

\usepackage{thmtools}
\usepackage{thm-restate}

\usepackage{tcolorbox}
\tcbuselibrary{skins,breakable}
\usetikzlibrary{shadings,shadows}

    {\endtcolorbox}

    {\endtcolorbox}

\newenvironment{myblock}[1]{%
    \tcolorbox[
    breakable,
    boxrule=0.5pt,
    arc=3pt,
    auto outer arc,
    colback=white,
    colframe=black!75,
    colbacktitle=white,
    coltitle=black,
    fonttitle=\bfseries,
    titlerule=0.5pt,
    boxsep=0pt,
    left=3pt,
    right=3pt,
    top=3pt,
    bottom=3pt,
    toptitle=2pt,
    bottomtitle=2pt,
    title={#1}]}%
    {\endtcolorbox}

\usepackage{ragged2e}
\usepackage{tcolorbox}
\tcbuselibrary{skins}
\definecolor{ovgray}{gray}{0.97}
\newcolumntype{Y}{>{\RaggedRight\arraybackslash}X}
\newcolumntype{K}{>{\bfseries\RaggedRight\arraybackslash}p{0.20\linewidth}}

\newcolumntype{L}{>{\RaggedRight\arraybackslash}p{0.22\linewidth}}
\newcolumntype{R}{>{\RaggedRight\arraybackslash}p{0.74\linewidth}}

\begin{document}
\title{Joint Air Traffic Flow and Capacity Management via Answer Set Programming}
\titlerunning{Joint ATFCM via ASP}
%
\author{Alexander Beiser\inst{1}\orcidID{0009-0009-4252-1043} \and
Markus Hecher\inst{2}\orcidID{0000-0003-0131-6771} \and
Nysret Musliu\inst{1}\orcidID{0000-0002-3992-8637} \and
Stefan Woltran\inst{1}\orcidID{0000-0003-1594-8972}}
\authorrunning{Beiser et. al.}
%
\institute{TU Wien, Vienna, Austria, \email{\{alexander.beiser\}@tuwien.ac.at} \and
University of Potsdam, Germany \& CNRS, France }
%
\maketitle              
\begin{abstract}
Operational \emph{Air Traffic Flow and Capacity Management} (ATFCM) balances flight demand with available sector capacity, to ensure safe and efficient operations.
Mathematical models enhance operational ATFCM performance by framing demand-capacity balancing as an optimization problem, maximizing efficiency while adhering to safety constraints.
However, SOTA research optimizes the aircraft trajectories (called ATFM) or the sector configuration (called DAC) separately.
This leaves a research gap of whether joint optimization of ATFM and DAC can bring benefits.
We partially address this limitation by introducing a \emph{joint ATFCM} model with an encoding in Answer Set Programming (ASP).
The ASP implementation is evaluated against two baselines applied to our joint model: a SOTA Mixed Integer Programming (MIP) model and an iterative CASA-based heuristic.
Computational experiments utilize an instance generator fitted to historical OpenSky Network flight data.
Our results indicate that the ASP model outperforms the MIP model, while ASP remains competitive against heuristics on small instances.
Furthermore, while DAC has the largest improvement on solving performance compared to rerouting and delaying, unrestricted variants of DAC or rerouting lead to search space thrashing.
\keywords{Answer Set Programming  \and Air Traffic Flow and Capacity Management \and Dynamic Airspace Configuration \and Joint Optimization.}
\end{abstract}
\vspace{-0.3cm}
\section{Introduction}
\label{sec:introduction}
\vspace{-0.3cm}

Operational \emph{Air Traffic Flow and Capacity Management} (ATFCM) is essential to maintaining
safe and efficient air traffic.
ATFCM aims to balance demand and capacity.
Demand can be understood as the number of flights that wish to operate in a sector at a specific time, while capacity is the maximum number of flights a sector can handle in a time window~\cite{cook_european_2007}.
Demand must be kept below capacity, as overloading a sector increases the risk of safety incidents.
This is achieved by delaying or rerouting aircraft, or restructuring airspace.

However, at the time of writing, ATFCM faces unprecedented challenges due to multiple factors, including air traffic controller shortages and increasing traffic demand~\cite{eurocontrol_sesar_2025}.
These challenges lead to an increase in demand,
with a simultaneous decrease in capacity.
%
%
Tackling this imbalance is non-trivial; however, scientific endeavors are making progress on mathematical models suitable for optimization.
Existing ATFCM research can be split into work that focuses on increasing the efficiency of \emph{Air Traffic Flow Management} (ATFM) and optimizing airspace via \emph{Dynamic Airspace Configuration} (DAC).
ATFM focuses on measures to keep demand below capacity by delaying aircraft or rerouting them. 
Their models assume a static or pre-determined airspace and methodologically often use Mixed Integer (linear) Programming (MIP)~\cite{bertsimas_air_1998,dalmau_multi-objective_2024,garcia-heredia_combinatorial_2019}.
In contrast to this, DAC research uses methods based on geometric, graph-based, or trajectory-based
approaches~\cite{lui_airspace_2024},
but keeps flight plans fixed.

Nevertheless, gaps remain. 
First, \emph{joint} optimization of ATFM and DAC can potentially yield performance improvements, due to higher degrees of freedom.
Second, the dominance of MIP raises the question whether declarative logic programming methods such as Answer Set Programming (ASP)~\cite{gelfond_logic_2002} can serve as a viable formalization language.
From an ASP perspective, joint ATFCM is an interesting real-world setting with interacting hard and soft constraints (safety, efficiency, and fairness/stability). 

\smallskip
\noindent
\textbf{Contributions}.
We address these gaps by proposing a novel ATFCM model:
\vspace{-0.2cm}

\begin{itemize}
    \item We present the first combined air traffic flow and capacity management model for simultaneous optimization in ASP.
    \item We provide an instance generator for small instances fitted to open flight data (including the OpenSky Network COVID-19 dataset), enabling controlled scaling studies across geographic regions.
    \item We present the first ATFCM ASP encoding, which we benchmark on data sampled from the instance generator.
    We compare the ASP encoding to simulated state-of-the-art (SOTA) MIP and heuristic algorithms, and finally
    perform an ablation study to investigate the impact of enabling / partially disabling/disabling actions.
    Our findings indicate
    (1) promising results of ASP,
    (2) that restricting the search spaces of DAC and rerouting improves results under a fixed compute budget,
    and (3) that computational bottlenecks in both solving and grounding remain for unbounded variants.
\end{itemize}

After the introduction (Section~\ref{sec:introduction}) and the preliminaries (Section~\ref{sec:prelims}), we present the ATFCM model with the ASP encoding in Section~\ref{sec:atfcm}.
Section~\ref{sec:data} shows the data pipeline, Section~\ref{sec:exps} the experiments, and Section~\ref{sec:concl} our conclusions. 

\vspace{-0.3cm}
\section{Preliminaries and Related Work}\label{sec:prelims}
\vspace{-0.3cm}

\noindent
\textbf{Answer Set Programming}. 
ASP is a logic programming paradigm based on non-monotonic reasoning~\cite{gelfond_logic_2002}.
A program is a set of rules 
$p_1(\vecv {X}_1) \vee \ldots \vee p_\ell(\vecv {X_\ell}) \leftarrow p_{\ell{+}1}(\vecv {X}_{\ell{+}1}), \ldots, p_{m}(\vecv {X}_{m}), \neg p_{m{+}1}(\vecv {X}_{m{+}1}), \ldots, \neg p_n({\vecv {X}_n})$,
where $p_i(\vecv{X}_i)$ is a literal, $l,m,n$ are non-negative integers s.t. $l \leq m \leq n$,
and $\vecv{X}_i = \langle x_1, \ldots, x_z \rangle$ is a term vector comprising constants and variables.
\emph{Grounding} refers to the instantiation of variables by all possible domain values,
resulting in a ground program.
%
Semantics of ground programs is defined via \emph{Gelfond-Lifschitz reduct}.
\emph{Weak constraints} enable optimization~\cite{877512}.
ASP is attractive for ATFCM because it supports rapid modeling of constraints, and lexicographic optimization.

\noindent
\textbf{Setting.}
Air Traffic Flow and Capacity Management (ATFCM) is the layer of Air Traffic Management that aims to keep demand below capacity~\cite{cook_european_2007}.
Capacity is primarily driven by controller workload: airspace is partitioned into \emph{sectors} and each sector admits at most $X$ flights per hour.
For \emph{filed} flights (requested/planned operations), a network manager considers interventions that trade off safety (no overload) and efficiency (delay, fuel, distance, stability).
In practice, common levers are (i) ground delay, (ii) rerouting within (constrained) acceptable trajectories, and (iii) \emph{Dynamic Airspace Configuration} (DAC).
While (i) and (ii) operate on a per-aircraft trajectory basis, (iii) operates on a sector level.

\smallskip
\noindent
\textbf{Related work}.
In European operations, ATFM departure regulations are largely computed by the \emph{Computer Assisted Slot Allocation} (CASA) algorithm, which follows a ``First-Planned, First-Served''
principle~\cite{bucuroiu_european_2025}.
CASA is known to be efficiency-suboptimal; even early mathematical optimization models improved total delay and were later extended with additional criteria (e.g., fairness)~\cite{vranas_optimal_1996}.
Bertsimas and Patterson showed that the ATFM model for static airspaces and static routes, but dynamic delaying, is already \texttt{NP}-hard~\cite{bertsimas_air_1998}.
On the DAC side, many approaches construct sectorizations in continuous airspace using heuristics (e.g., genetic algorithms, Voronoi diagrams)~\cite{yin_multi-objective_2016} or exploit graph/flow and learning-based methods~\cite{chen_dynamic_2014,chandra_integration_2024}.
Model-wise, our approach occupies a middle ground between bottom-up sector construction by combining atomic sector structures and airspace partitioning.
From a logic programming perspective, the aviation literature contains only isolated uses (e.g., drones/vertiports)~\cite{nguyen_optimized_2024,kim_enhancing_2025}. 
Still, ASP is used in many other industrial domains~\cite{abels_train_2021,falkner_industrial_2018,baumeister_towards_2024}. 
We provide the first ASP encoding for joint ATFCM, compare it to SOTA methods, conduct an ablation study, and investigate whether ATFCM suffers from the grounding bottleneck.

\section{Joint ATFCM Model via ASP}\label{sec:atfcm}

In this section, we present the basics of our ATFCM model by following a running example based on Figure~\ref{fig:schematic-actions}.
Our model is based on the basic structure of the \emph{navpoint graph}.
Aircraft fly between \emph{navpoints} or \emph{airports} along \emph{airways} on the navpoint graph.
We assume that flights can not take other maneuvers and must stay on the airways.
With this, we can define our navpoint graph structure.

\begin{definition}[Navpoint Graph]
\label{def:navpoint-graph}
    The navpoint graph $G = (V,E)$ is undirected and labeled.
    Each $v \in V$ is either a navpoint or an airport and has an associated coordinate $(lat,lon,alt) \in \mathbb{R}^3$.
    Edges (airways) $e = (v_i,v_j) \in E$ connect vertices.
    Distances of airways are given by the geodesic distance $d:V^2 \rightarrow \mathbb{R}$.
\end{definition}
\begin{example}
\label{ex:navpoint-graph}
    $G$ is defined according to Figure~\ref{fig:schematic-actions}.
    Let $V' = \{v_0, v_1, v_2\}$, with $v_0 = (50,14,100)$, $v_1 = (48,14,100)$, $v_2 = (46,14,100)$.
    and $V_a = \{a_0, a_1\}$ with $a_0 =  (48.3536, 11.7832,0)$ and $a_1 = (48.1179,16.5566,0)$, so $V = V' \cup V_a$, 
    and further, $E = \{(a_0, v_0),(a_0,v_1), (a_0,v_2), (v_0,a_1), (v_1, a_1), (v_2,a_1)\}$.
    For the sake of this running example, we approximate the distances to be $d(a_0,v_0) = d(a_0,v_2) = d(a_1,v_0) = d(a_1,v_2) = 290 [km]$
    and $d(a_0,v_1) = d(a_1,v_1) = 180 [km]$.
\end{example}

  \begin{figure}[t]
    
        \begin{subfigure}[t]{0.49\linewidth}
              \centering
              \includegraphics[width=6.5cm]{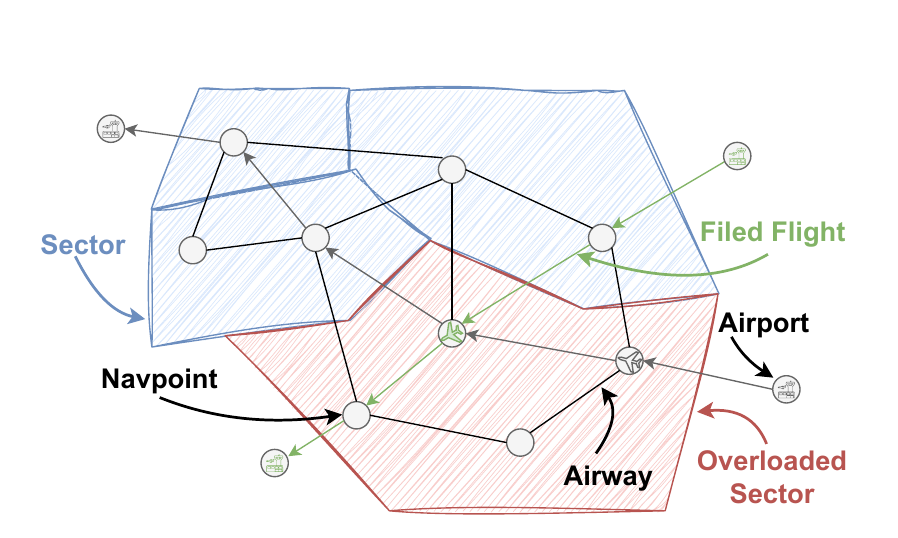}~\\[-0.55em]
              \caption{Schematics of ATFCM for a time $t$. 
              }
              \label{fig:schematic-general-graph}
              \vspace{-0.25cm}
    \end{subfigure}
    \begin{subfigure}[t]{0.49\linewidth}
          \centering
          \includegraphics[width=6.5cm]{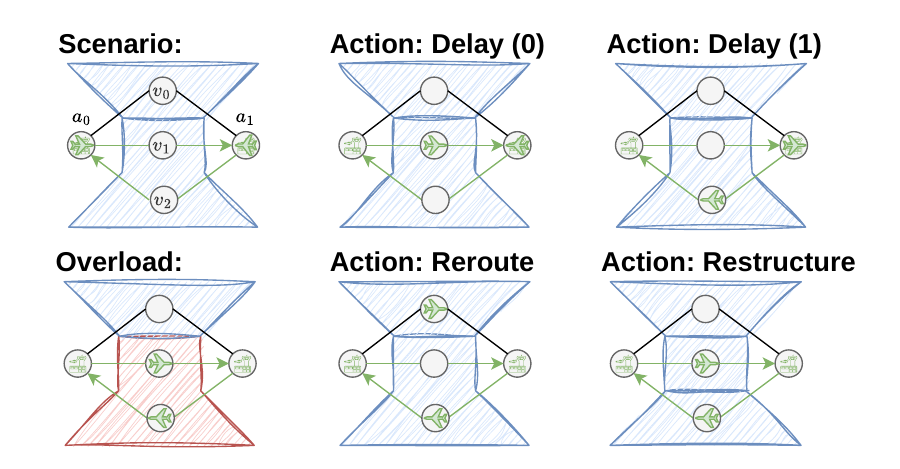}~\\[-0.55em]
          \caption{
            Schematics with available actions.
          }
          \label{fig:schematic-actions}
    \end{subfigure}
      \caption{
          (a):
          Filed flights fly along airways (edges) on the navpoint graph, overlaid with a sector configuration with four sectors, where one sector is overloaded (red).
          (b):
            From left top to bottom right:
            two filed flights, delaying a flight,
            indication of an overload, rerouting, and restructuring.
        }
  \end{figure}

\begin{restatable}[Time Model]{definition}{timemodel}
\label{def:time-model}
    Time is discretized into timesteps $t_i \in T$, with $T_{\textit{gran}}$ timesteps per hour where $T_{\textit{gran}}$ divides 60
    such that the timestep length is defined in minutes by 
    $T_s = \frac{60}{T_{\textit{gran}}}$.
    The set of timesteps $T$, is for $dur \in \mathbb{N}$ given by $T = \{t \mid t \in \mathbb{N}, 1 \leq t \leq dur \cdot T_{\textit{gran}}\} \cup \{0\}$.
\end{restatable}
\begin{example}
   We consider $T_{\textit{gran}} = 1$ with $|T| = 25$, so $dur = 24$ ($1$h granularity).
   If $T_{\textit{gran}} = 4$, with $|T| = 97$, then also $dur = 24$ ($15$ minutes granularity).
\end{example}


\begin{definition}[Sector Configuration]
\label{def:sector-model}
A \emph{sector} at time $t$ is a set of vertices (navpoints/airports).  
To keep the index set fixed across time, we represent each sector $i$ by a distinguished
\emph{sector identifier} $i \in V$. We write $S = V$ for the set of all identifiers.
The sector identified by $i$ at time $t$ is the set $\textit{sec}(i,t) \subseteq V$.
Formally, a \emph{sector configuration} is a function
$\textit{sec}: S \times T \rightarrow 2^{V}$ such that for every $t \in T$:
(i) $\{\textit{sec}(i,t) \mid i \in S,\ \textit{sec}(i,t)\neq\emptyset\}$
forms a partition of $V$; and
(ii) every airport is a singleton sector.
Conversely, $\textit{sec}^{-1}: V \times T \rightarrow S$ denotes the inverse assignment.
The initial sector configuration at timestep $t_s = 0$ is the atomic configuration, with atomic capacities $\textit{cap}(i,0) = c_i$.
Due to operational constraints, merging sectors does not lead to an increase in overall capacity.
We model this by defining $\textit{cap}(i,t)$ for a composite sector to be the max function.

    %
\end{definition}
\begin{example}
\label{ex:sector-model}
We define two sector models on $G$ from Example~\ref{ex:navpoint-graph} and let $T_{\textit{gran}} = 4$ and $|T| = 97$.
Let the atomic capacities be $\textit{cap}(a_0,0) {\,=\,} \textit{cap}(a_1,0) {\,=\,} 2$ and $\textit{cap}(v_0,0) {\,=\,}$ $ \textit{cap}(v_1,0) {\,=\,} \textit{cap}(v_2,0) {\,=\,} 1$.
We define the static model as $\forall t >0:$ $\textit{sec}(a_0,t) = \{a_0\}$, $\textit{sec}(a_1,t) = \{a_1\}$, $\textit{sec}(v_0,t) = \{v_0\}$, and $\textit{sec}(v_1,t) = \{v_1,v_2\}$.
%
\end{example}


\vspace{-0.1cm}
\begin{definition}[Aircraft Model]
\label{def:airplane-model}
    Let $\texttt{A}$ be the set of aircraft where an aircraft $p = (I_p,u_p,F_p) \in \texttt{A}$ is defined by its ID $I_p$,
    (time granularity normalized) velocity $u_p$, and its flights $F_p$.
    A flight $f = (I_f, \textit{tr}) \in F_p$ is comprised of an ID $I_f$ and a trajectory $\textit{tr} \in \mathcal{TR}_a$.
    A trajectory $\textit{tr} \in \mathcal{TR}_a$ is a (simple) path on $G$ augmented with timestamps in the form $(v_i,t_i)$, where $v_i \in V$ and $t_i \in T$;
    for pairs $(v_i,t_i),(v_{i+1},t_{i+1}) \in \textit{tr}$ it holds $t_i < t_{i+1}$ s.t. $t_{i+1} - t_i = \max\{\lceil \frac{d(v_i,v_{i+1})}{u_a}\rceil, 1 \}$.
    The set of flights is $\mathcal{F}$ and the set of flights w.r.t. a velocity $u_a$ is
    $\mathcal{F}_a$.
\end{definition}
\begin{example}
\label{ex:flight-model}
Let $G$ be as in Example~\ref{ex:navpoint-graph}, $T_{\textit{gran}}=4$, and $|T| = 97$.
We assume that both aircraft $\texttt{A} = \{p_0, p_1\}$ are typical single-piston engine aircraft, where $p_0 = (p_0,u_a,f_0)$ and $p_1 = (p_1, u_a, f_1)$.
Therefore, we choose their velocity to be $u_a = 172.79 [kts] = 0.08 [\frac{km}{s}] = 80 [\frac{km}{15 \cdot min}]$, where the last one is the time granularity normalized velocity.
With this velocity, we obtain the flight times $\Delta t = 3$ for $(a_0,v_1)$ and $(v_1,a_1)$, and $\Delta t = 4$ for $(a_0,v_0),(v_0,a_1),(a_0,v_2)$, and $(v_2,a_1)$.
Finally, according to Figure~\ref{fig:schematic-actions} the filed flights are: $\mathcal{F} = \{f_0, f_1\} = \{(0,((a_0,1),(v_1,4),(a_1,7))), (1,((a_1,1),(v_2,5),(a_0,9)))\}$.
\end{example}

\begin{definition}[\textbf{ATFCM Instance}]
An instance is $I = (G, T_{\textit{gran}}, T, \textit{sec}, \textit{cap}, \texttt{A})$, comprising a navpoint graph $G$, time granularity $T_{\textit{gran}}$, timesteps $T$, initial sector configurations $\textit{sec}$, atomic capacities $\textit{cap}$, and aircraft $\texttt{A}$. All identifiers are unique.
\end{definition}

\begin{definition}[\textbf{ATFCM Solution}]
\label{def:atfcm-solution}
A valid solution $\mathbf{S} = (\mathbf{T}, \mathbf{SEC}, \mathbf{A})$ defines solution timesteps $\mathbf{T} \supseteq T$, an updated DAC $\mathbf{SEC}$, and adjusted aircraft $\mathbf{A}$. 
We denote with $\mathbf{F}$ the set of adjusted flights stemming from $\mathbf{A}$. We require:
\begin{itemize}\setlength{\itemsep}{0pt}
    \item \textbf{Flight Adjustments:} Flights $f \in \mathbf{F}$:
    we measure arrival delay as $d_f$; if $f$ is delayed and/or rerouted, then $r_f=1$ (else $0$).
    \item \textbf{Sector Constraints:} $\mathbf{SEC}$ partitions navpoints over time (airports atomic).
    \item \textbf{Demand Routing:} A flight $f$ traversing navpoints $v_a$ to $v_b$ in $\Delta t = t_b - t_a$ occupies $\textit{sec}^{-1}(v_a,t)$ for $t \in [t_a, t_a + \Delta t/2]$ and $\textit{sec}^{-1}(v_b,t)$ subsequently. We define $\textit{over}(f,i,t) \in \{0,1\}$ to indicate if $f$ occupies sector $i$ at $t$.
    \item \textbf{Capacity Constraints:} Sector demand $q(i,t) = \sum_{f \in \mathbf{F}} \textit{over}(f,i,t)$ must satisfy $q(i,t) \leq \textit{cap}(i,t)$ for all $t \in \mathbf{T}$, $i \in V$.
\end{itemize}
\end{definition}

\begin{definition}[\textbf{Optimization Problem}]
\label{def:atfcm-optimization-problem}
Valid solutions are strictly ordered ($\mathbf{S} \prec \mathbf{S}'$) by a lexicographically minimized objective tuple $(\ATFCMoDELAY, \ATFCMoNUMSEC, \ATFCMoSECCHANGES, \ATFCMoREROUTED, \ATFCMoRECONFIG)$, balancing efficiency and stability:
\begin{itemize}\setlength{\itemsep}{0pt}
    \item \textbf{Arrival Delay:} $\ATFCMoDELAY = \sum_{f \in \mathbf{F}} d_f$
    \item \textbf{Active Sectors:} $\ATFCMoNUMSEC = \sum_{t \in \mathbf{T}_{>0}} \sum_{i \in V} \mathbb{I}\big(|\mathbf{SEC}(i,t)| > 0\big)$
    \item \textbf{Vertex-Sector Changes:} $\ATFCMoSECCHANGES = \sum_{t \in \mathbf{T}_{\geq 1}} \sum_{v \in V} \mathbb{I}\big(\mathbf{SEC}^{-1}(v,t-1) \neq \mathbf{SEC}^{-1}(v,t)\big)$
    \item \textbf{Reroutes/Delays:} $\ATFCMoREROUTED = \sum_{f \in \mathbf{F}} r_f$
    \item \textbf{Reconfigurations:} $\ATFCMoRECONFIG = \sum_{t \in \mathbf{T}_{\geq 0}} \sum_{v \in V} \mathbb{I}\big(\textit{sec}^{-1}(v,t) \neq \mathbf{SEC}^{-1}(v,t)\big)$
\end{itemize}
\end{definition}

\vspace{-0.6cm}
\subsection{ASP Encoding: Definition of Instance}
\label{subsec:asp-encoding-instance}
Based on these definitions, we are ready to present the ASP instance and encoding.
First, we are going to present the ASP instance with some preprocessing steps, followed by the encoding.
In our prototype written in Python\footnote{GitHub: \url{https://github.com/alexl4123/ASPaeroFlow-Optimizer/tree/main/02_ASP}}, the input as defined above is translated into an ASP instance, which we structure into three parts: \emph{navpoint graph instance},
\emph{initial DAC instance}, and \emph{filed flights instance}.
The navpoint graph instance defines the navpoint graph with the distinction between airports and navpoints.
Additionally, we perform a preprocessing step which eases the computational load of ASP, by computing normalized distances w.r.t. $T_{\textit{gran}}$ based on aircraft velocities --- so the distances are given in timesteps.

\begin{myblock}{Navpoint Graph Instance}
\small
    \begin{itemize}
        \item \texttt{arpt(<ID>)}:
            Indicates that a vertex is an airport.
        \item \texttt{nvpt\_e(<V\_i>,<V\_j>,<V\_a>,<$\Delta t$>)}:
            For two connected vertices $e = (v_i,v_j) \in E$,
            we pre-compute the normalized travel duration $\Delta t$ for all normalized velocities of aircraft $u_a$.
    \end{itemize}
\end{myblock}

For the initial DAC instance we define atomic capacities and init configuration.

\begin{myblock}{Initial DAC Instance}
\small
    \begin{itemize}
        \item \texttt{atomic\_s(<NVPT-ID>,<CAP>,<T>)}:
            Initial capacities (assumed static).
        \item \texttt{nvpt\_s(<NVPT-ID>,<SEC-ID>,<T=0 for FACTS>)}:
            The initial sector configuration for time $t = 0$.
    \end{itemize}
\end{myblock}

The filed flights are given with the additional information about aircraft velocity and the mapping for multi-leg flights.

\begin{myblock}{Filed Flights Instance}
\small
    \begin{itemize}
        \item \texttt{acft(<ID>,<u\_a>)}:
            Aircraft ID and its velocity $u_a$.
        \item \texttt{acft\_flt(<ACFT-ID>,<FLIGHT-ID>)}:
        The mapping between aircraft and flights; needed among others for multi-leg flights.
        \item \texttt{nvpt\_flt\_pln(<FLT-ID>,<NVPT-ID>,<T>)}:
            The filed flight plan as 3-tuples of flight ids, navpoint ids, and time steps.
    \end{itemize}
\end{myblock}

\begin{example}
The converted running example into the ASP instance:
\begin{lstlisting}[style=examplestyle]
arpt(a0). arpt(a1). nvpt_e(a0, v0, 80, 4). nvpt_e(a0, v1, 80, 3). nvpt_e(a0, v2, 80, 4).
nvpt_e(a1, v0, 80, 4). nvpt_e(a1, v1, 80, 3). nvpt_e(a1, v2, 80, 4).  atomic_s(a0,2,0). atomic_s(a1,2,0).
atomic_s(v0,1,0). atomic_s(v1,1,0). atomic_s(v2,1,0). nvpt_s(a0, a0, 0). nvpt_s(a1, a1, 0).
nvpt_s(v0, v0, 0). nvpt_s(v1, v1, 0). nvpt_s(v2, v1, 0).  acft(0,80). acft(1,80). acft_flt(0,0).
acft_flt(1,1). nvpt_flt_pln(0,a0,1). nvpt_flt_pln(0,v1,4). nvpt_flt_pln(0,a1,7). nvpt_flt_pln(1,a1,1). 
nvpt_flt_pln(1,v2,5). nvpt_flt_pln(1,a0,9). atomic_s(N,C,T) :- atomic_s(N,C,0), t(T).
\end{lstlisting}
    \end{example}

\vspace{-0.6cm}
\subsection{ASP Encoding: Actions}
\label{subsec:asp-encoding-actions}

We continue to present the ASP encoding.
The encoding is written in a way that enables us to perform an ablation study, which actions (\emph{ground delaying}, \emph{rerouting}, or \emph{airspace restructuring}) are actually beneficial.
For each action, we implement $3$ versions, which gives us in total $27$ possible ASP configurations for our ablation study.
An ASP configuration is given by the 3-tuple $(R,D,S)$, 
with $R \in \{r, r_p, \neg r\}$,
$D \in \{d, d_p, \neg d\}$, and
$S \in \{s, s_p, \neg s\}$.
The Python wrapper translates a configuration $(R,D,S)$ to a set of facts, which provides instructions to the encoding.
Here, we focus on the definitions of these facts and how this is translated in the encoding.
We start with $D$, followed by $R$, and lastly $S$.

\begin{myblock}{Ground Delaying ($D \in \{d, d_p, \neg d\}$)}
\small
\begin{itemize}
    \item $d$: Delaying along the entire considered time $T$
    \begin{itemize}
        \item ASP atom: \texttt{reg\_del}
    \end{itemize}
    \item $d_p$: Partial delaying (only max. \texttt{<MAX-TIME>} = 4 timesteps)
    \begin{itemize}
        \item ASP atom: \texttt{reg\_restr\_del}
        \item ASP atom: \texttt{reg\_dmd}\texttt{(<MAX-TIME>)}
    \end{itemize}
    \item $\neg d$: Delaying is forbidden
    \begin{itemize}
        \item ASP atom: \texttt{-reg\_del}
    \end{itemize}
\end{itemize}
\end{myblock}

Ground delaying means that we guess the actual departure time depending on the atom ($d,d_p,\neg d$) present in the instance (Lines~(1--4)).
For $d_p$, we restrict the guess to a pre-defined selection (Line~(3)).

\begin{lstlisting}
p_dep_t(ID,T) :- flt_id(ID), T = #min{T':nvpt_flt_pln(ID,_,T')}.
1{a_dep_t(ID,T):t(T), T >= TP}1 :- p_dep_t(ID,TP), reg_del.
1{a_dep_t(ID,T):t(T),T>=TP,T<=TP+MD}1 :- p_dep_t(ID,TP),reg_dmd(MD),reg_rd.
a_dep_t(ID,TP) :- p_dep_t(ID,TP), -reg_del.
\end{lstlisting}
\begin{myblock}{Rerouting ($R \in \{r, r_p, \neg r\}$)}
\small
    \begin{itemize}
        \item $r$: Full rerouting is allowed (ASP finds paths).
        \begin{itemize}
            \item ASP atom: \texttt{reg\_rer}
        \end{itemize}
        \item $r_p$: Partial rerouting; pick best of 4 diverse precomputed shortest paths (Jaccard similarity at most $60\%$).
        Let \texttt{<FLT>} be the flight ID, \texttt{<NVPT>} be the navpoint at flight time \texttt{<FLT-T>} for a chosen path \texttt{<PTH>}.
        \begin{itemize}
            \item ASP atom: \texttt{reg\_restr\_rer}
            \item ASP atom: \texttt{reg\_rrp(<FLT>,<NVPT>,<FLT-T>,<PTH>)}
        \end{itemize}
        \item $\neg r$: Rerouting is forbidden.
        \begin{itemize}
            \item ASP atom: \texttt{-reg\_rer}
        \end{itemize}
    \end{itemize}    
\end{myblock}

We continue with $R$.
First, we combine the ATFM side, whether a flight is rerouted or delayed, into one atom \texttt{rer} (Lines~(1--2)).
If $r$ is enabled we guess a set of vertices (Line~(4)), if $r_p$ is enabled then we guess one path of the provided ones (Lines~(5)--(6)), and if rerouting is disabled ($\neg r$) or \texttt{rer} is not set, we require that the flight follows the vertices of the filed flight (Lines~(3) and (7)).

\begin{lstlisting}
{rer(ID)} :- nvpt_flt_pln(ID,_,_).
rer(ID) :- p_dep_t(ID,T), a_dep_t(ID,T'), T' > T.
nvpt_flt(ID,X,T) :- nvpt_flt_pln(ID,X,T), not rer(ID).
{nvpt_flt(ID,X,T):nvpt(X)}1 :- rer(ID),t(T),a_dep_t(ID,T'),T>=T',reg_rer.
1{nvpt_flt_cp(ID,P):reg_rrp(ID,_,_,P)}1 :- rer(ID),reg_restr_rer.
nvpt_flt(ID,X,T+T') :- reg_rrp(ID,X,T',P),rer(ID),a_dep_t(ID,T),nvpt_flt_cp(ID,P),reg_restr_rer.
nvpt_flt(ID,X,T+TT) :- nvpt_flt_pln(ID,X,T),a_dep_t(ID,TACT),p_dep_t(ID,TPLAN),TT=TACT-TPLAN,-reg_rer.
\end{lstlisting}

\begin{myblock}{Airspace Restructuring ($S \in \{s, s_p, \neg s\}$)}
\small
    \begin{itemize}
        \item $s$: Full airspace restructuring is allowed (no explicit operational constraints).
        \begin{itemize}
            \item ASP atom: \texttt{reg\_dyn\_sec\_alloc}.
        \end{itemize}
    \item $s_p$: Partial restructuring --- 4 choices (\texttt{<D>}): (i) keep airspace structure, split into (ii) 2 or (iii) 4 connected sectors, or (iv) split into atomic structure.
        We enforce connected sectors in our partial restructuring to comply with typical operational needs of lowering the \emph{sector complexity}.
        Let \texttt{<S>} be the original sector, \texttt{<NVPT>} be the navpoint, and \texttt{<S1>} the new sector.
        \begin{itemize}
            \item ASP atom: \texttt{reg\_rdsa}
            \item ASP atom: \texttt{nvpt\_restr\_sec\_alloc(<S>,<D>,<NVPT>,<S1>)}
        \end{itemize}
        \item $\neg s$: Keep initial structure.
        \begin{itemize}
            \item ASP atom: \texttt{-reg\_dyn\_sec\_alloc}.
        \end{itemize}
    \end{itemize}
\end{myblock}

We continue with $S$.
First, each airport sector is atomic (Line~(1)).
If $s$ is enabled, then an arbitrary sectorization set is guessed (Line~(2)), if $s_p$ is enabled, then one of four choices is picked (Lines~(3)--(5)), and if $\neg s$ is enabled, then we enforce the initial sectorization (Line~(6)).

\begin{lstlisting}
nvpt_s(SEC,SEC,T) :- nvpt(SEC), t(T), arpt(SEC).
1{nvpt_s(N,S,T):nvpt_s(_,S,0)}1:- nvpt(N),t(T),not arpt(N),reg_dyn_sec_alloc.
isec(SEC) :- nvpt_s(_,SEC,0), reg_rdsa.
1{nvpt_dec(S,0..3,T)}1 :- isec(S),t(T),not arpt(S),reg_rdsa.
nvpt_s(N1,S1,T) :- nvpt_dec(S,D,T),nvpt_restr_sec_alloc(S,D,N1,S1),reg_rdsa.
nvpt_s(NAV,SEC,T) :- t(T), nvpt_s(NAV,SEC,0), -reg_dyn_sec_alloc.
\end{lstlisting}

\vspace{-0.2cm}
\subsection{ASP Encoding: Constraints}
\label{subsec:asp-encoding-constraints}
\vspace{-0.2cm}

We require that the sectorization is a partition and adheres to our definition of a sectorization.
Therefore, for each timestep,
it must not be the case that a vertex is assigned to two sectors (Line~(1)),
a navpoint is assigned to an airport (Line~(2)),
an airport is assigned to a navpoint sector (Line~(3)),
or that a non-empty sector labeled by index $i$ does not contain vertex $i$ (Line~(4)).

\begin{lstlisting}
:- nvpt_s(NAV,SEC1,T), nvpt_s(NAV,SEC2,T), SEC1 != SEC2.
:- nvpt_s(NAV,SEC,T), not arpt(NAV), arpt(SEC).
:- nvpt_s(NAV,SEC,T), arpt(NAV), not arpt(SEC).
:- nvpt_s(NAV1,SEC,T), NAV1 !=SEC, not nvpt_s(SEC,SEC,T).
\end{lstlisting}

We further require that at any step the flight must not be operating before its planned departure time (Line~(1)).
Additionally, we require that for $r$ the rerouting is an actual path, therefore the flight starts at the departure airport and lands at the destination airport (Lines~(2--7)).
A flight is finished if its destination is reached (Lines~(8--9)).

\begin{lstlisting}
:- rer(ID), nvpt_flt(ID,_,T), p_dep_t(ID,T'), T<T'.
planned_origin(ID,X) :- rer(ID),p_dep_t(ID,T),nvpt_flt_pln(ID,X,T).
nvpt_flt(ID,X,T) :- rer(ID), a_dep_t(ID,T), planned_origin(ID,X).
p_dest(ID,X) :- rer(ID),T=#max{T':nvpt_flt_pln(ID,_,T')},nvpt_flt_pln(ID,X,T).
a_arr_t(ID,T) :- flt_id(ID), T=#max{T':nvpt_flt(ID,_,T')}.
:- flt_id(ID), not a_arr_t(ID,_).
:- flt_id(ID), p_dest(ID,X), a_arr_t(ID,T), nvpt_flt(ID,X',T), X != X'.
a_dest_f_rch_t(ID,T) :- rer(ID), p_dest(ID,X), T=#min{T':nvpt_flt(ID,X,T')}.
:- rer(ID), nvpt_flt(ID,_,T), a_dest_f_rch_t(ID,T'), T > T'.
\end{lstlisting}

We require that a flight is a valid trajectory, so is a path with valid timesteps (Lines~(1--4)),
that a flight is never twice at the same location (Line~(5)),
that a flight actually occurs (Lines~(6--7)),
and ensure that multi-leg flights occur in the right order (Line~(8)).
\begin{lstlisting}
p_nvpt_seq(ID,T,TT) :- nvpt_flt(ID,_,T), nvpt_flt(ID,_,TT), T < TT.
n_nvpt_seq(ID,T,TT) :- p_nvpt_seq(ID,T,TT),T<TTT,TTT<TT,p_nvpt_seq(ID,T,TTT).
nvpt_seq(ID,T,TT) :- p_nvpt_seq(ID,T,TT), not n_nvpt_seq(ID,T,TT).
:- nvpt_seq(ID,T,TT), nvpt_flt(ID,X,T), nvpt_flt(ID,XX,TT), acft_flt(AID,ID), acft(AID,S), D=TT-T, not nvpt_e(X,XX,S,D).
:- nvpt_flt(ID,X,T), nvpt_flt(ID,X,T'), T != T', not arpt(X).
flightOccurs(ID) :- nvpt_flt(ID,_,_).
:- flt_id(ID), not flightOccurs(ID).
:- acft_flt(AID,FID0), acft_flt(AID,FID1), FID0!=FID1, p_arr_t(FID0,T0), p_arr_t(FID1,T1), T0 < T1, a_arr_t(FID0, T0'),a_dep_t(FID1,T1'),T0'>=T1'.
\end{lstlisting}

\subsection{ASP Encoding: Navpoint Flight Mapping}
\label{subsec:navpoint-flight-mapping}

Due to the joint modeling of DAC and flow optimization, we are left to map navpoint reach times to travel times.
We approximate sector occupancy by assigning the first half of the flight between two navpoints to the first navpoint sector, and the second half to the second navpoint sector:
let $f \in \mathbf{F}$ be a flight reaching navpoint $v_a \in V$ at time $t_a$ and then flying to navpoint $v_b \in V$, which it reaches at time $t_b$.
Assuming that $\Delta t = t_b - t_a$, then $f$ is in the timespan $t \in [t_a, t_a + \lfloor \frac{\Delta t}{2} \rfloor]$ in sector $\textit{sec}^{-1}(v_a,t)$,
and for $t' \in [t_a + \lfloor \frac{\Delta t}{2} \rfloor + 1, t_b]$ in sector $\textit{sec}^{-1}(v_b,t')$.
We encode this in Lines~(1--5),
while we forbid ``holes'' (Line~(6)). 

\begin{lstlisting}
flight(ID,S,T) :- nvpt_flt(ID,X,T), nvpt_s(X,S,T).
flightT(ID,T,T',0) :- nvpt_seq(ID,T,TT),D=TT-T,t(T'),T'>T,T'<=T+D/2.
flightT(ID,TT,T',1) :- nvpt_seq(ID,T,TT),D=TT-T,t(T'),T'<TT,T'>T+D/2.
flight(ID,S,T') :- flightT(ID,T,T',0), nvpt_flt(ID,X,T), nvpt_s(X,S,T').
flight(ID,S,T') :- flightT(ID,TT,T',1), nvpt_flt(ID,XX,TT), nvpt_s(XX,S,T').
:- t(T), a_dep_t(ID,T'), T > T', a_arr_t(ID,T''), T < T'', not flight(ID,_,T).
\end{lstlisting}

\vspace{-0.2cm}
\subsection{ASP Encoding: Optimization}
\label{subsec:optimization}
\vspace{-0.2cm}
We map the optimization problem of Definition~\ref{def:atfcm-optimization-problem} into a multi-objective lexicographic optimization ASP program.
Regarding specifics, we treat safety as a soft constraint in order to track its computational results in the experiment section.
Still, only 0-overload solutions are valid solutions.
Overall, we have then six optimization criteria: the total number of overloads followed by the criteria from Definition~\ref{def:atfcm-optimization-problem}.
\begin{lstlisting}
sec_len(SEC,T,CL) :- t(T),nvpt(SEC),CL=#count{NAV:nvpt_s(NAV,SEC,T)}.
sec(SEC,T,0) :- sec_len(SEC,T,CAP_LEN), CAP_LEN = 0. 
sec(SEC,T,C) :- t(T), nvpt(SEC), C = #max{C',NAV:nvpt_s(NAV,SEC,T),atomic_s(NAV,C',T)}, sec_len(SEC,T,CAP_LEN), CAP_LEN > 0.
overload(X,T,LOAD-C) :- sec(X,T,C), #count{ID:flight(ID,X,T)}=LOAD, LOAD>C.
:~ overload(X,T,OVER). [OVER@10,X,T]     % PRIMARY WEAK CONSTRAINT
\end{lstlisting}

We use efficiency objectives as our secondary objectives, which follows common practice in the literature~\cite{bertsimas_air_1998}.
However, we expand the current practice by incorporating \emph{airspace efficiency} metrics, which we partially take from the DAC literature.
Generally speaking, DAC wants to have as few sectors as possible with as few changes as possible, while additionally minimizing sector complexity.
As proxies for this, people may use minimization of excess capacity~\cite{ATRDS-ROBUST-DAC}, or explicitly minimizing for controller communication overload~\cite{lui_airspace_2024}.
We minimize the number of sector changes, while considering airspace complexity with our partial sectorization approach ($s_p$).

\begin{lstlisting}
p_arr_t(ID,T) :- flt_id(ID), T = #max{T':nvpt_flt_pln(ID,_,T')}.
arrival_delay(ID,Y-T) :- p_arr_t(ID,T), a_arr_t(ID,Y).
sector_number(T,NUM) :- t(T), NUM = #count{SEC:sec_len(SEC,T,C), C > 0}.
nvpt_ch(NAV,T,1) :- nvpt_s(NAV,SEC0,T),nvpt_s(NAV,SEC1,T+1),SEC0!=SEC1.
nvpt_ch(NAV,T,0) :- nvpt_s(NAV,SEC0,T),nvpt_s(NAV,SEC0,T+1).
sector_diff(T,DIFF) :- t(T), DIFF = #sum{V,NAV: nvpt_ch(NAV,T,V)}.
:~ arrival_delay(ID,DIFF). [DIFF@9,ID]   % SECONDARY WEAK CONSTRAINT
:~ sector_number(T,NUM). [NUM@8,T]       % TERTIARY WEAK CONSTRAINT
:~ sector_diff(T,DIFF). [DIFF@7,T]       % OPTIMIZE FOR LEAST SECTOR CHANGES
\end{lstlisting}

Lastly, we optimize for fairness/stability to the original plan, where we follow the general plan of optimizing for flow first and then airspace.

\begin{lstlisting}
reconfig(NAV,T) :- nvpt_s(NAV,SEC,0), nvpt_s(NAV,SEC2,T), SEC != SEC2, T > 0.
:~ rer(ID). [1@6,ID]             % OPTIMIZE FOR LEAST REROUTES/DELAYS
:~ reconfig(ID,T). [1@5,ID,T]        % OPTIMIZE FOR LEAST RECONFIGS
\end{lstlisting}

\vspace{-0.2cm}
\subsection{Finalizing the Example}
\vspace{-0.2cm}

\begin{example}
We show the results of the running example applied to the encoding above under the configuration $(r,d,s)$:
\begin{lstlisting}
max_t(96). reg_del. reg_dyn_sec_alloc. reg_rer. 
\end{lstlisting}
%
Solving this instance leads to the following results:
$0$ overload, $0$ arrival delay, $294$ number of sectors, $3$ sector-diffs, $1$ rerouting, and $94$ reconfigurations.
The solution of the sectorization is: $\forall t > 0$:
$\textit{sec}(a_0,t) = \{a_0\}$ and
$\textit{sec}(a_1,t) = \{a_1\}$.
For $t \leq 3$ and $t \geq 6$ it is: $\textit{sec}(v_1,t) = \{v_0,v_1,v_2\}$,
whereas for $t \in \{4,5\}$ it is: $\textit{sec}(v_0,t) = \{v_0\}$ and $\textit{sec}(v_1,t) = \{v_1,v_2\}$.
Further, the optimized flight trajectories are: $\mathcal{F}_a = \{(0,((a_0,1),(v_1,4),(a_1,7))), (1,((a_1,1),(v_0,5),(a_0,9)))\}$.
The trajectory-sector mappings are:
$f_0: (a_0,1),(a_0,2),(v_1,3),(v_1,4),$ $(v_1,5),(a_1,6),(a_1,7)$;
$f_1: (a_1,1),$ $(a_1,2),(a_1,3),(v_0,4),$ $(v_0,5),(v_1,6),(v_1,7),(a_0,8),(a_0,9)$.
The (total) number of sectors is also optimal: each airport must be a singleton sector, therefore their contribution is $96 \cdot 2$.
The en-route sector contribution is $96 \cdot 1$ for $v_1$ and $2 \cdot 1$ for $v_0$ (for $t \in \{4,5\}$).
As the lowest possible number of en-route total sectors is $96$, any lower number than $98 = 96 + 2$ would result in merging $v_1$ and $v_0$, which would lead to an overload.
This results in a (total) number of sectors of $290$ (for $t > 0$), which in addition with the initial configuration with $4$ sectors for $t=0$ yields in $294$.
The solution changes the assignment of navpoint $v_0$ three times: once between $t = 0$ and $t = 1$, between $t = 3$ and $t = 4$ and between $t = 5$ and $t = 6$, leading to sector diff of $3$.
Further, flight $1$ is rerouted from flying over navpoint $v_2$ to flying via $v_0$ and finally, $22$ reconfigurations results from reconfigurations of $v_0$ in $94$ timesteps ($t \not \in \{0,4,5\}$), yielding $97-3 = 94$.
\end{example}

\subsection{Correctness}
We briefly argue for correctness for $\textit{ASP} (r,d,s)$.
%
Let $\mathcal{I}$ be an arbitrary instance;
let $\Pi$ be the encoding, then $\Pi'$ is the encoding without soft constraints, but with $c_1 \coloneqq \{\leftarrow \texttt{overload}(X,T,OVER)\}$.
Let $\mathcal{AS}$ be the answersets of $\Pi'$.
For any flight $f \in \mathbf{F}$, let $t^a_f$ be its actual arrival time.
Let $c(\mathcal{I})$ be instance converted to ASP without $max\_t(t)$ (Section~\ref{subsec:asp-encoding-instance});
for answersets $AS \in \mathcal{AS}$ let $c^{-1}(AS)$ be the converted corresponding solution $\mathbf{S}$ where $\mathbf{T}$ is chosen to be $\max_{f \in \mathbf{F}} t^a_f$. 
Let $\mathcal{S}_{= t}$ be the set of valid solutions for $\mathcal{I}$ s.t. $\forall \mathbf{S} \in \mathcal{S}_{= t}:$ $\max\{\mathbf{T}\} = t$,
where we require $\max_{f \in \mathbf{F}}\{t^a_f\} = t$;
further $\mathcal{S}_{\leq t} = \bigcup_{t' \in \{0,\ldots,t\}} \mathcal{S}_{= t'}$.
%
%
\begin{restatable}[$\star$]{lemma}{LemSolutionSpaceEquivalence}
\label{thm:solution-space-equivalence}
\textbf{Solution Space Equivalence}.
For any $t \geq 24 \cdot T_{\textit{gran}}$,
the sets $\mathcal{S}_{AS} \coloneqq \{c^{-1}(AS) \mid AS \in \mathcal{AS}(\Pi' \cup c(\mathcal{I)} \cup \{max\_t(t)\})\}$ and
$\mathcal{S}_{\leq t}$ match.  
%
%
\end{restatable}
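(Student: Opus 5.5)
We prove the two inclusions $\mathcal{S}_{AS} \subseteq \mathcal{S}_{\leq t}$ (soundness) and $\mathcal{S}_{\leq t} \subseteq \mathcal{S}_{AS}$ (completeness). The argument is organised around the modular structure of $\Pi'$ for the configuration $(r,d,s)$ with facts \texttt{reg\_del}, \texttt{reg\_rer}, \texttt{reg\_dyn\_sec\_alloc}. The program splits into a guess part and a check part. The guess part consists of the choice rules for \texttt{a\_dep\_t}, \texttt{rer}, \texttt{nvpt\_flt} and \texttt{nvpt\_s}. The check part consists of stratified definitions (\texttt{flight}, \texttt{sec}, \texttt{a\_arr\_t}, \texttt{nvpt\_seq}, \ldots) together with integrity constraints, including $c_1$. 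Since the negation in the definitional part is stratified (\texttt{n\_nvpt\_seq} before \texttt{nvpt\_seq}; \texttt{rer} only occurs positively after it is guessed), the splitting theorem gives the following. Every answer set is uniquely determined by its guessed atoms, and a set of guessed atoms extends to an answer set iff the unique least model of the definitional part violates no constraint. We therefore work at the level of the guessed atoms and show that they correspond bijectively, via $c^{-1}$ and a converse encoding map $c$, to valid solutions whose maximal arrival time is at most $t$. Here \texttt{t(T)} ranges over $0..t$, and the bound $t \geq 24 \cdot T_{\textit{gran}}$ guarantees that every original timestep of $T$ is contained in the ground domain.

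\emph{Soundness.} Take an answer set $AS$ and read off $\mathbf{SEC}(i,\tau)=\{v \mid \texttt{nvpt\_s}(v,i,\tau)\in AS\}$, each flight's trajectory from \texttt{nvpt\_flt}, and the departure times from \texttt{a\_dep\_t}.
\begin{itemize}
\item \textbf{Sector constraints.} The choice rule for \texttt{nvpt\_s} with cardinality exactly $1$, together with the first constraint block, shows that $\mathbf{SEC}$ is a partition with singleton airports. The fourth constraint ensures that sector identifiers are members of their own sectors.
\item \textbf{Trajectories.} The path constraints (consecutive pairs via \texttt{nvpt\_seq} must be edges with the correct $\Delta t$, no repeated navpoint, the origin fixed, the destination reached last, and no departure before the planned time) show that each flight is a simple timed path from origin to destination, as required by Definition~\ref{def:airplane-model}.
\item \textbf{Demand routing.} The \texttt{flightT}/\texttt{flight} rules implement exactly the half-split occupancy of Definition~\ref{def:atfcm-solution}, with $\textit{over}(f,i,\tau)=1$ iff $\texttt{flight}(f,i,\tau)\in AS$.
\item \textbf{Capacity.} Since $c_1$ forbids \texttt{overload}, every count satisfies $q(i,\tau)\le \textit{cap}(i,\tau)$, where \texttt{sec} computes the max-capacity of Definition~\ref{def:sector-model}.
\end{itemize}
All timestamps lie in $0..t$, so $\max_f t^a_f\le t$ and $c^{-1}(AS)\in\mathcal{S}_{\leq t}$.

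\emph{Completeness.} Given $\mathbf{S}\in\mathcal{S}_{=t'}$ with $t'\le t$, construct the guessed atoms as follows. Set \texttt{a\_dep\_t} to the actual departure time. Set $\texttt{rer}(f)$ iff $r_f=1$. For rerouted flights, set \texttt{nvpt\_flt} to the timed path; for the remaining flights, the facts are forced by the filed plan. Set \texttt{nvpt\_s} according to $\mathbf{SEC}$ at every $\tau\le t$, extending $\mathbf{SEC}$ beyond $\max\mathbf{T}$ by, e.g., its last configuration. We then check that the least model of the definitional part satisfies every constraint. This uses the validity conditions read in reverse: the partition conditions satisfy the sector constraints, the path conditions satisfy \texttt{nvpt\_e}, and the capacity condition means no \texttt{overload} atom is derived. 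Minimality (stability) is immediate, because all non-guessed atoms are derived by the stratified part. Finally, $c^{-1}$ of this answer set returns $\mathbf{S}$, since the encoding is injective on the relevant atoms.

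\emph{Main obstacle.} The hardest step is the exact correspondence between the ASP timing and path machinery and Definitions~\ref{def:airplane-model} and~\ref{def:atfcm-solution}. There are three issues.
\begin{itemize}
\item The successor relation \texttt{nvpt\_seq} is defined through \texttt{n\_nvpt\_seq}, and we must show it really yields consecutive visited points.
\item The integer-division convention in \texttt{flightT} must agree with the $\lfloor\Delta t/2\rfloor$ occupancy rule.
\item The \texttt{rer} flag must match $r_f$: \texttt{rer} is freely choosable even without a change, but the constraints force a non-rerouted flight to follow its filed plan, so an unchanged flight with \texttt{rer} set yields the same solution. Hence the map is surjective, though not injective on answer sets. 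Equality is needed only of the image sets $\mathcal{S}_{AS}$ and $\mathcal{S}_{\leq t}$, so this suffices.
\end{itemize}
I would handle these by a small induction along each trajectory's timestamps. A secondary subtlety is the multi-leg ordering constraint, which must be matched to the implicit requirement that successive legs of an aircraft do not overlap in time.
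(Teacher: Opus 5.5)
Your proposal is correct to the same standard as the paper's proof and follows essentially the same route: two inclusions, with soundness read off from the partition, path, half-split occupancy and capacity constraints together with $c_1$, and completeness from the fact that under $(r,d,s)$ the choice rules range over all departure times, trajectories and sector assignments within the horizon $t$. Your guess-and-check/splitting framing, the explicit answer-set construction, and the remark that \texttt{rer} makes $c^{-1}$ many-to-one only spell out what the paper asserts by citing the encoding sections. Like the paper, you take for granted that the encoding's constraints coincide with Definitions~\ref{def:airplane-model} and~\ref{def:atfcm-solution}, which is not literally true: the airport self-loops \texttt{nvpt\_e(X,X,S,1)} together with the airport exemption in the no-revisit constraint admit trajectories that wait at or revisit an airport, so your ``simple timed path'' bullet needs that caveat, but this imprecision is shared with the paper's own proof rather than being a gap relative to it.
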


\begin{restatable}[$\star$]{theorem}{ThmSolutionExistence}
\textbf{Solution Existence.}
Let $\mathcal{I}$ be an instance and $t \geq 24 \cdot T_{\textit{gran}}$.
$\Pi \cup c(\mathcal{I}) \cup \{max\_t(t)\}$ has a 0-overload answerset iff there exists a solution $\mathbf{S} \in \mathcal{S}_{\leq t}$ for the optimization problem (Definition~\ref{def:atfcm-optimization-problem}).
\end{restatable}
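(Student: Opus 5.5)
The plan is to reduce the theorem to Lemma~\ref{thm:solution-space-equivalence}. The key observation is that $\Pi$ and $\Pi'$ differ only in their weak constraints. $\Pi$ contains the weak constraints at levels $10,\dots,5$. $\Pi'$ drops them and instead adds the hard constraint $c_1 = \{\leftarrow \texttt{overload}(X,T,OVER)\}$.

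\textbf{Step 1: weak constraints do not change answer sets.} Weak constraints only rank answer sets; they play no role in the Gelfond--Lifschitz stability check. Hence the answer sets of $\Pi \cup c(\mathcal{I}) \cup \{max\_t(t)\}$ coincide with those of the same program with all weak constraints removed. Call this program $\Pi''$.

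\textbf{Step 2: 0-overload answer sets of $\Pi''$ are exactly the answer sets of $\Pi'$.} A \emph{0-overload} answer set is one whose level-$10$ cost is $0$. Since every $\texttt{overload}(X,T,OVER)$ atom carries $OVER = LOAD - C > 0$, cost $0$ is equivalent to containing no $\texttt{overload}$ atom at all.

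Adding a constraint $\leftarrow B$ to a program filters its answer sets: it removes exactly those answer sets satisfying $B$ and creates no new ones. This is the standard property of integrity constraints. Therefore $\mathcal{AS}(\Pi' \cup c(\mathcal{I}) \cup \{max\_t(t)\})$ is precisely the set of 0-overload answer sets of $\Pi \cup c(\mathcal{I}) \cup \{max\_t(t)\}$.

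\textbf{Step 3: apply the lemma.} Lemma~\ref{thm:solution-space-equivalence} holds for every $t \geq 24\cdot T_{\textit{gran}}$, and it gives $\{c^{-1}(AS)\} = \mathcal{S}_{\leq t}$.

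For ($\Rightarrow$): given a 0-overload answer set $AS$, Step 2 places $AS$ in $\mathcal{AS}(\Pi' \cup \dots)$. By the lemma, $c^{-1}(AS)$ is then a valid solution in $\mathcal{S}_{\leq t}$.

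For ($\Leftarrow$): given $\mathbf{S} \in \mathcal{S}_{\leq t}$, the lemma yields an $AS$ with $c^{-1}(AS) = \mathbf{S}$. By Step 2, $AS$ is a 0-overload answer set of $\Pi \cup c(\mathcal{I}) \cup \{max\_t(t)\}$.

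Note that a solution ``for the optimization problem'' only requires a \emph{valid} solution, since the lexicographic order of Definition~\ref{def:atfcm-optimization-problem} ranks valid solutions but does not restrict them. In addition, $\mathcal{S}_{\leq t}$ is finite, because $\mathbf{T}$, $V$ and the set of flights are all finite. Hence if it is nonempty, a $\prec$-minimal element exists. This minimal element corresponds to an optimal answer set, because the levels $9,\dots,5$ of the weak constraints compute exactly the tuple $(\ATFCMoDELAY, \dots, \ATFCMoRECONFIG)$. That correspondence is only needed if one reads the theorem as asking for an optimal solution.

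\textbf{Main obstacle.} The one delicate point is the claim that validity, in particular the capacity constraint $q(i,t) \leq \textit{cap}(i,t)$, matches exactly the absence of $\texttt{overload}$ atoms. Here $\textit{cap}$ is the max over atomic capacities, computed by $\texttt{sec}(SEC,T,C)$, and $q$ is the count of $\texttt{flight}(ID,X,T)$ atoms.

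This matching is already part of what the lemma asserts, since $c_1$ is built into $\Pi'$. So I would simply invoke the lemma for it, and check only that $\Pi$ contains no other hard constraint absent from $\Pi'$.
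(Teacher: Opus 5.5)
Your proposal is correct and follows essentially the same route as the paper's proof idea. Both reduce to Lemma~\ref{thm:solution-space-equivalence}, identify the 0-overload answer sets of $\Pi \cup c(\mathcal{I}) \cup \{max\_t(t)\}$ with the answer sets of $\Pi'$ via $c_1$, and note that the weak constraints only order answer sets rather than remove any. You additionally make explicit two steps the paper leaves implicit: every \texttt{overload} atom carries $OVER>0$, so zero level-10 cost means no such atom, and adding an integrity constraint merely filters answer sets.
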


\begin{restatable}[$\star$]{theorem}{ThmOptimality}
\textbf{Optimality.}
Let $\mathcal{I}$ be an arbitrary instance, let $t \geq 24 \cdot T_{\textit{gran}}$, and let one of its best solutions $\mathbf{S}_{opt}$ be found within $\max\{\mathbf{T}\} = t:$ $\mathbf{S}_{opt} \in \mathcal{S}_{\leq t}$.
Then $\Pi \cup c(\mathcal{I}) \cup \{max\_t(t)\}$ is optimal.
\end{restatable}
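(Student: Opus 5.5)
We reduce the optimality statement to Lemma~\ref{thm:solution-space-equivalence} and the Solution Existence theorem. The central claim is that the optimization criteria of $\Pi$ (levels $@10$ down to $@5$), evaluated on an answer set $AS$ of $\Pi' \cup c(\mathcal{I}) \cup \{max\_t(t)\}$, coincide level by level with the objective tuple of $c^{-1}(AS)$ from Definition~\ref{def:atfcm-optimization-problem}. The overload level $@10$ comes first. Once it is shown to be $0$ exactly on valid solutions, lexicographic ASP optimality transfers to lexicographic optimality of the solution tuple.

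\textbf{Step 1: overload level.} The weak constraint at level $10$ has cost $0$ iff no \texttt{overload} atom is derived, which holds iff $c_1$ is satisfied. Hence the $0$-overload answer sets of $\Pi \cup c(\mathcal{I}) \cup \{max\_t(t)\}$ are exactly the answer sets of $\Pi' \cup c(\mathcal{I}) \cup \{max\_t(t)\}$. The weak constraints do not affect answer-set membership, so this identification is immediate. By Lemma~\ref{thm:solution-space-equivalence}, $c^{-1}$ maps these answer sets onto $\mathcal{S}_{\leq t}$. By hypothesis $\mathbf{S}_{opt} \in \mathcal{S}_{\leq t}$, so the Solution Existence theorem guarantees a $0$-overload answer set. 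Consequently every ASP-optimal answer set has overload cost $0$ and lies in the image of $\mathcal{S}_{\leq t}$.

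\textbf{Step 2: per-level cost correspondence.} For each remaining level we check that the weak-constraint sum equals the corresponding component of the tuple for $c^{-1}(AS)$.
\begin{itemize}
\item \emph{Level $9$ (delay).} \texttt{arrival\_delay} is $a\_arr\_t - p\_arr\_t$, summed over flights, which is $\ATFCMoDELAY$.
\item \emph{Level $8$ (active sectors).} \texttt{sector\_number} counts the non-empty sectors per timestep, which is $\ATFCMoNUMSEC$. We must reconcile whether $t=0$ is counted; the running example includes it. The $t=0$ term is fixed by the initial configuration, so it is a constant offset and does not change the ordering.
\item \emph{Level $7$ (sector changes).} \texttt{nvpt\_ch} and \texttt{sector\_diff} count vertex reassignments between $t$ and $t+1$, which is $\ATFCMoSECCHANGES$ after an index shift.
\item \emph{Level $6$ (reroutes/delays).} \texttt{rer} counts $r_f$. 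Minimality ensures that \texttt{rer} is guessed true only when it is needed. Any surplus \texttt{rer} atom only increases the cost, so optimal answer sets satisfy $\texttt{rer}(f) \Leftrightarrow r_f = 1$.
\item \emph{Level $5$ (reconfigurations).} \texttt{reconfig} is $\ATFCMoRECONFIG$, since the $t=0$ term is always $0$.
\end{itemize}
Given these identities, suppose $AS$ is ASP-optimal but $c^{-1}(AS) \succ \mathbf{S}$ for some $\mathbf{S} \in \mathcal{S}_{\leq t}$. The Lemma gives an $AS'$ with $c^{-1}(AS') = \mathbf{S}$ and a lexicographically smaller cost vector, a contradiction. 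Conversely, an ASP-optimal cost equals the cost of $\mathbf{S}_{opt}$. Because $\mathbf{S}_{opt}$ is globally best and lies in $\mathcal{S}_{\leq t}$, the optimum found by $\Pi$ is globally optimal, not merely optimal among solutions with horizon $\leq t$.

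\textbf{Main obstacle.} The hardest part is Step 2's exact correspondence, where cost may depend on representation rather than on the solution itself.
\begin{itemize}
\item The \texttt{rer} choice is free, as discussed at level $6$.
\item $c^{-1}$ is many-to-one, so answer sets with differing costs can map to the same $\mathbf{S}$.
\item The horizon $\mathbf{T}$ affects level $8$, since answer sets count sectors up to $max\_t$ while $\mathbf{T}$ ends at the last arrival.
\end{itemize}
To handle this, we first show that every solution has some answer-set preimage of cost exactly equal to its objective tuple, namely the preimage with minimal \texttt{rer} atoms and sector assignments frozen after arrival. We then show that no preimage has a smaller cost. Together these make the cost of a solution well defined at the optimum. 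The $t$-dependent constant offsets are identical for all answer sets with the same $t$, so they do not affect the comparison.
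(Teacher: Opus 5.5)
Your skeleton is the paper's. Lemma~\ref{thm:solution-space-equivalence} supplies an answer set for $\mathbf{S}_{opt}$. The weak constraints at levels $10,\dots,5$ are taken to realise the order of Definition~\ref{def:atfcm-optimization-problem}. An ASP-optimal answer set strictly worse than $\mathbf{S}_{opt}$ then yields a contradiction. The paper only asserts that the weak constraints are ``equivalent'' to the objective tuple; you try to check this level by level. Your Step~1, the delay and reroute levels, the index shift at level~7 and the $t=0$ term at level~8 are all fine. However, the step you flag as the main obstacle is resolved incorrectly: the horizon offsets are \emph{not} identical for all answer sets with the same $t$.

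The level-8 weak constraint sums \texttt{sector\_number} over every timestep $0,\dots,t$. By contrast, $K_{\#s}(\mathbf{S})$ only runs up to $\tau(\mathbf{S})=\max\mathbf{T}$, the last-arrival horizon fixed by $c^{-1}$ and by $\mathcal{S}_{=t'}$. Let $m$ be the least number of non-empty sectors a timestep admits (under $s$, the number of airports plus one) and $c_0$ the constant $t=0$ term. The cheapest preimage of $\mathbf{S}$ then has level-8 cost $c_0+K_{\#s}(\mathbf{S})+(t-\tau(\mathbf{S}))\cdot m$. Your ``frozen after arrival'' preimage instead pays $(t-\tau(\mathbf{S}))$ times the sector count at $\tau(\mathbf{S})$. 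Either way, the offset depends on the solution. For example, suppose two valid solutions have equal total delay, $\tau(\mathbf{S}_2)=\tau(\mathbf{S}_1)+2$, and $K_{\#s}(\mathbf{S}_1)<K_{\#s}(\mathbf{S}_2)<K_{\#s}(\mathbf{S}_1)+2m$. This happens, e.g., if $\mathbf{S}_1$ exceeds the minimal sector count by one at a single timestep and $\mathbf{S}_2$ never does. Then Definition~\ref{def:atfcm-optimization-problem} prefers $\mathbf{S}_1$, while the encoding prefers $\mathbf{S}_2$.

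Levels~7 and~5 acquire similar $\tau$-dependent terms, because minimising the sector count after the last arrival forces tail reassignments and reconfigurations. Hence the identification of cost vectors with objective tuples, on which your final contradiction rests, breaks down as soon as $\mathcal{S}_{\leq t}$ contains solutions with different horizons $\tau$. That is exactly the situation $t>24\cdot T_{\textit{gran}}$ created by the pipeline's horizon expansion. To repair this, you would need one of two changes:
\begin{itemize}
\item evaluate all objectives on the common horizon $\{0,\dots,t\}$, padding each solution with idle steps in a minimal configuration so the offset becomes genuinely uniform; or
\item restrict the claim to competitors that share a single horizon.
\end{itemize}
As the definitions stand, the equivalence that the paper asserts and your proof needs does not hold in general.
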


\vspace{-0.2cm}
\section{Pipeline Integration}
\label{sec:data}
\vspace{-0.2cm}

\begin{figure}[t]
    \centering
    \includegraphics[width=12cm]{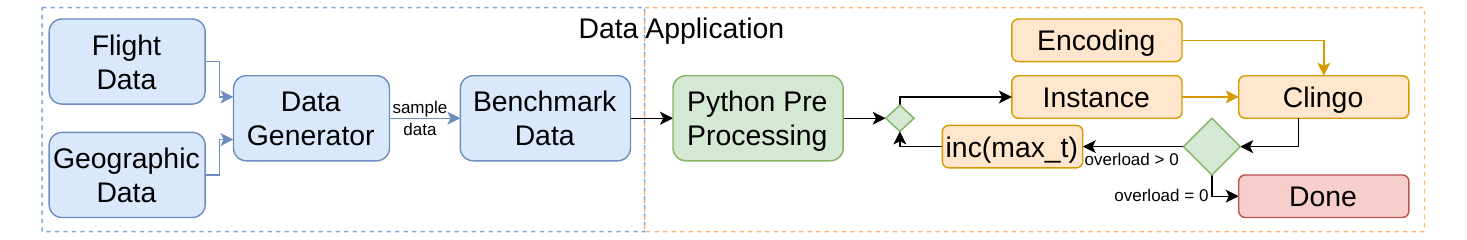}
    \caption{Pipeline displaying the entire process, from data generation (left), over Python preprocessing to solving with ASP (right).}
    \label{fig:pipeline}
    \vspace{-0.5cm}
\end{figure}

In Figure~\ref{fig:pipeline} we show the integration of the encoding into the overall pipeline.
Based on actual data, we developed an open source data generator, which enables us to sample data from different geographic regions and with an arbitrary number of flights.
This benchmark data is then pre-processed in Python, which again starts the ASP solving loop:
we start with a fixed maximum time (\texttt{max\_t(t)}), which is iteratively expanded until a solution with $0$ overload is found\footnote{See supplementary material for details.}.

\vspace{-0.2cm}
\subsection{An Open Source Data Generator}
\vspace{-0.2cm}

Data in ATM research is often closed source, due to privacy concerns or the fear of competitive disadvantage.
Although these concerns are understandable from an economic perspective, they are disadvantageous for science --- which leads to two common issues:
(a) published research cannot be reproduced, thereby manifesting itself in a so-called reproducibility crisis;
(b) researchers relying on open data are sometimes forced to use datasets that are ill-suited to the task.
In order to circumvent this problem, movements have been initiated, such as the formation of the Open Science Alliance for ATM research~\cite{bolic_roadmap_2024}.
However, open data for ATFCM research is still scarce.
This paper takes a pro-open-data stance, by creating a data generator built from publicly available open data.
Our data generator is available on GitHub\footnote{GitHub Link: \url{https://github.com/alexl4123/ASPaeroFlow-DataGenerator}} 
and we published the dataset used in the experiments on Zenodo\footnote{Zenodo DOI: \url{https://doi.org/10.5281/zenodo.20782934}}.
The \emph{Data Generator} executes three sequential internal phases to construct problem instances:
\begin{itemize}
\item \textbf{Graph Generation:} Constructs a configurable grid-graph of navpoints augmented with real-world airport coordinates using the public \emph{OurAirports} dataset.
\item \textbf{Flight Pair Generation:} Employs a stochastic point process~\cite{gerstner2014neuronal} to sample flight operations.
The underlying \emph{Poisson model} is parameterized by historical departure and arrival intensities derived from the \emph{OpenSky Network} COVID-19 surveillance dataset~\cite{strohmeier_crowdsourced_2021}.
\item \textbf{Trajectory \& Sector Generation:} Computes baseline trajectories via \emph{shortest-path routing} for origin-destination pairs.
Initial airspace configurations are partitioned using a \emph{Breadth-First Search (BFS)} algorithm to guarantee sectors are connected or geometrically convex.
\end{itemize}

\vspace{-0.2cm}
\section{Experiments}
\label{sec:exps}
\vspace{-0.2cm}

We conduct experiments on $5$ graph scenarios, as shown in Table~\ref{tab:scenario-description}.
Thereby, we enable a scaling study from  16 to 117 vertices.
Further, we range flights from 10 to 100, with steps of $10$.
For each such combination of geographic and flight scenario, we generated 4 instances with different seeds.
The size of these instances was chosen to be able to conduct experiments with exact solvers.
Industry-sized instances largely depend upon the selected area of investigation and range from several hundreds of flights and sectors up to $30,000$ flights, $\approx 10^3$ sectors, and $\approx 10^5$ navpoints for European airspace.
However, SOTA research using exact MIP models considers smaller instances which go up to approximately $\approx 3,000$ flights with bounded choices, while considering static sector configurations~\cite{garcia-heredia_combinatorial_2019}.
By enabling more degrees of freedom and dynamic alternative rerouting/delaying choices,
we cannot expect to solve instances with thousands of flights.

\begin{table}[t]
  \centering
  \tiny
  \setlength{\tabcolsep}{5.5pt} 
  \renewcommand{\arraystretch}{0.95} 
    \begin{tabular}{lrrrrrrrr}
      \toprule
      Instance & Region & En-route & Airports & Total & Capacity & $T_{\textit{gran}}$ & Sec.-Num\\
      \midrule
      EA-3x3 & East-Asia & 9 & 7 & 16 & 1 & 1 & 10 \\
      CE-5x5 & Europe & 25 & 7 & 32 & 1 & 1 & 12 \\
      IND-4x10 & India & 40 & 8 & 48 & 1 & 1 & 16 \\
      USA-7x7 & USA & 49 & 8 & 57  & 1 & 1 & 16 \\
      EUR-10x10 & Europe & 100 & 17  & 117 & 1 & 1 & 27 \\
      \bottomrule
    \end{tabular}
  \caption{
  Scenario details for the navpoint graphs.
  Besides the instance and region, we show \emph{En-route} vertices, \emph{Airport} vertices, \emph{Total} vertices,
  capacity per timestep, timestep granularity ($T_{\textit{gran}}$), and the number of the initial sectors at $t=0$.
  }
  \label{tab:scenario-description}
  \vspace{-0.7cm}
\end{table}

\smallskip
    \noindent
    \textbf{Variants}.
    We consider $27$ variants of the ASP encoding, where each variant is defined by the previously defined 3-tuple $(R,D,S)$.
    Our model implements a heuristic proxy for the CASA algorithm, iteratively assigning ground delays to aircraft to resolve sector capacity constraints.
    We denote the baseline solution as INITIAL, which are the values of the instance without any actions taken.
    Further, we simulate a SOTA dynamic MIP algorithm, that acts on delaying and rerouting aircraft, with bounded delay and reroute choices.

\smallskip
    \noindent
    \textbf{Technical Setup}.
    Experiments were conducted on a cluster
    with 2x Intel Xeon Silver 4314 CPUs, 512GB RAM, and Ubuntu 22.04 (Kernel 5.15.0-131-generic).
    The compute budget is limited by a TIMEOUT of $1800$s and a MEMOUT of $35$GB.
    We use Python~3.13.7 with Clingo (5.8.0) API, which preprocesses and converts the input to an ASP instance.
    For MIP we use Gurobi (build v13.0.0rc1).


\begin{table}[t]
\centering
\scriptsize
\resizebox{\textwidth}{!}{%
\begin{tabular}{lrrrrrrrrr}
\toprule
Variant & Overload [\#] & Arrival Delay [h] & Sector-Number [\#] & Sector-Diff [\#] & Reroute [\#] & Reconfig [\#] & Execution-Time [s] & Grounding-Time [s] & RAM-Usage [MB] \\
\midrule
CASA & {\color{blue}$0.00 \pm 0.00$} & {\color{red}$2353.12 \pm 185.87$} & $1909.16 \pm 83.14$ & $0.00 \pm 0.00$ & $45.31 \pm 1.99$ & $0.00 \pm 0.00$ & $7.09 \pm 0.16$ & -- & $214.59 \pm 0.79$ \\
ASP ($r_p, d, s_p$) & {\color{red}$23.27 \pm 2.98$} & {\color{blue}$211.44 \pm 15.19$} & $1048.75 \pm 50.01$ & $434.03 \pm 26.29$ & $46.59 \pm 2.13$ & $858.01 \pm 49.01$ & $1800.20 \pm 0.00$ & $2.10 \pm 0.10$ & $487.27 \pm 13.72$ \\
ASP ($r_p, d_p, s_p$) & $29.55 \pm 3.78$ & $82.01 \pm 5.21$ & $1041.72 \pm 49.92$ & $441.54 \pm 25.77$ & $44.65 \pm 2.09$ & $838.51 \pm 48.10$ & $1800.19 \pm 0.00$ & $0.44 \pm 0.02$ & $291.44 \pm 3.82$ \\
ASP ($\neg r, d, s_p$) & $30.11 \pm 3.33$ & $244.55 \pm 14.87$ & $1046.53 \pm 49.98$ & $451.13 \pm 25.80$ & $46.95 \pm 2.13$ & $856.70 \pm 48.86$ & $1800.21 \pm 0.00$ & $1.54 \pm 0.07$ & $424.18 \pm 9.76$ \\
ASP ($r_p, d, \neg s$) & $32.25 \pm 3.51$ & $313.11 \pm 15.99$ & $448.44 \pm 12.56$ & $0.00 \pm 0.00$ & $48.99 \pm 2.15$ & $0.00 \pm 0.00$ & $1510.90 \pm 45.87$ & $1.63 \pm 0.08$ & $362.03 \pm 10.12$ \\
ASP ($\neg r, d, \neg s$) & $34.97 \pm 3.53$ & $327.61 \pm 16.14$ & $448.29 \pm 12.62$ & $0.00 \pm 0.00$ & $49.57 \pm 2.14$ & $0.00 \pm 0.00$ & $1537.20 \pm 43.97$ & $1.14 \pm 0.06$ & $313.56 \pm 7.59$ \\
ASP ($\neg r, d_p, s_p$) & $40.46 \pm 4.12$ & $94.37 \pm 5.01$ & $1043.82 \pm 51.06$ & $445.25 \pm 24.93$ & $44.53 \pm 2.02$ & $840.92 \pm 48.76$ & $1800.20 \pm 0.00$ & $0.38 \pm 0.02$ & $280.62 \pm 3.33$ \\
ASP ($r_p, \neg d, s_p$) & $45.58 \pm 3.96$ & $0.00 \pm 0.00$ & $986.67 \pm 46.46$ & $479.08 \pm 28.64$ & $17.84 \pm 0.96$ & $784.60 \pm 44.74$ & $1800.18 \pm 0.00$ & $0.27 \pm 0.02$ & $261.24 \pm 3.01$ \\
ASP ($r_p, d, s$) & $47.44 \pm 3.98$ & $273.53 \pm 15.31$ & $391.00 \pm 10.57$ & $693.67 \pm 41.36$ & $51.67 \pm 2.06$ & $983.91 \pm 56.89$ & $1800.24 \pm 0.00$ & $6.10 \pm 0.40$ & $965.09 \pm 45.63$ \\
MIP & $48.67 \pm 4.43$ & $128.52 \pm 6.02$ & $3619.49 \pm 112.30$ & $0.00 \pm 0.00$ & $30.35 \pm 1.08$ & $0.00 \pm 0.00$ & $1229.52 \pm 55.54$ & -- & $269.51 \pm 2.22$ \\
ASP ($\neg r, d, s$) & $52.25 \pm 4.07$ & $297.90 \pm 16.15$ & $387.28 \pm 10.83$ & $683.91 \pm 39.96$ & $52.03 \pm 2.17$ & $1000.03 \pm 56.25$ & $1800.25 \pm 0.00$ & $5.14 \pm 0.34$ & $860.20 \pm 38.15$ \\
ASP ($r_p, d_p, s$) & $56.05 \pm 4.69$ & $110.06 \pm 5.57$ & $398.79 \pm 11.02$ & $698.25 \pm 40.38$ & $52.17 \pm 2.13$ & $985.38 \pm 56.56$ & $1800.22 \pm 0.00$ & $3.44 \pm 0.31$ & $631.39 \pm 30.19$ \\
ASP ($\neg r, \neg d, s_p$) & $63.02 \pm 4.32$ & $0.00 \pm 0.00$ & $944.65 \pm 44.78$ & $448.78 \pm 30.93$ & $0.00 \pm 0.00$ & $708.00 \pm 44.16$ & $1800.17 \pm 0.00$ & $0.24 \pm 0.01$ & $248.93 \pm 3.42$ \\
ASP ($\neg r, d_p, s$) & $65.50 \pm 4.85$ & $121.61 \pm 5.72$ & $392.37 \pm 10.90$ & $672.52 \pm 39.77$ & $51.08 \pm 2.14$ & $986.70 \pm 56.45$ & $1800.22 \pm 0.00$ & $3.34 \pm 0.30$ & $626.46 \pm 30.18$ \\
ASP ($r_p, \neg d, s$) & $66.90 \pm 4.62$ & $0.00 \pm 0.00$ & $381.69 \pm 10.88$ & $665.29 \pm 38.59$ & $26.53 \pm 1.38$ & $1010.56 \pm 57.23$ & $1794.55 \pm 5.67$ & $3.15 \pm 0.30$ & $589.37 \pm 29.12$ \\
ASP ($r_p, d_p, \neg s$) & $68.72 \pm 4.74$ & $94.05 \pm 5.25$ & $448.44 \pm 12.56$ & $0.00 \pm 0.00$ & $40.27 \pm 2.18$ & $0.00 \pm 0.00$ & $1398.07 \pm 52.50$ & $0.16 \pm 0.01$ & $202.86 \pm 2.03$ \\
ASP ($r, \neg d, s_p$) & $68.98 \pm 5.33$ & $21.90 \pm 3.56$ & $774.60 \pm 30.21$ & $223.72 \pm 21.75$ & $12.89 \pm 1.30$ & $619.86 \pm 32.96$ & $1795.13 \pm 5.33$ & $64.05 \pm 6.61$ & $5511.59 \pm 527.30$ \\
ASP ($\neg r, d_p, \neg s$) & $74.80 \pm 4.94$ & $84.88 \pm 4.28$ & $448.44 \pm 12.56$ & $0.00 \pm 0.00$ & $35.99 \pm 1.90$ & $0.00 \pm 0.00$ & $1437.50 \pm 50.67$ & $0.13 \pm 0.00$ & $196.70 \pm 1.82$ \\
ASP ($\neg r, \neg d, s$) & $80.83 \pm 4.81$ & $0.00 \pm 0.00$ & $386.35 \pm 12.10$ & $528.17 \pm 29.14$ & $0.00 \pm 0.00$ & $856.79 \pm 43.54$ & $1800.21 \pm 0.00$ & $3.10 \pm 0.29$ & $566.61 \pm 26.05$ \\
ASP ($r, \neg d, \neg s$) & $84.84 \pm 5.48$ & $38.55 \pm 4.37$ & $448.44 \pm 12.56$ & $0.00 \pm 0.00$ & $13.57 \pm 1.38$ & $0.00 \pm 0.00$ & $1489.11 \pm 46.79$ & $62.19 \pm 6.55$ & $4960.60 \pm 487.43$ \\
ASP ($r_p, \neg d, \neg s$) & $84.90 \pm 4.94$ & $0.00 \pm 0.00$ & $448.44 \pm 12.56$ & $0.00 \pm 0.00$ & $8.23 \pm 0.57$ & $0.00 \pm 0.00$ & $726.02 \pm 60.64$ & $0.05 \pm 0.00$ & $161.50 \pm 1.06$ \\
ASP ($r, d, s_p$) & $87.90 \pm 7.27$ & $173.85 \pm 12.08$ & $777.90 \pm 36.72$ & $396.43 \pm 28.12$ & $44.52 \pm 2.34$ & $590.65 \pm 38.20$ & $1795.14 \pm 5.35$ & $30.92 \pm 2.85$ & $5779.65 \pm 540.39$ \\
Initial & $92.39 \pm 5.07$ & $0.00 \pm 0.00$ & $448.44 \pm 12.56$ & $0.00 \pm 0.00$ & $0.00 \pm 0.00$ & $0.00 \pm 0.00$ & $0.00 \pm 0.00$ & -- & $0.00 \pm 0.00$ \\
ASP ($\neg r, \neg d, \neg s$) & $92.39 \pm 5.07$ & $0.00 \pm 0.00$ & $448.44 \pm 12.56$ & $0.00 \pm 0.00$ & $0.00 \pm 0.00$ & $0.00 \pm 0.00$ & $6.01 \pm 0.67$ & $0.03 \pm 0.00$ & $144.03 \pm 0.24$ \\
ASP ($r, d_p, s_p$) & $96.32 \pm 7.89$ & $155.79 \pm 10.14$ & $781.05 \pm 33.76$ & $397.17 \pm 26.98$ & $47.29 \pm 2.35$ & $598.31 \pm 35.86$ & $1795.12 \pm 5.36$ & $33.62 \pm 2.91$ & $5827.82 \pm 551.50$ \\
ASP ($r, \neg d, s$) & $103.84 \pm 6.43$ & $37.56 \pm 4.38$ & $346.33 \pm 7.65$ & $359.92 \pm 26.70$ & $16.07 \pm 1.54$ & $864.26 \pm 56.30$ & $1762.82 \pm 14.07$ & $64.39 \pm 6.31$ & $6802.62 \pm 627.52$ \\
ASP ($r, d, s$) & $112.29 \pm 8.93$ & $180.80 \pm 11.80$ & $334.50 \pm 8.72$ & $512.28 \pm 35.30$ & $42.72 \pm 2.13$ & $665.04 \pm 47.28$ & $1762.68 \pm 14.13$ & $27.42 \pm 2.30$ & $6919.20 \pm 633.03$ \\
ASP ($r, d_p, \neg s$) & $124.30 \pm 8.30$ & $158.29 \pm 8.21$ & $438.62 \pm 12.08$ & $0.00 \pm 0.00$ & $50.45 \pm 2.21$ & $0.00 \pm 0.00$ & $1515.45 \pm 45.23$ & $54.26 \pm 5.58$ & $5041.34 \pm 493.87$ \\
ASP ($r, d, \neg s$) & $126.64 \pm 8.58$ & $209.50 \pm 12.51$ & $431.47 \pm 11.68$ & $0.00 \pm 0.00$ & $49.16 \pm 2.19$ & $0.00 \pm 0.00$ & $1528.53 \pm 44.31$ & $48.19 \pm 4.73$ & $5056.61 \pm 493.95$ \\
ASP ($r, d_p, s$) & $132.92 \pm 8.79$ & $151.88 \pm 8.92$ & $354.54 \pm 8.27$ & $522.28 \pm 39.89$ & $43.37 \pm 2.24$ & $794.71 \pm 53.09$ & $1762.79 \pm 14.08$ & $55.65 \pm 6.18$ & $6910.15 \pm 632.98$ \\
\bottomrule
\end{tabular}
}
\caption{Results aggregated by variant (across all scenarios and instances). $\pm$ denotes the SEM. Variants are sorted lexicographically.}
\vspace{-0.8cm}
\label{tab:summary-by-variant-main}
\end{table}

\begin{table}[t]
\centering
\scriptsize
\resizebox{\textwidth}{!}{%
\begin{tabular}{lrrrrrrrrr}
\toprule
Option & Overload [\#] & Arrival Delay [h] & Sector-Number [\#] & Sector-Diff [\#] & Reroute [\#] & Reconfig [\#] & Execution-Time [s] & Grounding-Time [s] & RAM-Usage [MB] \\
\midrule
$r$ & $103.89 \pm 2.53$ & $121.59 \pm 3.37$ & $520.51 \pm 8.27$ & $253.88 \pm 9.34$ & $34.82 \pm 0.76$ & $445.11 \pm 14.51$ & $1689.64 \pm 9.67$ & $50.09 \pm 1.82$ & $5867.73 \pm 186.23$ \\
$r_p$ & $50.51 \pm 1.46$ & $120.53 \pm 4.12$ & $621.68 \pm 12.00$ & $378.94 \pm 11.45$ & $37.44 \pm 0.71$ & $606.55 \pm 17.52$ & $1603.40 \pm 13.05$ & $1.93 \pm 0.08$ & $439.13 \pm 9.16$ \\
$\neg r$ & $59.38 \pm 1.53$ & $129.99 \pm 4.28$ & $616.33 \pm 11.93$ & $359.06 \pm 10.81$ & $31.12 \pm 0.78$ & $583.56 \pm 16.80$ & $1531.33 \pm 15.05$ & $1.67 \pm 0.07$ & $406.81 \pm 8.18$ \\
\addlinespace[0.35em]
\midrule
\addlinespace[0.15em]
d & $58.36 \pm 1.95$ & $251.73 \pm 5.11$ & $595.19 \pm 11.62$ & $348.76 \pm 11.15$ & $48.25 \pm 0.72$ & $549.46 \pm 16.92$ & $1703.93 \pm 9.25$ & $12.86 \pm 0.74$ & $2351.98 \pm 123.41$ \\
$d_p$ & $75.33 \pm 2.12$ & $115.81 \pm 2.29$ & $594.91 \pm 11.35$ & $351.10 \pm 11.13$ & $45.52 \pm 0.72$ & $558.44 \pm 16.88$ & $1678.86 \pm 10.41$ & $15.95 \pm 1.07$ & $2223.20 \pm 124.97$ \\
$\neg d$ & $76.71 \pm 1.72$ & $10.79 \pm 0.87$ & $574.84 \pm 10.25$ & $300.16 \pm 9.86$ & $10.54 \pm 0.39$ & $536.65 \pm 15.80$ & $1441.58 \pm 16.73$ & $21.76 \pm 1.42$ & $2138.50 \pm 122.31$ \\
\addlinespace[0.35em]
\midrule
\addlinespace[0.15em]
s & $78.01 \pm 2.01$ & $129.04 \pm 4.00$ & $376.34 \pm 3.49$ & $597.17 \pm 12.71$ & $37.14 \pm 0.76$ & $913.98 \pm 18.29$ & $1787.11 \pm 2.81$ & $18.17 \pm 1.12$ & $2763.45 \pm 139.60$ \\
$s_p$ & $52.56 \pm 1.71$ & $107.35 \pm 3.68$ & $944.18 \pm 15.21$ & $413.70 \pm 9.04$ & $33.51 \pm 0.74$ & $749.38 \pm 14.96$ & $1798.50 \pm 1.03$ & $14.20 \pm 1.00$ & $2123.64 \pm 119.68$ \\
$\neg s$ & $80.09 \pm 2.02$ & $135.68 \pm 4.20$ & $445.54 \pm 4.14$ & $0.00 \pm 0.00$ & $32.77 \pm 0.75$ & $0.00 \pm 0.00$ & $1238.76 \pm 19.35$ & $18.40 \pm 1.24$ & $1826.58 \pm 108.42$ \\
\bottomrule
\end{tabular}
}
\caption{Ablation study aggregated over all scenarios, instances, and ASP variants. $\pm$ shows the standard error of the mean.}
\vspace{-0.5cm}
\label{tab:ablation-aggregated-main}
\end{table}

\smallskip
\noindent
\textbf{Results and Discussion}.
We evaluate the suitability of \emph{ASP} for ATFCM (Tables~\ref{tab:summary-by-variant-main} and~\ref{tab:ablation-aggregated-main}; Figures~\ref{fig:heatmap-indx-4x10} and~\ref{fig:ttf-ttb-scatter}). 
\begin{itemize}
    \item \textbf{Lexicographic Performance:}
    CASA achieves strict feasibility ($0$ overload) but incurs massive arrival delay ($2353.12$ h).
    In contrast, \emph{ASP} ($r_p,d,s_p$) provides a superior operational trade-off, achieving an overload of $23.27$ with $211.44$ h delay.
    Arguably, \emph{ASP} ($r_p, d_p, s_p$) offers an even stronger operational balance; while its overload is slightly higher ($29.55$), its arrival delay ($82.01$ h) is strictly lower than the \emph{MIP} baseline ($48.67$ overload, $128.52$ h delay).
    All highlighted variants significantly outperform the unmodified Initial state ($92.39$ overload).
    \item \textbf{Computational Bottlenecks:}
    For heuristically bounded variants, solving time dominates the computational effort, leading to TIMEOUTs on larger instances.
    However, enabling unrestricted rerouting ($r$) triggers a grounding bottleneck.
    This manifests in drastically larger grounding times and a severe memory explosion (averaging $5867.73$ MB and $50.09 s$ grounding time) compared to partial ($r_p$, $439.13$ MB and $1.93 s$ grounding time) or no rerouting ($\neg r$, $406.81$ MB and $1.67s$ grounding time).
\end{itemize}

These results demonstrate that shifting spatial routing and sectorization from the solver to a heuristic pre-processor is critical to bypass memory and time bottlenecks.
We attribute the performance discrepancy between our exact \emph{ASP} models and reported \emph{MIP} literature to our looser model definitions and the treatment of rerouting alternatives as dynamic rather than static data.

\smallskip
\noindent
\textbf{TTF/TTB Analysis}.
Figure~\ref{fig:ttf-ttb-scatter} visualizes the search dynamics by plotting Time to First Solution (TTF) against Time to Best Solution (TTB).
We quantify the optimization phase using the median time gap $\Delta t = \text{TTB} - \text{TTF}$ and the median percentage reduction in initial overload $o_m$.
\begin{itemize}
    \item \textbf{Search Space Thrashing:} The unbounded variant \emph{ASP} ($r,d,s$) struggles to find initial feasibility and frequently terminates without improving the initial bound (clustering on the diagonal).
    It yields the lowest optimization impact ($\Delta t = 878.74$s, $o_m = 34.10\%$).
    \item \textbf{Fast Feasibility:}
    Tightly constrained variants rapidly establish initial upper bounds, efficiently utilizing the remaining budget to traverse the search space ($\Delta t > 1500$s, $o_m > 76\%$).
    Visual clustering differentiates their behavior: \emph{ASP} ($r_p, d_p, s_p$) strictly clusters in the top-left (extremely fast TTF, sustained late TTB), whereas \emph{ASP} ($\neg r, d, \neg s$) clusters primarily in the center-top and left-mid,
    indicating earlier search space exhaustion.
    \item \textbf{Adequate Convergence:}
    The globally best-performing variant, \emph{ASP} ($r_p, d, s_p$), is bounded in the scatter plot by the extremes of the other methods.
    It requires slightly more time to establish initial feasibility than fully constrained models but sustains a robust optimization phase ($\Delta t = 1644.49$s, $o_m = 74.88\%$). 
\end{itemize}

\noindent
\textbf{Open Challenge:} Frequent timeouts of ASP shows that exact solvers cannot yet prove optimality for large instances within operational time limits.

\begin{figure}[t]
\centering
    \hspace{-4.5em}
    \begin{subfigure}[t]{0.45\linewidth}
        \centering
        \includegraphics[width=5.50cm]{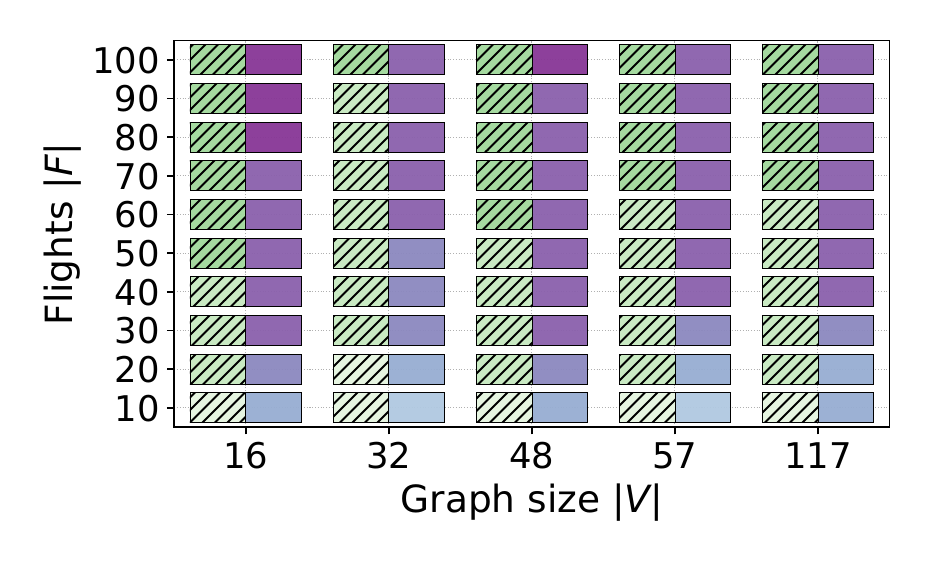}
    \end{subfigure}
    \hspace{-1.3em}
    \begin{subfigure}[t]{0.45\linewidth}
        \centering
        \includegraphics[width=5.50cm]{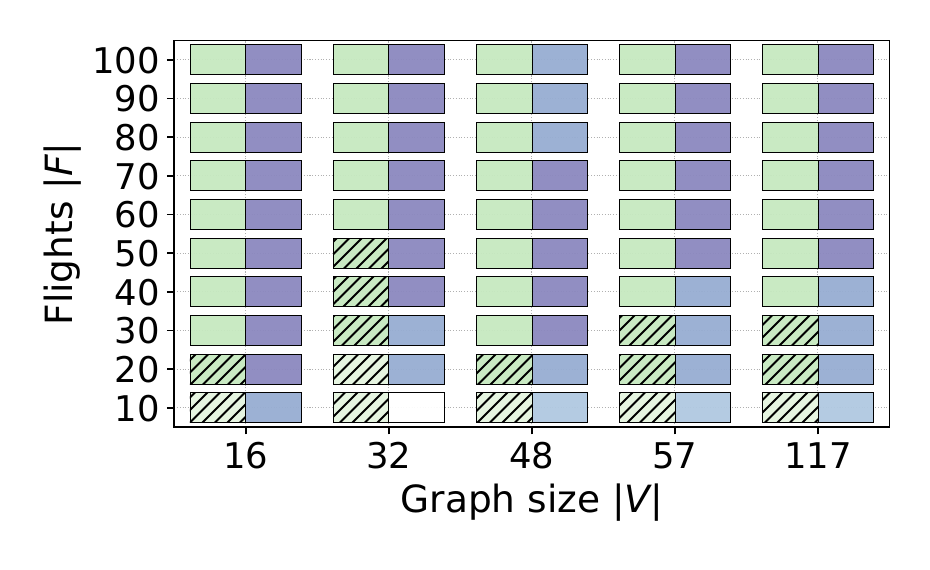}
    \end{subfigure}
    \hspace{-1.3em}
    \begin{subfigure}[t]{0.09\linewidth}
        \centering
        \includegraphics[width=2.40cm]{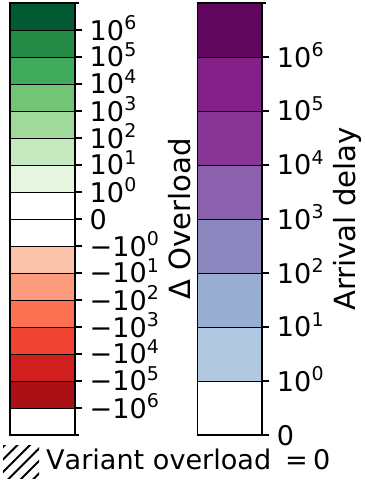}
    \end{subfigure}

    \hspace{-4.5em}
    \begin{subfigure}[t]{0.45\linewidth}
        \centering
        \includegraphics[width=5.50cm]{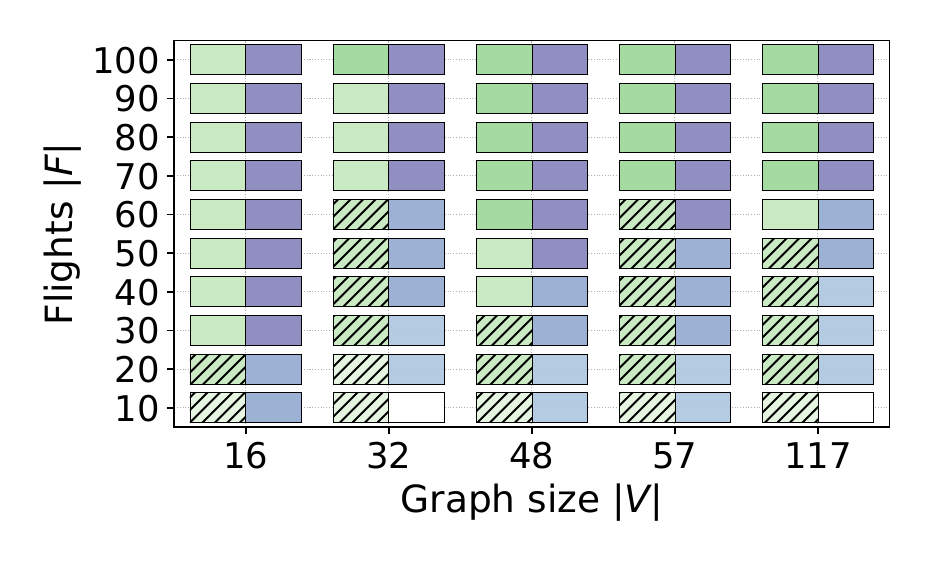}
    \end{subfigure}
    \hspace{-1.3em}
    \begin{subfigure}[t]{0.45\linewidth}
        \centering
        \includegraphics[width=5.50cm]{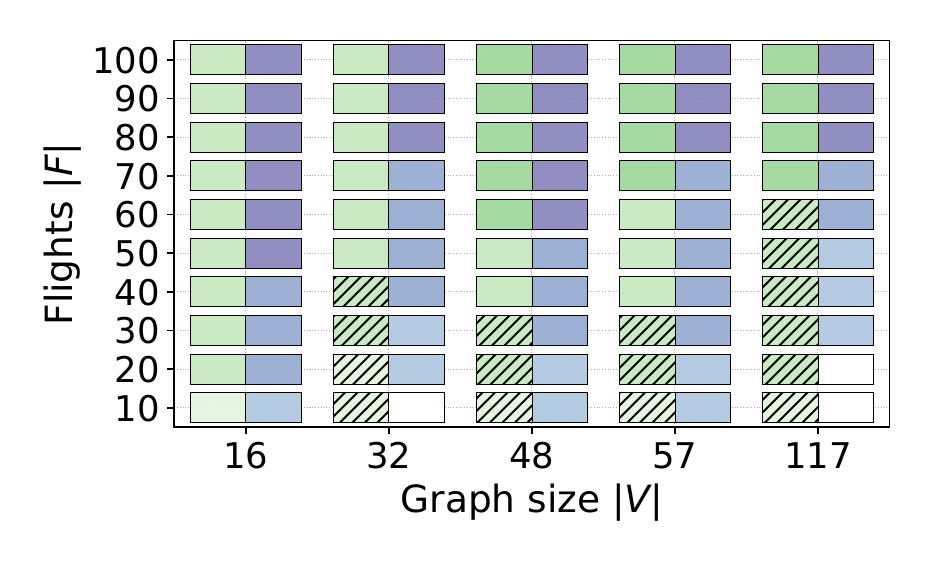}
    \end{subfigure}
    \hspace{-1.3em}
    \begin{subfigure}[t]{0.09\linewidth}
        \centering
        \includegraphics[width=2.40cm]{imgs/03_scaling_plots/20260505__variant_id-03_DELAY__capacity-NA__legend.pdf}
    \end{subfigure}

    \vspace{-.75em}
    \caption{
    Scaling results of CASA (top left), MIP (top right), ASP  ($r_p,d,s_p$) (bottom left), and ASP ($r_p, d_p, s_p$) (bottom right).
    X-axis shows size of the considered geographical instance ($|V|$) and Y-axis shows number of flights.
    Each data point has three instances, where the hatching is shown iff all seeds achieve 0 overload.
    }
    \label{fig:heatmap-indx-4x10}
    \vspace{-0.5cm}
\end{figure}

\section{Conclusion}\label{sec:concl}
We present a unified and extensible formalization of Air Traffic Flow and Capacity Management (ATFCM) that integrates flow management decisions and dynamic airspace configuration (DAC). 
Unlike existing formulations that treat these components in isolation or assume pre-defined sector changes, our model captures delaying, rerouting, and DAC within a single navpoint-graph-based framework.
We develop an ASP encoding to investigate the feasibility of ASP in the ATFCM domain.
Our ASP encoding is able to freely switch between full/partial/disabled actions, which enable the conduction of an ablation study, where we further compare our encoding to a simulation of the practical CASA algorithm and a simulation of a SOTA MIP model.
Our results indicate that ASP with partial sectorization is able to achieve strong results, especially for smaller instance sizes.
When the instance sizes grow larger, the computational budget does not suffice for ASP to finish the computation, which leads to the result that the CASA heuristics is the only tested variant able to solve all benchmarked instances to 0-overload --- however, with the downside of having low efficiency (high delay), in comparison to ASP's solutions.
Our ablation study of the ASP variants indicates, that the size of the search space should be ``just big enough'', as restrictions of the search space to a fixed number of alternatives lead to the best results.
Several directions are promising for future work.
First, the lack of theoretical analysis is a hindrance for abstract modelling, where we plan to analyze which modelling choices drive computational hardness.
Second, we will investigate the combination of ASP into a hybrid heuristic solver for larger instances.
Third, due to the inherent safety criticality of the domain, we will enable operational explainability by generating human-interpretable justifications for delays, reroutes, and reconfigurations.

\begin{figure}[t]
    \centering
    \includegraphics[width=12cm]{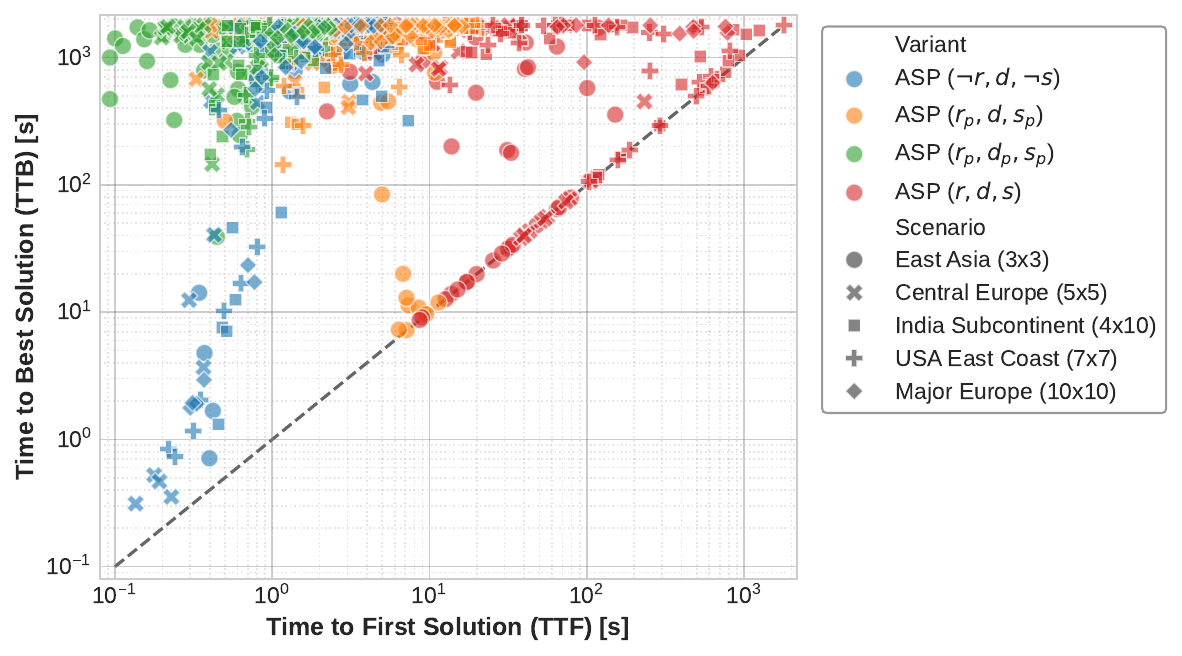}
    \vspace{-0.3cm}
    \caption{Time to First Solution [TTF] vs. Time to Best Solution (TTB) scatter plot comparing four selected ASP variants.}
    \label{fig:ttf-ttb-scatter}
    \vspace{-0.5cm}
\end{figure}

%
%
%
%
{\small
\bibliographystyle{splncs04}
\bibliography{kr-sample-new}
}

\clearpage
\appendix
\include{appendix}

\end{document}

%% file: definitions.tex
\usepackage{amsmath}    
\usepackage{amssymb}    
\usepackage{mathtools}  
\usepackage{microtype}  
\usepackage[inline]{enumitem} 
\usepackage{multirow}   
\usepackage{booktabs}   
\usepackage{tabularx}

\usepackage[dvipsnames,table]{xcolor} 
\usepackage{nag}        
\usepackage{morewrites} 

\usepackage{booktabs}   
\usepackage{graphicx}   
\usepackage{longtable} 

\usepackage{tikz}
\usepackage{calc}
\usetikzlibrary{calc}
\usepackage{svg}

\usepackage{xspace}
\usepackage{color}

\usepackage[utf8]{inputenc}
\usepackage{booktabs}
\usepackage{amssymb}
\usepackage[switch]{lineno}
\usepackage{amsmath}
\usepackage{makecell}
\usepackage{caption}
\usepackage{subcaption}
\usepackage{listings}
\usepackage{tikz}
\usepackage{multirow}

\usepackage{lscape}
\usepackage{longtable}

\usepackage{array}
\usepackage{mathtools}
\usepackage{cancel}

\usepackage{tikz}
\usetikzlibrary{matrix}

\tikzset{ 
	table/.style={
		matrix of nodes,
		row sep=-\pgflinewidth,
		column sep=-\pgflinewidth,
		nodes={rectangle,text width=3.5cm,align=center,
			anchor=center},
		text depth=1.25ex,
		text height=2ex,
		nodes in empty cells
	},
	row 1/.style={nodes={minimum height=3.5em}},
		column 1/.style={nodes={text width=4.5cm}},
}

\usepackage{placeins} 

\newcommand{\ATFCMoDELAY}{\ensuremath{K_d}\xspace}
\newcommand{\ATFCMoNUMSEC}{\ensuremath{K_{\#s}}\xspace}
\newcommand{\ATFCMoSECCHANGES}{\ensuremath{K_{ds}}\xspace}
\newcommand{\ATFCMoREROUTED}{\ensuremath{K_{r}}\xspace}
\newcommand{\ATFCMoRECONFIG}{\ensuremath{K_{c}}\xspace}

\newcommand{\longversion}[1]{}

\newcolumntype{H}{>{\setbox0=\hbox\bgroup}c<{\egroup}@{}}

\allowdisplaybreaks

\newcommand{\vecv}[0]{\mathbf}

\newcounter{myenumctr}

\usetikzlibrary{matrix}

\tikzset{ 
	table/.style={
		row sep=-\pgflinewidth,
		column sep=-\pgflinewidth,
		nodes={rectangle,
			align=center,
				anchor=center},
	},
	row 1/.style={nodes={
		}},
}

\DeclareMathAlphabet\mathbfcal{OMS}{cmsy}{b}{n}

\pgfdeclarelayer{bg}    
\pgfsetlayers{bg,main}

\newtheorem{definition}{Definition}
\newtheorem{observation}{Observation}
\newtheorem{corollary}{Corollary}

\definecolor{backcolor}{rgb}{0.95,0.95,0.95}

\makeatletter
\providecommand{\leftsquigarrow}{%
  \mathrel{\mathpalette\reflect@squig\relax}%
}
\newcommand{\reflect@squig}[2]{%
  \reflectbox{$\m@th#1\rightsquigarrow$}%
}
\makeatother

\lstdefinestyle{mystyle}{
    backgroundcolor=\color{backcolor},
    breaklines=true,
    numbers=left,
    numbersep=5pt,
    basicstyle=\ttfamily\scriptsize,
    captionpos=b,
    literate={:-}{{$\;\leftarrow$}}1
             {!=}{{$\neq$}}1
             {:~}{{$\;\leftsquigarrow$}}1,
    mathescape=true
}

\lstdefinestyle{examplestyle}{
    backgroundcolor=\color{backcolor},
    breaklines=true,
    numbers=none,
    basicstyle=\ttfamily\tiny,
    captionpos=b,
    literate={:-}{{$\;\leftarrow$}}1
             {!=}{{$\neq$}}1,
    mathescape=true,
    aboveskip=2pt,
    belowskip=2pt
}

%% file: appendix.tex
\section{Data Generator: Details}

Let $A = \textit{airport(G)}$ be the airports, then each airport $a \in A$ has for a timestep $t_b \in T$ a sending strength $\lambda_{a,t_b}$.
Let $d$ be a day of a number of specified days $d \in D$, then the actual number of departures is $\#\textit{departures}(a,t,d)$, and the actual number of departures at $a$ with destination $a_d \in A$ is $\#\textit{dest}(a,t,d,a_d)$.
The probabilistic model is created by aggregating the departures into a matrix $\Lambda \in \mathbb{N}^{|A| \times |T|}$, where $\lambda_{a,t} = \Lambda_{[a,t]} = \frac{\sum_{d \in D} \#\textit{departures}(a,t,d)}{|D|}$.
For data generation, we sample the number of departures for an airport $a_o \in A$, and for a timestep $t\in T$, from $\#a_o \sim \text{Pois}(\lambda_{a_o,t})$.
It remains to generate for each departure at airport $a_o$, a destination airport $a_d$:
Let $\Psi \in \mathbb{N}^{|A| \times |A| \times |T|}$,
where $\psi_{a_o,a_d,t} = \Psi_{[a_o,a_d,t]} = \frac{\sum_{d \in D} \#\textit{dest}(a_o,t,d,a_d)}{|D|}$.
Further, let $a_o \in A$ and $t \in T$, then the probability for choosing destination $a_d \in A$ is $P(a_d \mid a_o,t) = \frac{\psi_{a_o,a_d,t} + \alpha}{\sum_{a \in A} \psi_{a_o,a_d,t} + \alpha}$,
where $\alpha \in \mathbb{R}$ is a small constant.
Destinations are drawn as a categorical random variable:
$a_d \sim \textit{Categorical}\left( P(a_d \mid a_o,t)_{a_d \in A} \right)$.

\begin{table*}[t]
\centering
\tiny
\resizebox{\textwidth}{!}{%
\begin{tabular}{lrrrrrrrrrrrr}
\toprule
Scenario & \#I & \multicolumn{2}{c}{Initial} & \multicolumn{2}{c}{CASA} & \multicolumn{2}{c}{MIP} & \multicolumn{2}{c}{$ASP_{\neg r, \neg d, \neg s}$} & \multicolumn{2}{c}{$ASP_{\neg r, \neg d, s_p}$} & Draws \\
 &  & \#W & \#S & \#W & \#S & \#W & \#S & \#W & \#S & \#W & \#S & \#D \\
\midrule
CE-5x5 & 40 & 0 & 1 & \textbf{10} & \textbf{40} & 0 & 23 & 0 & 1 & 0 & 3 & 0 \\
EA-3x3 & 40 & 0 & 0 & \textbf{28} & \textbf{40} & 0 & 11 & 0 & 0 & 0 & 0 & 1 \\
IND-4x10 & 40 & 0 & 0 & \textbf{23} & \textbf{40} & 0 & 10 & 0 & 0 & 0 & 0 & 0 \\
EUR-10x10 & 40 & 0 & 0 & 8 & \textbf{40} & 0 & 13 & 0 & 0 & 0 & 1 & 0 \\
USA-7x7 & 40 & 0 & 0 & \textbf{15} & \textbf{40} & 0 & 12 & 0 & 0 & 0 & 0 & 0 \\
\midrule
Overall & 200 & 0 & 1 & \textbf{84} & \textbf{200} & 0 & 69 & 0 & 1 & 0 & 4 & 1 \\
\addlinespace[0.6em]
Scenario & \#I & \multicolumn{2}{c}{$ASP_{\neg r, \neg d, s}$} & \multicolumn{2}{c}{$ASP_{r_p, \neg d, \neg s}$} & \multicolumn{2}{c}{$ASP_{r_p, \neg d, s_p}$} & \multicolumn{2}{c}{$ASP_{r_p, \neg d, s}$} & \multicolumn{2}{c}{$ASP_{r, \neg d, \neg s}$} & Draws \\
 &  & \#W & \#S & \#W & \#S & \#W & \#S & \#W & \#S & \#W & \#S & \#D \\
\midrule
CE-5x5 & 40 & 0 & 3 & 0 & 2 & 0 & 4 & 2 & 4 & 0 & 8 & 0 \\
EA-3x3 & 40 & 0 & 0 & 0 & 0 & 0 & 0 & 0 & 0 & 0 & 4 & 1 \\
IND-4x10 & 40 & 0 & 0 & 0 & 0 & 0 & 1 & 0 & 1 & 0 & 4 & 0 \\
EUR-10x10 & 40 & 0 & 1 & 0 & 0 & 3 & 6 & 2 & 3 & 0 & 9 & 0 \\
USA-7x7 & 40 & 0 & 0 & 0 & 0 & 0 & 1 & 0 & 1 & 0 & 5 & 0 \\
\midrule
Overall & 200 & 0 & 4 & 0 & 2 & 3 & 12 & 4 & 9 & 0 & 30 & 1 \\
\addlinespace[0.6em]
Scenario & \#I & \multicolumn{2}{c}{$ASP_{r, \neg d, s_p}$} & \multicolumn{2}{c}{$ASP_{r, \neg d, s}$} & \multicolumn{2}{c}{$ASP_{\neg r, d_p, \neg s}$} & \multicolumn{2}{c}{$ASP_{\neg r, d_p, s_p}$} & \multicolumn{2}{c}{$ASP_{\neg r, d_p, s}$} & Draws \\
 &  & \#W & \#S & \#W & \#S & \#W & \#S & \#W & \#S & \#W & \#S & \#D \\
\midrule
CE-5x5 & 40 & 5 & 12 & 1 & 11 & 0 & 10 & 0 & 14 & 1 & 14 & 0 \\
EA-3x3 & 40 & 0 & 4 & 0 & 4 & 0 & 3 & 0 & 3 & 1 & 3 & 1 \\
IND-4x10 & 40 & 0 & 8 & 2 & 7 & 0 & 3 & 0 & 9 & 0 & 8 & 0 \\
EUR-10x10 & 40 & 0 & 11 & 0 & 5 & 0 & 4 & 0 & 21 & 0 & 6 & 0 \\
USA-7x7 & 40 & 2 & 13 & 2 & 8 & 0 & 5 & 0 & 14 & 1 & 11 & 0 \\
\midrule
Overall & 200 & 7 & 48 & 5 & 35 & 0 & 25 & 0 & 61 & 3 & 42 & 1 \\
\addlinespace[0.6em]
Scenario & \#I & \multicolumn{2}{c}{$ASP_{r_p, d_p, \neg s}$} & \multicolumn{2}{c}{$ASP_{r_p, d_p, s_p}$} & \multicolumn{2}{c}{$ASP_{r_p, d_p, s}$} & \multicolumn{2}{c}{$ASP_{r, d_p, \neg s}$} & \multicolumn{2}{c}{$ASP_{r, d_p, s_p}$} & Draws \\
 &  & \#W & \#S & \#W & \#S & \#W & \#S & \#W & \#S & \#W & \#S & \#D \\
\midrule
CE-5x5 & 40 & 0 & 12 & 1 & 22 & 1 & 17 & 0 & 11 & 9 & 25 & 0 \\
EA-3x3 & 40 & 0 & 3 & 0 & 3 & 0 & 2 & 0 & 4 & 1 & 6 & 1 \\
IND-4x10 & 40 & 0 & 4 & 2 & 13 & 6 & 11 & 0 & 5 & 1 & 9 & 0 \\
EUR-10x10 & 40 & 0 & 8 & \textbf{24} & 31 & 0 & 8 & 0 & 9 & 0 & 8 & 0 \\
USA-7x7 & 40 & 0 & 6 & 7 & 19 & 3 & 17 & 0 & 6 & 0 & 10 & 0 \\
\midrule
Overall & 200 & 0 & 33 & 34 & 88 & 10 & 55 & 0 & 35 & 11 & 58 & 1 \\
\addlinespace[0.6em]
Scenario & \#I & \multicolumn{2}{c}{$ASP_{r, d_p, s}$} & \multicolumn{2}{c}{$ASP_{\neg r, d, \neg s}$} & \multicolumn{2}{c}{$ASP_{\neg r, d, s_p}$} & \multicolumn{2}{c}{$ASP_{\neg r, d, s}$} & \multicolumn{2}{c}{$ASP_{r_p, d, \neg s}$} & Draws \\
 &  & \#W & \#S & \#W & \#S & \#W & \#S & \#W & \#S & \#W & \#S & \#D \\
\midrule
CE-5x5 & 40 & 0 & 12 & 0 & 16 & 0 & 25 & 0 & 17 & 0 & 20 & 0 \\
EA-3x3 & 40 & 1 & 4 & 1 & 11 & 2 & 12 & 0 & 8 & 0 & 9 & 1 \\
IND-4x10 & 40 & 0 & 7 & 0 & 10 & 0 & 11 & 0 & 9 & 0 & 10 & 0 \\
EUR-10x10 & 40 & 0 & 3 & 0 & 19 & 0 & 18 & 0 & 8 & 0 & 24 & 0 \\
USA-7x7 & 40 & 1 & 7 & 0 & 16 & 0 & 19 & 0 & 12 & 0 & 20 & 0 \\
\midrule
Overall & 200 & 2 & 33 & 1 & 72 & 2 & 85 & 0 & 54 & 0 & 83 & 1 \\
\addlinespace[0.6em]
Scenario & \#I & \multicolumn{2}{c}{$ASP_{r_p, d, s_p}$} & \multicolumn{2}{c}{$ASP_{r_p, d, s}$} & \multicolumn{2}{c}{$ASP_{r, d, \neg s}$} & \multicolumn{2}{c}{$ASP_{r, d, s_p}$} & \multicolumn{2}{c}{$ASP_{r, d, s}$} & Draws \\
 &  & \#W & \#S & \#W & \#S & \#W & \#S & \#W & \#S & \#W & \#S & \#D \\
\midrule
CE-5x5 & 40 & 5 & 30 & 0 & 21 & 0 & 12 & 2 & 24 & 3 & 12 & 0 \\
EA-3x3 & 40 & 4 & 11 & 0 & 12 & 0 & 4 & 0 & 6 & 1 & 4 & 1 \\
IND-4x10 & 40 & 4 & 17 & 1 & 12 & 0 & 7 & 1 & 9 & 0 & 7 & 0 \\
EUR-10x10 & 40 & 2 & 26 & 1 & 9 & 0 & 9 & 0 & 8 & 0 & 2 & 0 \\
USA-7x7 & 40 & 6 & 25 & 1 & 16 & 0 & 7 & 1 & 10 & 1 & 7 & 0 \\
\midrule
Overall & 200 & 21 & 109 & 3 & 70 & 0 & 39 & 4 & 57 & 5 & 32 & 1 \\
\bottomrule
\end{tabular}
}
\caption{Tournament wins by scenario (\#W --- lexicographic: overload, delay, sectors, reroutes, reconfigs) and number of solved instances (\#S).}
\label{tab:tournament-wins}
\end{table*}



\section{Additional Notes on ATM}\label{sec:appdxmodel}

\textbf{Air Traffic Management} (ATM).
ATM is a broad field which aims at ensuring safe and efficient air traffic at all times.
In Europe, the primary entitiy responsible for ATM is EUROCONTROL,
while in the USA it is the Federal Aviation Administration (FAA).
We introduce ATFCM along a timeline, which is anchored around a \textit{day of operation} (DOP).
At the DOP, ATCs need to separate aircraft -- which is called the tactical phase.
In \textit{en-route} airspace (cruise phase of a flight) any two aircraft must be separated by at least $5 \; nm$ horizontally, or $1000 \; ft$ vertically~\cite{cook_european_2007,ribeiro_review_2020}.
ATCs operate in a \textit{sector}, which is a defined volume of airspace.
Due to the human nature, ATCs can only effectively handle a certain number of flights in a given time period,
which is the primary reason why sectors have \emph{capacity} constraints.
The exact capacity value is dependent on various factors, ranging from the competence of the ATC staff,
over managerial and technical matters such as ATC shift planning,
used automation aids or sector geography, 
to temporary effects, such as thunderstorms or worker union strikes~\cite{cook_european_2007}.
Evidently, when \emph{demand} - the number of aircraft that want to fly through a sector - exceeds capacity, this puts additional workload on ATCs and therefore increases the likelihood of human error,
with potential catastrophic consequences.
Therefore, it is required that demand remains below capacity at all times.

On the other hand, the responsibility of the strategic level is to ensure that sectors are not overloaded - which is also called \emph{Air Traffic Flow (and Capacity) Management} (ATFCM).
Generally, airlines \emph{file} flights months in advance.
Filing a flight effectively means to register a flight that will in the future start at a departure airport and take a certain route to fly to its destination.
This gives air navigation service providers (ANSPs) and EUROCONTROL time to react to potential imbalances.
For example, ANSPs may conduct managerial decisions, like sending teams of ATCs to another facility, where more demand is expected, or on the very far end of the spectrum training more ATC staff.
Closer to the DOP, the so-called network manager of EUROCONTROL is responsible for mitigating remaining imbalances by taking \emph{regulations} to either reduce demand or increase capacity.
Measures for reducing demand include \emph{delaying} or \emph{rerouting} aircraft.
Effective capacity may be increased by DAC~\cite{lui_airspace_2024,criscuolo_enhanced_2024},
which either restructures sectors to enhance flow,
or opens sectors exactly at positions where they are needed the most.

\section{Detailed Model Definition}

\begin{definition}[Navpoint Graph]
Detailed definition of Definition~\ref{def:navpoint-graph}.
    The navpoint graph $G = (V,E)$ is undirected and labeled.  We require that $V = \mathbb{N}^{\leq |V|}$, i.e., each $v$ is an integer and 
    and $V$ has no holes. 
%
    Each $v \in V$ is either a navpoint or an airport, which we test with $v \in \textit{en-route}(G)$ and $v \in \textit{airport}(G)$.
    Distances between two navpoints are defined as the function $d:V^2 \rightarrow \mathbb{Q}$, which we assume to be computed in $\mathcal{O}(1)$.
\end{definition}

\begin{definition}[Time model, extends Definition~\ref{def:time-model}]
    Time is discretized into timesteps $t_i \in T$, with $T_{\textit{gran}}$ timesteps per hour s.t. $T_{\textit{gran}}$ divides 60
    such that the timestep length is defined in minutes by 
    $T_s = \frac{60}{T_{\textit{gran}}}$.
    $T$ is the set of timesteps, which for a $dur \in \mathbb{N}$ is given by $T = \{t \mid t \in \mathbb{N}, 1 \leq t \leq dur \cdot T_{\textit{gran}}\} \cup \{0\}$.
\end{definition}

\begin{definition}[Sector Model, extends Definition~\ref{def:sector-model}]
A \emph{sector} at time $t$ is a set of vertices (navpoints/airports).  
To keep the index set fixed across time, we represent each sector by a distinguished
\emph{sector identifier} $i \in V$. We write $S = V$ for the set of all identifiers.
The sector identified by $i$ at time $t$ is the set $\textit{sec}(i,t) \subseteq V$.
Formally, a \emph{sector configuration} is a function
$\textit{sec}: S \times T \rightarrow 2^{V}$ such that for every $t \in T$:
(i) $\{\textit{sec}(i,t) \mid i \in S,\ \textit{sec}(i,t)\neq\emptyset\}$
forms a partition of $V$; and
(ii) every airport is a singleton sector.
More precisely, let $t \in T$, then for any $i,j \in S$ s.t. $i \not = j$ it must hold $\textit{sec}(i,t) \cap \textit{sec}(j,t) = \emptyset$
    and
    $\bigcup_{i \in S} \textit{sec}(i,t) = V$.
Conversely, $\textit{sec}^{-1}: V \times T \rightarrow S$ denotes the inverse assignment.
The initial sector configuration at timestep $t_s = 0$ is the atomic configuration, with atomic capacities $\textit{cap}(i,0) = c_i$.
\end{definition}

\begin{definition}[Composite Sector Model]
    The capacity of \emph{composite} sectors for $t > 0$ is defined by $\textit{cap}':S \times \mathbf{T} \rightarrow \mathbb{N}$,
    where we let for $i \in S$: $\textit{cap}'(i,t) = \max_{v \in \mathbf{SEC}(i,t)}\{ \textit{cap}(v,0) \}$.
    The capacity $\textit{cap}'$ is measured in $1$-hour timeframes and distributed in the following way.
    For each $1$-hour timeframe, there are $T_{\textit{gran}}$ timesteps, each with capacitiy $\textit{cap}(i,t)$.
    Intuitively, the following function distributes the $1$-hour timeframe capacity among the timesteps, while ensuring dynamic capacity changes.
    Let $\textit{cap}_b(i,t) = \lfloor \frac{\textit{cap}'(i,t)}{T_{\textit{gran}}} \rfloor$ be the base capacity and
    let $\textit{cap}_r(i,t) = \textit{cap}'(i,t) \% T_{\textit{gran}}$ be the remaining capacity,
    which is added to the base capacity in steps $\textit{step}(i,t) = \lfloor \frac{T_{\textit{gran}}}{\textit{cap}_r(i,t)} \rfloor$ - 
    or put otherwise, exactly for the timestep
    $\textit{REM}(i,t) = \{t' \mid 0 \leq i < \textit{cap}_r(i,t), t' = \textit{step}(i,t) \cdot i + T_{\textit{gran}} \cdot \lfloor \frac{t}{T_{\textit{gran}}} \rfloor \}$.
    
    Now we are able to define the capacity per timestep $\textit{cap}(i,t)$ as 
    $\textit{cap}(i,t) = \textit{cap}_b(i,t) + 1|_{t \in \textit{REM}(i,t)}$,
    where $1|_{t \in \textit{REM}(i,t)} = \begin{cases}
        1 \text{ if } t \in \textit{REM}(i,t)\\
        0 \text{ if } t \not \in \textit{REM}(i,t)
    \end{cases}$

    This behavior extends to cases where $|\mathbf{T}| \geq |T|$. 
    To accommodate capacity constraints, we map flight times to fly-over times of sectors.
    The function $\textit{over}: \mathbf{F} \times S \times \mathbf{T} \rightarrow \{0,1\}$
    indicates whether flight $f$ is in a sector $s$ at time $t$.
    We define $\textit{over}$ as follows:
    let $f$ be a flight s.t. $(v_i,t_i),(v_{i+1},t_{i+1})$ is part of the trajectory s.t. $\Delta t = t_{i+1} - t_i$.
    Then, flight $f$ is in time $t \in [t_i,t_i + \lfloor \frac{\Delta t}{2} \rfloor ]$ over sector $\mathbf{SEC}^{-1}(v_i,t) = s_i$, so $\textit{over}(f,s_i,t) = 1$,
    and $f$ is in time $t' \in [t_i + \lfloor \frac{\Delta t}{2} \rfloor + 1, t_{i+1}]$ over sector $\mathbf{SEC}^{-1}(v_{i+1},t') = s_{i+1}$, so $\textit{over}(f,s_{i+1},t') = 1$.

    Finally, for $t \in \mathbf{T}$ and $s \in S$ we interpret $\textit{sec}(s,t)$ on $t > |T|$ as $\textit{sec}(s,t) = \textit{sec}(s,\max\{T\})$;
    and equally proceed for $v \in V$ and $t \in \mathbf{T}$ with $\textit{sec}^{-1}(v,t) = \textit{sec}^{-1}(v,\max\{T\})$.

\end{definition}

\begin{definition}[Aircraft Model, extends Definition~\ref{def:airplane-model}]
    We distinguish between aircraft $a \in A$ and flights $f \in F$.
    Let  $a = (id,v_a,F_a)$ be an airplane,
    $id \in \mathbb{N}$ is the unique airplane's ID,
    $v_a$ is an airplane's constant (assumption) velocity,
    and $F_a \subseteq F$ is the corresponding set of flights.
    A route is a simple path in the navpoint graph, where we denote a path as $p$ and the set of paths as $P$.
    The set of flights $F$ adds an id $id_f$ and timesteps to a route, i.e.,
    $F = \{(id_f,((v_0, t_0),(v_1,t_1), \ldots, (v_n,t_n))) \mid id_f \in \mathbb{N}, p = (v_0, v_1, \ldots, v_n) \in P, \tau = (t_0, t_1, \ldots, t_n) \in T^n, \forall t_i, t_j \in \tau: i < j \rightarrow t_i < t_j\}$.
    For every aircraft $a \in A$ the id $id_f$ of a flight $(id_f,\ldots) \in F_a$ is unique. 
    
    We map travel times of an aircraft to sector fly-over times in the following way:
    Let $a \in A$ be an arbitrary aircraft with velocity $v_a$ - converted to standard time and meters\footnote{For a given aircraft speed in knots $v_{a}^{kt}$, we convert it to standard time $[\frac{m}{\textit{time-granularity}}]$: $v_a = \frac{3600}{T_{\textit{grad}}} \cdot v_f^{kt} \cdot 0.514444 $},
    which flies the flight $f \in F_a$ and starts at an arbitrary time $t$.
    Let $(v_i,t_i),(v_{i+},t_{i+1}) \in f$ s.t. $t_i < t_{i+1}$ and let $d = d(v_i,v_{i+1})$.
    Then the duration in standard time it takes to fly from $v_i$ to $v_{i+1}$ is $\Delta t = \max\{\lceil \frac{d}{v_a} \rceil, 1\}$.
    Let $f^{sec}_{(v_i,t_i),(v_{i+1},t_{i+1})} = \{(s,t+t_i) \mid 0 \leq t \leq \Delta t, t \leq \lfloor \frac{\Delta t}{2} \rfloor, s = \textit{sec}^{-1}(v_i,t+t_i)\}$
    $\cup \{(s',t+t_i) \mid 0 \leq t \leq \Delta t, t > \lfloor \frac{\Delta t}{2} \rfloor, s' = \textit{sec}^{-1}(v_{i+1},t+t_i)\}$
    and for the whole flight $f$ let
    $f^{sec} = \bigcup_{(v_i,t_i),(v_{i+1},t_{i+1}) \in f} f^{sec}_{(v_i,t_i),(v_{i+1},t_{i+1})}$.
    Let $\textit{over}:F \times S \times T \rightarrow \{0,1\}$
    be the binary function defining when a flight $f \in F$ is in a sector $i \in S$.
    I.e., for a flight $f \in F$, time $t \in T$, and sector $i \in S$: $\textit{over}(f,i,t) = 1$ if $(i,t) \in f^{sec}$,
    else $\textit{over}(f,i,t) = 0$.
    These definitions extend to the solution definition $\mathbf{S}$.
\end{definition}

\section{Additional Details of ASP Model}\label{sec:appdxasp}

\subsection{ASP Encoding: Auxiliaries}

We show the omitted auxiliary rules from the main part.
We compute all aircraft speeds (Line~(1)), ensure that the graph is undirected (Line~(2)), and that airports are connected to themselves (Line~(3)).
Further, we create the set of navpoints and flights (Lines~(4)--(6)) and finally,
compute all possible timesteps in Lines~(7)--(10).

\begin{lstlisting}
aircraft_speed(S) :- acft(_,S).
nvpt_e(Y,X,S,D) :- nvpt_e(X,Y,S,D).
nvpt_e(X,X,S,1) :- arpt(X), aircraft_speed(S).
nvpt(X) :- nvpt_e(X,_,_,_).
nvpt(Y) :- nvpt_e(_,Y,_,_).
flt_id(ID) :- nvpt_flt_pln(ID,_,_).
t(T) :- nvpt_flt_pln(_,_,T).             t(0).
t(T+1) :- t(T), max_t(TMAX), T < TMAX.
t(T+1) :- t(T), induced_max_t(TMAX), T < TMAX.
induced_max_t(T) :- T = #max{T': nvpt_flt_pln(_,_,T')}.
\end{lstlisting}

We present the full ASP encoding (without rewriting due to space restrictions):
\begin{lstlisting}
% FACTS:
% navpoint_flight_plan(<ID>,<NAVPOINT-LOCATION>,<TIME>)
% navpoint_sector(<NAVPOINT-ID>,<SECTOR-ID>,<TIME=0 for FACTS>)
% navpoint_edge(<X0>,<X1>,<PLANE-SPEED>,<UNIT-DIST>)
% airport(<X>)
% aircraft(<ID>,<PLANE-SPEED>)
% aircraft_flight(<PLANE-ID>,<FLIGHT-ID>)
% atomic_sector(<NAVPOINT-ID>,<CAPACITY>,<TIME>)

navpoint(NAV) :- navpoint_sector(NAV,_,_).

% Dynamic Sector Allocation:
navpoint_sector(SEC,SEC,T) :- navpoint(SEC), time(T), airport(SEC).
1{navpoint_sector(NAV,SEC,T):navpoint_sector(_,SEC,0)}1:- navpoint(NAV), time(T), not airport(NAV), regulation_dynamic_sector_allocation.
initial_sector(SEC) :- navpoint_sector(_,SEC,0), regulation_restricted_dynamic_sector_allocation.
1{navpoint_sector_decision(SEC,0..3,T)}1 :- initial_sector(SEC), time(T), not airport(SEC), regulation_restricted_dynamic_sector_allocation.
navpoint_sector(NAV1,SEC1,T) :- navpoint_sector_decision(SEC,DEC,T), navpoint_sector_restricted_sector_allocation(SEC,DEC,NAV1,SEC1), regulation_restricted_dynamic_sector_allocation.
navpoint_sector(NAV,SEC,T) :- time(T), navpoint_sector(NAV,SEC,0), -regulation_dynamic_sector_allocation.

:- navpoint_sector(NAV,SEC1,T), navpoint_sector(NAV,SEC2,T), SEC1 != SEC2.
:- navpoint_sector(NAV,SEC,T), not airport(NAV), airport(SEC).
:- navpoint_sector(NAV,SEC,T), airport(NAV), not airport(SEC).
:- navpoint_sector(NAV1,SEC,T), NAV1 !=SEC, not navpoint_sector(SEC,SEC,T).

% MINIMIZE SECTOR-NUMBER:
sector_len(SEC,T,CAP_LEN) :- time(T), navpoint(SEC), CAP_LEN = #count{NAV:navpoint_sector(NAV,SEC,T)}.
sector(SEC,T,0) :- sector_len(SEC,T,CAP_LEN), CAP_LEN = 0. 
sector(SEC,T,C) :- time(T), navpoint(SEC), C = #max{C',NAV:navpoint_sector(NAV,SEC,T),atomic_sector(NAV,C',T)}, sector_len(SEC,T,CAP_LEN), CAP_LEN > 0.
sector_number(T,NUM) :- time(T), NUM = #count{SEC:sector_len(SEC,T,C), C > 0}.

% MINIMIZE NAVPOINT-SECTOR ASSIGNMENT CHANGES:
navpoint_changed(NAV,T,1) :- navpoint_sector(NAV,SEC0,T), navpoint_sector(NAV,SEC1,T+1), SEC0 != SEC1.
navpoint_changed(NAV,T,0) :- navpoint_sector(NAV,SEC0,T), navpoint_sector(NAV,SEC0,T+1).
sector_diff(T,DIFF) :- time(T), DIFF = #sum{V,NAV: navpoint_changed(NAV,T,V)}.

% MINIMIZE RECONFIG:
reconfig(NAV,T) :- navpoint_sector(NAV,SEC,0), navpoint_sector(NAV,SEC2,T), SEC != SEC2, T > 0.

% GET ALL AIRPLANE SPEEDS
aircraft_speed(S) :- aircraft(_,S).

% UNDIRECTED GRAPH:
navpoint_edge(Y,X,S,D) :- navpoint_edge(X,Y,S,D).
navpoint_edge(X,X,S,1) :- airport(X), aircraft_speed(S).

% GET NAVPOINTS
navpoint(X) :- navpoint_edge(X,_,_,_).
navpoint(Y) :- navpoint_edge(_,Y,_,_).

% PRELIMINARIES:
% GET ALL TIME STEPS:
time(T) :- navpoint_flight_plan(_,_,T).
% GET ALL FLIHT IDS:
flightID(ID) :- navpoint_flight_plan(ID,_,_).

% MORE TIME HEURISTIC:
time(0).
time(T+1) :- time(T), max_time(TMAX), T < TMAX.
time(T+1) :- time(T), induced_max_time(TMAX), T < TMAX.
induced_max_time(T) :- T = #max{T': navpoint_flight_plan(_,_,T')}.

% NORMAL FLIGHT:
navpoint_flight(ID,X,T) :- navpoint_flight_plan(ID,X,T), not reroute(ID).

% LIMIT SECTOR CAPACITY (hard constraint modelling):
%overload(X,T,LOAD-C) :- sector(X,T,C), #count{ID:flight(ID,X,T),flight(ID,X2,T-1), X!=X2} = LOAD, LOAD > C.
% CURRENT METHOD -> LIMIT NUMBER FLIGHTS PER TIMESTEP IN SECTOR:
overload(X,T,LOAD-C) :- sector(X,T,C), #count{ID:flight(ID,X,T)} = LOAD, LOAD > C.

% REROUTE - YES/NO:
{reroute(ID)} :- navpoint_flight_plan(ID,_,_).
reroute(ID) :- planned_departure_time(ID,T), actual_departure_time(ID,T'), T' > T.
{navpoint_flight(ID,X,T):navpoint(X)}1 :- reroute(ID), time(T), actual_departure_time(ID,T'), T >= T', regulation_rerouting.
navpoint_flight(ID,X,T + TT) :- navpoint_flight_plan(ID,X,T), actual_departure_time(ID,TACT),planned_departure_time(ID,TPLAN), TT = TACT - TPLAN, -regulation_rerouting.
1{navpoint_flight_chosen_path(ID,P):regulation_restricted_rerouting_paths(ID,_,_,P)}1 :- reroute(ID), regulation_restricted_rerouting.
navpoint_flight(ID,X,T+T_DELTA) :- regulation_restricted_rerouting_paths(ID,X,T_DELTA,P), reroute(ID), actual_departure_time(ID,T), navpoint_flight_chosen_path(ID,P), regulation_restricted_rerouting.

% ----------------- ONLY FOR REROUTED PLANES --------------------
% NO FLIGHT BEFORE PLANNED START TIME:
planned_departure_time(ID,T) :- flightID(ID), T = #min{T':navpoint_flight_plan(ID,_,T')}.
:- reroute(ID), navpoint_flight(ID,_,T), planned_departure_time(ID,T'), T < T'.

% FLIGHT STARTS AT ORIGIN:
planned_origin(ID,X) :- reroute(ID),planned_departure_time(ID,T),navpoint_flight_plan(ID,X,T).
navpoint_flight(ID,X,T) :- reroute(ID), actual_departure_time(ID,T), planned_origin(ID,X).

% FLIGHT ENDS AT DESTINATION:
planned_destination(ID,X) :- reroute(ID), T=#max{T':navpoint_flight_plan(ID,_,T')}, navpoint_flight_plan(ID,X,T).
actual_arrival_time(ID,T) :- flightID(ID), T = #max{T':navpoint_flight(ID,_,T')}.
:- flightID(ID), not actual_arrival_time(ID,_).
:- flightID(ID), planned_destination(ID,X), actual_arrival_time(ID,T), navpoint_flight(ID,X',T), X != X'.

% NO FLIGHT AFTER DESTINATION REACHED:
actual_destination_first_reached_time(ID,T) :- reroute(ID), planned_destination(ID,X), T = #min{T':navpoint_flight(ID,X,T')}.
:- reroute(ID), navpoint_flight(ID,_,T), actual_destination_first_reached_time(ID,T'), T > T'.

% MUST BE PATH:
%navpoint_seq(ID,T,X,X'') :- navpoint_flight(ID,X,T), T'' = #min{T',X': navpoint_flight(ID,X',T'), T' > T}, navpoint_flight(ID,X'',T'').
%:- navpoint_flight(ID,X,T), navpoint_flight(ID,X',T'), T' > T, navpoint_seq(ID,T,X,X'), aircraft_flight(AID,ID), aircraft(AID,S), D = T'-T, not navpoint_edge(X,X',S,D).

% MUST BE PATH:
pot_navpoint_seq(ID,T,TT) :- navpoint_flight(ID,_,T), navpoint_flight(ID,_,TT), T < TT.
not_navpoint_seq(ID,T,TT) :- pot_navpoint_seq(ID,T,TT), T < TTT, TTT < TT, pot_navpoint_seq(ID,T,TTT).
navpoint_seq(ID,T,TT) :- pot_navpoint_seq(ID,T,TT), not not_navpoint_seq(ID,T,TT).
:- navpoint_seq(ID,T,TT), navpoint_flight(ID,X,T), navpoint_flight(ID,XX,TT), aircraft_flight(AID,ID), aircraft(AID,S), D = TT - T, not navpoint_edge(X,XX,S,D).

% NEVER TWICE AT SAME LOCATION:
:- navpoint_flight(ID,X,T), navpoint_flight(ID,X,T'), T != T', not airport(X).

% FLIGHT MUST OCCUR:
flightOccurs(ID) :- navpoint_flight(ID,_,_).
:- flightID(ID), not flightOccurs(ID).

% HELPER PREDS:
1{actual_departure_time(ID,T):time(T), T >= TP}1 :- planned_departure_time(ID,TP), regulation_delaying.
1{actual_departure_time(ID,T):time(T), T >= TP, T <= TP + MD}1 :- planned_departure_time(ID,TP), restricted_delaying_max_delay(MD), regulation_restricted_delaying.
actual_departure_time(ID,TP) :- planned_departure_time(ID,TP), -regulation_delaying.

planned_arrival_time(ID,T) :- flightID(ID), T = #max{T':navpoint_flight_plan(ID,_,T')}.
arrival_delay(ID,Y-T) :- planned_arrival_time(ID,T), actual_arrival_time(ID,Y).

% NO GAPS:
:- time(T), actual_departure_time(ID,T'), T > T', actual_arrival_time(ID,T''), T < T'', not flight(ID,_,T).

% TRANSLATE navpoint_flight to flight
flight(ID,S,T) :- navpoint_flight(ID,X,T), navpoint_sector(X,S,T).
flightT(ID,T,T',0) :- navpoint_seq(ID,T,TT), D = TT - T, time(T'), T' > T, T' <= T + D/2.
flightT(ID,TT,T',1) :- navpoint_seq(ID,T,TT), D = TT - T, time(T'), T' < TT, T' > T + D/2.
flight(ID,S,T') :- flightT(ID,T,T',0), navpoint_flight(ID,X,T), navpoint_sector(X,S,T').
flight(ID,S,T') :- flightT(ID,TT,T',1), navpoint_flight(ID,XX,TT), navpoint_sector(XX,S,T').

%flight(ID,S,T') :- navpoint_flight(ID,X,T), navpoint_flight(ID,_,TT), navpoint_seq(ID,T,TT), D = TT - T, time(T'), T' > T, T' <= T + D/2, navpoint_sector(X,S,T').
%flight(ID,S',T') :- navpoint_flight(ID,_,T), navpoint_flight(ID,X',TT), navpoint_seq(ID,T,TT), D = TT - T,time(T'), T' < TT, T' > T + D/2, navpoint_sector(X',S',T').

% Multi-leg flights:
:- aircraft_flight(AID,FID0), aircraft_flight(AID,FID1), FID0 != FID1, planned_arrival_time(FID0,T0), planned_arrival_time(FID1,T1), T0 < T1, actual_arrival_time(FID0, T0'), actual_departure_time(FID1,T1'), T0' >= T1'.

% ------------------------------------------------------------------------------
% PRIMARY WEAK CONSTRAINT:
:~ overload(X,T,OVER). [OVER@10,X,T]
% SECONDARY WEAK CONSTRAINT:
:~ arrival_delay(ID,DIFF). [DIFF@9,ID]
% TERTIARY WEAK CONSTRAINT:
:~ sector_number(T,NUM). [NUM@8,T]
% OPTIMIZE FOR LEAST SECTOR CHANGES:
:~ sector_diff(T,DIFF). [DIFF@7,T]
% OPTIMIZE FOR LEAST REROUTES/DELAYS:
:~ reroute(ID). [1@6,ID]
% OPTIMIZE FOR LEAST RECONFIGS:
:~ reconfig(ID,T). [1@5,ID,T]
\end{lstlisting}

\section{Dynamic MIP Model}

We simulate the SOTA MIP model with the following dynamic MIP model.
Dynamic in the sense that the ground delay and reroute alternatives are generated by the Python wrapper.
Otherwise, we restrict the MIP model to a SOTA model in the sense that it only optimizes routes and delay, but not DAC.

\begin{figure*}
\begin{flalign}
    \notag &\textbf{Simulated SOTA MIP Model}&& \\
    &\label{eq:mip-real-01} \min \sum_{f \in F} \sum_{p \in P_f} d_p \cdot \omega_{\textit{dest}_f, t_{\textit{dest}}}^{f,p} \\
    &\nonumber \textbf{Subject to:}\\
    &\label{eq:mip-real-03} \omega_{j,t}^{f,p} \leq \omega_{i,t+1}^{f,p} && \textit{For all } f \in F, p \in P_f, t \in T^{f,p}, t+1 \in T^{f,p}, j = S_t^{f,p}, i = S_{t+1}^{f,p}\\
    &\label{eq:mip-real-04} \sum_{p \in P_t^f, j = S_t^{f,p}} \omega_{j,t}^{f,p} \leq 1 && \textit{For all } f \in F, t \in T_f\\
    &\label{eq:mip-real-05} \sum_{p \in P_{j,t}^f} \omega_{j,t}^{f,p} \leq 1 && \textit{For all } f \in F, t \in T_f, j = S_t^f\\
    %
    %
    &\label{eq:mip-real-07} 1 \leq \sum_{p \in P_f} \omega_{s,t_{s,\textit{min}}}^{f,p} && \textit{For all } f \in F, s = \textit{source}_f\\
    &\label{eq:mip-real-07-a} \sum_{p \in P_f} \omega_{s,t_{s,\textit{min}}}^{f,p} \leq 1 && \textit{For all } f \in F, s = \textit{source}_f\\
    &\label{eq:mip-real-08} \omega_{j,t}^{f,p} \in \{0,1\} && \textit{For all } f \in F, p \in P_f, t \in T^{f,p}, j = S^{f,p}_t\\
    \nonumber &\textbf{Consecutive flights}\\
    &\label{eq:mip-real-09} (1 - \omega_{i,t_{dest}}^{f_0,p'}) \geq \omega_{j,t_{start}}^{f_1,p''} && \textit{For all } f_0, f_1 \textit{ s.t. } f_0,f_1 \in A \textit{ and}\\
        \nonumber &&& \textit{planned landing time of } f_0 \textit{ is before planned departure time of } f_1\\
        \nonumber &&& \textit{but path one is } t_{dest} \geq t_{start}\\
    \nonumber &\textbf{Capacity Slots}\\
    &\label{eq:mip-real-02} \sum_{f \in F} \sum_{p \in P_f } \omega_{j,t}^{f,p} < C_j(t) &&\textit{For all } j \in S, t \in T
    \end{flalign}
\vspace{-0.7cm}
\end{figure*}

Let $F$ be the set of flights, where $f \in F$ has a departure ($\textit{source}_f$) and an arrival ($\textit{destination}_f$) airport,
with associated departure time $t_{\textit{start}}$ and expected arrival time $\hat{t}_{\textit{dest}}$.
Let $\mathcal{G}$ be the navpoint graph.
The set of paths $P$ is the set of simple paths between any two vertices in the navpoint graWe create for each jobph $\mathcal{G}$.
Then let $P_f$ be the set of paths restricted to f: $P_f = \{p \mid p = \langle \textit{source}_f, \ldots, \textit{destination}_f \rangle \in P \}$.

Further, let $T$ be the set of times in the whole application,
whereas, $T^{f,p}$ is the set of times that an airplane needs to travel from $\textit{source}_f$ to $\textit{destination}_f$.
Given a flight $f \in F$, a path $p \in P_f$, and a specific time during the flight $t \in T^{f,p}$,
we can compute the estimated current sector-position of the flight $j = S^{f,p}_t$.
Lastly, let $S$ be the set of sectors.
We interpret the sectors as the \emph{Voronoi} diagram of the navpoints, where each sector may have a set of corresponding navpoints.
The location of the flight along a path is computed according to the Voronoi interpretation.
A sector $j \in S$ has a capacity per time unit $C_j(t)$.
Example:  
When a flight $f$ starts at $t=0$ at navpoint $a$ (sector $A$) and travels to navpoint $b$ (sector $B$) - and the duration of the flight takes $\Delta t = 3$ units of time,
then flight $f$ is in sector A during time units $\{0,1\}$,
and in sector B during time units $\{2,3\}$.

Further, let $\textit{isFirst}(j,t,f,p)$ evaluate to true iff $\forall t' \in T^{f,p}: t'<t$ it holds that $i=S^{f,p}_{t'}$, where $i \not = j$.
Conversely, $\textit{isLast}(j,t,f,p)$ evaluates to true iff $\forall t' \in T^{f,p}: t' > t$ it holds that $i = S^{f,p}_{t'}$, where $i \not j$.

We treat capacity violations as a hard constraint and optimize for efficiency (to be more in line with SOTA work).
In the future, we plan to adapt this SOAT simulation model to a new model in our formalism, by incorporating DAC optimization and lexicographic optimization.

\emph{Description of Equations}.
We define variables for flightss, delays, and paths in Equation~(\ref{eq:mip-real-08}).
We require that exactly one departing flight is chosen (Equations~(\ref{eq:mip-real-07}) and~(\ref{eq:mip-real-07-a})).
If a path is chosen it must be kept throughout the flight (Equation~(\ref{eq:mip-real-03})) and there must not be more than one path at any other node (Equation~(\ref{eq:mip-real-04})).
Similarly, at any node along the path there must only be one path active (Equation~(\ref{eq:mip-real-05})).
At any given timestep, a sector must not be overloaden (Equation~(\ref{eq:mip-real-02})) and for an airplane with multiple flights per day, it must not be that the departure time $t_{\textit{start}}$ of the next flight is before the landing time of the previous flight $t_{\textit{dest}}$ (Equation~(\ref{eq:mip-real-09})).
One may switch Equation~(\ref{eq:mip-real-02}) to Equation~(\ref{eq:mip-real-02-diff}) to switch demand measurement from number of airplanes in a sector, to airplanes entering a sector.

\begin{flalign}
    &\nonumber \textbf{Diff Capacity}\\
    &\label{eq:mip-real-02-diff} \sum_{f \in F} \sum_{p \in P_f \land \textit{isFirst}(j,t,f,p)} \omega_{j,t}^{f,p} < C_j(t) &&\textit{For all } j \in S, t \in T
\end{flalign}

\clearpage
\onecolumn

\input{appendix_tables}

\onecolumn
\clearpage

\section{Additional Details on Experiments}\label{sec:appdxexp}


We show additional details of the experiments.
In Tables~\ref{tab:tournament-wins} and~\ref{tab:tournament-wins-asp} we show the tournament wins and number of solved instances, overall and only for the ASP variants, respectively.
In Tables~\ref{tab:summary-by-scenario}--\ref{tab:summary-by-size} we show the results when structured along the geographic and flight scenarios.
In Figures~\ref{fig:heat-maps-east-asia}--\ref{fig:heatmaps-MAJOR-EUROPE} we show averaged metrics for all variants, geographic scenarios, and flight sizes, for the ASP variants as heat maps.
Finally, in Table~\ref{tab:june-2019-regions} we show the detailed results for all variants, for all instances.

\section{Omitted Proofs}

\LemSolutionSpaceEquivalence*
\begin{proof}
$\mathcal{S}_{AS} \subseteq \mathcal{S}_{\leq t}$:
let $s = (\mathbf{T},\mathbf{SEC}, \mathbf{A}) \in \mathcal{S}_{AS}$ be an arbitrary converted answerset.
It is also a solution for $s \in \mathcal{S}_{\leq t}$:
due to $s = c^{-1}(AS)$, $AS$ must be an answerset of $\Pi' \cup c(\mathcal{I)} \cup \{max\_t(t)\}$.
$\mathbf{SEC}$ is a partition of navpoints over time and $\mathbf{A}$ is are valid flights due to Section~\ref{subsec:asp-encoding-constraints}.
Further, $\max\{\mathbf{T}\} = t'$, where $t' \leq t$.
Lastly, Section~\ref{subsec:navpoint-flight-mapping} ensures that the trajectories are correctly mapped to sector occupancies and with $c_1$ no overloads occur in the answerset $AS$.

$\mathcal{S}_{\leq t} \subseteq \mathcal{S}_{AS}$:
let $s = (\mathbf{T},\mathbf{SEC}, \mathbf{A}) \in \mathcal{S}_{\leq t}$ be an arbitrary valid solution.
Then also $s \in \mathcal{S}_{AS}$:
first, due to $max\_t(t)$ it must hold for any answerset of $\Pi' \cup c(\mathcal{I)} \cup \{max\_t(t)\}$ that $\max_{f \in \mathbf{F}} t^a_f \leq t$.
Further, due to having $\textit{ASP} (r,d,s)$ and Section~\ref{subsec:asp-encoding-actions},
the considerd flights in the solver are all departure times and trajectories (under the time limit $t$), combined with all possible sector assignments;
both are restricted to the feasibility constraints of Definition~\ref{def:atfcm-solution}.
Therefore also $s \in \mathcal{S}_{AS}$.
\end{proof}

\ThmSolutionExistence*
\begin{proof}{Proof (Idea):}
By Lemma~\ref{thm:solution-space-equivalence} we know that the answersets of the encoding corresponds to all valid solutions within the same timeframe;
we are left to show that this translates to the optimization problem.
In contrast to $\Pi'$, $\Pi$ does not have a hard constraint on the overload but additional soft constraints.
However, requiring a 0-overload answerset $\Pi'$ corresponds to $c_1$ being active.
The remaining soft constraints of $\Pi$ do not shrink the solution space, they only define an order upon them, which leads the statement to hold.
\end{proof}

\ThmOptimality*
\begin{proof}
Towards a contradiction assume the statement does not hold, so for the found optimal answerset
$\{AS\} = \mathcal{AS}(\Pi \cup c(\mathcal{I}) \cup \{max\_t(t)\})$
it holds $\mathbf{S}_{opt} \prec c^{-1}(AS)$.
First, observe that by Lemma~\ref{thm:solution-space-equivalence} it holds $\mathbf{S}_{opt} \in \mathcal{S}_{AS}$ for $\Pi'$.
Therefore, the soft constraints in $\Pi$ have to rule out $\mathbf{S}_{opt}$.
However, they define a lexicographic optimization (Section~\ref{subsec:optimization}), equivalent to the optimization Definition~\ref{def:atfcm-optimization-problem}.
The highest priority has reducing overloads with a priority of $10$, followed by 
\ATFCMoDELAY with a priority of $9$, 
\ATFCMoNUMSEC with a priority of $8$,
\ATFCMoSECCHANGES with a priority of $7$,
\ATFCMoREROUTED with a priority of $6$, and
\ATFCMoRECONFIG with a priority of $5$.
But therefore the answerset corresponding to $\mathbf{S}_{opt}$ is lexicographically better than $AS$ and is therefore preferred, which contradicts our assumption.
%
%
%
\end{proof}

\begin{figure}[t]
    \centering
    \includegraphics[width=8cm]{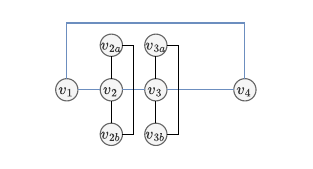}
    \caption{Navpoint counter example graph $G = (V,E)$ for Observation~\ref{obs:counter-example-optimality}.}
    \label{fig:optimality-counter-example}
\end{figure}

Note that the pipeline implementation does not guarantee to find an optimal solution, it still guarantees to find a solution.
To ensure termination, we break the loop if $t > |T| + \frac{1}{2} \cdot |T| \cdot |F| \cdot (|F|+1)$, as further search is not necessary to find a (optimal) solution.
See also Theorem~\ref{thm:upper-bounding-optimal-time}, Observations~\ref{obs:counter-example-optimality} and~\ref{obs:sequential-max-delay}, and Corollaries~\ref{col:upper-optimal-bound} and~\ref{col:non-optimality-pipeline}.
These results hold for $d_f = \max\{t^a_f - t^e_f,0\}$, where $t^a_f$ is the actual and $t^e_f$ expected arrival time.
This holds for all generated instance implicitly, one can explicitly enforce it via:
\begin{lstlisting}
arrival_delay(ID,Y-T) :- p_arr_t(ID,T), a_arr_t(ID,Y), Y > T.
\end{lstlisting}

\begin{observation}
\label{obs:counter-example-optimality}
    Let $\mathcal{I}$ be an instance and $\mathbf{S}_1$ and $\mathbf{S}_2$ two valid solutions.
    A smaller time horizon does not imply a lower delay:
    $\max\{\mathbf{T}_1\} < \max\{\mathbf{T}_2\} \not \rightarrow \ATFCMoDELAY^1 < \ATFCMoDELAY^2$.
\end{observation}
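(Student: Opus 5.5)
The plan is to prove the observation by an explicit counterexample: a small instance $\mathcal{I}$ on a navpoint graph $G=(V,E)$ as in Figure~\ref{fig:optimality-counter-example}, together with two valid solutions $\mathbf{S}_1,\mathbf{S}_2$ such that $\max\{\mathbf{T}_1\}<\max\{\mathbf{T}_2\}$ but $\ATFCMoDELAY^1\geq\ATFCMoDELAY^2$. Recall from the setup preceding the Lemma that $\max\{\mathbf{T}\}$ is $\max_{f\in\mathbf{F}}t^a_f$. The horizon is therefore determined only by the latest-arriving flight, whereas $\ATFCMoDELAY=\sum_f d_f$ (with $d_f=\max\{t^a_f-t^e_f,0\}$) aggregates over all flights. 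The construction exploits this mismatch.

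\textbf{Instance.} I would take two airports $a_0,a_1$ and two navpoints $v_0,v_1$, with $v_0$ the single shared en-route bottleneck of capacity $1$. The flights are:
\begin{itemize}
\item a long flight $f_L$ whose planned arrival $t^e_L$ is the latest in the instance, so it fixes the horizon;
\item a short flight $f_S$ that arrives well before $t^e_L$.
\end{itemize}
Timings are chosen so that, in the filed plan, both flights occupy $v_0$ in the same timestep, which gives an overload. The conflict can be resolved in two ways:
\begin{itemize}
\item delay $f_L$ by exactly $1$ timestep, after which its $v_0$-occupancy window becomes disjoint from that of $f_S$;
\item delay $f_S$ by $k\geq 2$ timesteps, which clears $f_L$'s window; here $k=2$ should suffice if $f_L$ occupies $v_0$ for two timesteps, using the $\lfloor\Delta t/2\rfloor$ occupancy rule of the Demand Routing item in Definition~\ref{def:atfcm-solution}.
\end{itemize}
I also need $t^e_S+k<t^e_L$, so that the delayed $f_S$ still lands before $f_L$.

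\textbf{Comparing the solutions.} Let $\mathbf{S}_1$ delay $f_S$ by $k$, and let $\mathbf{S}_2$ delay $f_L$ by $1$. Both keep the static sectorization and do no rerouting. The key steps are:
\begin{enumerate}
\item Check validity of both solutions against Definition~\ref{def:atfcm-solution}: paths, timestamps consistent with $\max\{\lceil d/u_a\rceil,1\}$, atomic airports, and $q(i,t)\leq \textit{cap}(i,t)$ for every $i$ and $t$.
\item Compute the horizons: $\max\{\mathbf{T}_1\}=t^e_L$ and $\max\{\mathbf{T}_2\}=t^e_L+1$.
\item Compute the delays: $\ATFCMoDELAY^1=k\geq 2>1=\ATFCMoDELAY^2$. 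This violates the implication in the observation.
\end{enumerate}

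\textbf{Main obstacle.} The delicate step is getting the occupancy windows right, since the asymmetry between $k$ and $1$ comes entirely from the half-edge occupancy rule and must be verified timestep by timestep. A secondary concern is that the airport sectors (capacity at least $1$ per step) must not introduce a new overload when either flight waits on the ground; I would give airports capacity $2$ to avoid this.

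\textbf{Strengthening.} If time permits, I would also make $\mathbf{S}_2$ the lexicographic optimum. To do so I would forbid the alternative of rerouting $f_L$ (for instance by making $v_0$ a cut vertex) and check that no delay of total $1$ is possible via $f_S$. This shows that even the optimum need not lie in the smallest feasible horizon, which is what motivates Corollary~\ref{col:non-optimality-pipeline}.
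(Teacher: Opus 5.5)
Your proposal is correct, but it takes a different route from the paper.

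\textbf{The paper's construction.} It uses eight vertices, three flights and unit capacities, and it relies on \emph{rerouting}. In $\mathbf{S}_1$ the two short crossing flights $f_1,f_2$ are sent over their direct but slower edges $(v_{2a},v_{2b})$ and $(v_{3a},v_{3b})$. Each arrives one step late, so $K_d^1=1+1=2$, while the long flight $f_0$ still lands at $5$. In $\mathbf{S}_2$ only $f_0$ takes the slower direct edge $(v_1,v_4)$ and lands at $6$, so $K_d^2=1$. The paper thus exploits the sum-versus-max mismatch by spreading unit costs over several short flights.

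\textbf{Your construction.} You use pure ground delay on a single short flight. The $k$-versus-$1$ asymmetry comes from delays only moving forward, combined with unequal occupancy windows at $v_0$.

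\textbf{What each buys.} Your version needs fewer flights and vertices and works with rerouting disabled. It also avoids direct edges that are slower than multi-hop paths; the paper's example has $d(v_1,v_4)=5$ against a three-hop path of length $4$. The paper's version avoids timestep-level window engineering. Moreover, its $\mathbf{S}_2$ is in fact delay-optimal, which is what Corollary~\ref{col:non-optimality-pipeline} actually needs.

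\textbf{A concrete instance of your plan.} Your timing constraints are satisfiable on your four vertices. Take durations $1$ on $(a_0,v_0)$ and $(v_0,a_1)$, $2$ on $(v_0,v_1)$, and $3$ on $(v_1,a_1)$. Let $f_S=((a_0,t),(v_0,t{+}1),(a_1,t{+}2))$ and $f_L=((a_0,t),(v_0,t{+}1),(v_1,t{+}3),(a_1,t{+}6))$. The windows at $v_0$ are then $\{t{+}1\}$ and $\{t{+}1,t{+}2\}$. Delaying $f_L$ by $1$ gives horizon $t{+}7$ with $K_d=1$. Delaying $f_S$ by $2$ gives horizon $t{+}6$ with $K_d=2$. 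No other vertex is overloaded in either solution, even with airport capacity $1$.

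\textbf{Two remarks.}
\begin{itemize}
\item Since the consequent is a strict inequality, $K_d^1=K_d^2$ would already refute the implication. Your ``main obstacle'' therefore only matters for the strict version.
\item The optional strengthening needs more than a cut vertex. In the instance above, $f_L$ can switch to the faster route $a_0,v_0,a_1$ with one step of ground delay and zero arrival delay. Also, under the max-capacity rule, merging $v_0$ with a higher-capacity navpoint would remove the bottleneck. You must therefore also deny $f_L$ any faster route and set all navpoint capacities to $1$.
\end{itemize}
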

\begin{proof}{Proof by counterexample.}
We assume it holds $\max\{\mathbf{T}_1\} < \max\{\mathbf{T}_2\} \rightarrow \ATFCMoDELAY^1 < \ATFCMoDELAY^2$.
Let $\mathcal{I}$ be with $G = (V,E)$ as in Figure~\ref{fig:optimality-counter-example},
where $d(v_1,v_2) = d(v_2,v_3) = d(v_{2a}, v_2) = d(v_2,v_{2b}) = d(v_{3a},v_3) = d(v_3,v_{3b}) = 1$,
$d(v_3,v_4) = 2$,
$d(v_{2a},v_{2b}) = d(v_{3a},v_{3b}) = 3$, and
$d(v_1, v_4) = 5$.
Let the sector structure $\textit{sec}$ be atomic with $\forall v \in V: \textit{cap}(v,0) = 1$.
The filed flights are $f_0 = (0,((v_1,1),(v_2,2),(v_3,3),(v_4,5)))$,
$f_1 = (1,((v_{2a},1),(v_{2},2),(v_{2b},3)))$, and
$f_2 = (2,((v_{3a},2), (v_3,3), (v_{3b},4)))$;
with aircraft $a_0 = (a_0,1,\{f_0\})$,
$a_1 = (a_1,1,\{f_1\})$, and
$a_2 = (a_2,1,\{f_2\})$.
Further, let $T_{\textit{gran}} = 1$ and $T = \{0,1,2,3,4,5\}$.

Consider now the following valid solutions:
$\mathbf{S}_1 = (\mathbf{T}_1, \mathbf{SEC}_1, \mathbf{A}_1)$ and 
$\mathbf{S}_2 = (\mathbf{T}_2, \mathbf{SEC}_2, \mathbf{A}_2)$.
Let $\mathbf{T}_1 = \{0,1,2,3,4,5\}$, with $\mathbf{SEC}_1 = \textit{sec}$,
and $\mathbf{f}_0 = (0,((v_1,1),$ $(v_2,2),(v_3,3),(v_4,5)))$,
$\mathbf{f}_1 = (1,((v_{2a},1),(v_{2b},4)))$, and
$\mathbf{f}_2 = (2,((v_{3a},2), (v_{3b},5)))$.
Further, let $\mathbf{T}_2 = \{0,1,2,3,4,5,6\}$, with $\mathbf{SEC}_2 = \textit{sec}$,
and $\mathbf{f}_0 = (0,((v_1,1),(v_4,6)))$,
$\mathbf{f}_1 = (1,((v_{2a},1),(v_{2},2),(v_{2b},3)))$, and
$\mathbf{f}_2 = (2,((v_{3a},2), (v_3,3), (v_{3b},4)))$.
It holds $\max\{\mathbf{T}_1\} < \max\{\mathbf{T}_2\}$,
however $\ATFCMoDELAY^1 = 2$, but $\ATFCMoDELAY^2 = 1$.
Which concludes the counterexample.
\end{proof}

\begin{corollary}
    \label{col:non-optimality-pipeline}
    The prototype (pipeline) implementation is not optimal.
\end{corollary}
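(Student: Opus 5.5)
We derive the corollary from the counterexample instance $\mathcal{I}$ of Observation~\ref{obs:counter-example-optimality}. The point is that the pipeline stops too early.

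Recall how the loop of Section~\ref{sec:data} works. It starts with a fixed horizon \texttt{max\_t(t)} equal to the planned horizon $\max\{T\}$. It enlarges $t$ only while no $0$-overload answer set exists, and it stops at the first horizon $t_0$ where one does. By Theorem~\ref{thm:solution-space-equivalence} together with the Solution Existence and Optimality theorems, the returned solution is therefore optimal only within $\mathcal{S}_{\leq t_0}$. It suffices to show an instance where every solution in $\mathcal{S}_{\leq t_0}$ is lexicographically worse than some solution in $\mathcal{S}_{\leq t'}$ for some $t' > t_0$.

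\textbf{Step 1: the loop stops at $t_0 = 5$.} Take the instance $\mathcal{I}$ from the proof of Observation~\ref{obs:counter-example-optimality}. The filed plan is not a valid solution, since it is overloaded: $f_0$ and $f_1$ both occupy the atomic sector of $v_2$ at time $2$, and $f_0$ and $f_2$ both occupy $v_3$ at time $3$, while all capacities are $1$. At horizon $5$, however, the solution $\mathbf{S}_1$ exists with $0$ overload. Hence $t_0 = 5$, because $5$ is the first horizon tried.

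\textbf{Step 2: every solution in $\mathcal{S}_{\leq 5}$ has $\ATFCMoDELAY \geq 2$.} Let $\mathbf{S}$ be any solution with all arrivals at time at most $5$.
\begin{itemize}
\item Flight $f_0$ is planned to arrive at $5$, so it cannot be delayed. The direct edge $(v_1,v_4)$ has length $5$ and would give arrival $6$, which is outside the horizon. Since the graph offers no other route, $f_0$ must fly its filed trajectory, occupying $v_2$ around time $2$ and $v_3$ around times $3$--$4$.
\item Flight $f_1$ therefore cannot pass $v_2$ at time $2$. Its only alternatives are the edge $(v_{2a},v_{2b})$ of length $3$, or a ground delay. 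Either way its arrival is at least $4$, i.e., delay at least $1$.
\item Symmetrically, $f_2$ must avoid $v_3$ while $f_0$ occupies it. Using the edge $(v_{3a},v_{3b})$ or delaying costs it at least $1$.
\end{itemize}
Thus $\ATFCMoDELAY \geq 2$ for all of $\mathcal{S}_{\leq 5}$. The step I expect to need most care is the occupancy bookkeeping for a delayed $f_2$ or $f_1$: with the half-interval rule of Definition~\ref{def:atfcm-solution}, a one-step ground delay might still collide with $f_0$. That would only raise the delay further, so it does not endanger the lower bound.

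\textbf{Step 3: conclude.} The solution $\mathbf{S}_2 \in \mathcal{S}_{\leq 6}$ has $0$ overload and $\ATFCMoDELAY^2 = 1$. Delay ranks immediately after overload in the lexicographic order of Definition~\ref{def:atfcm-optimization-problem}, so $\mathbf{S}_2$ strictly beats every solution the pipeline can return at $t_0 = 5$. Hence the pipeline's output is not optimal.

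One technical gap remains: the paper's theorems assume $t \geq 24 \cdot T_{\textit{gran}}$, whereas this instance uses $T = \{0,\dots,5\}$. My first attempt to close it would be to shift and pad the instance so that its planned horizon equals $24 \cdot T_{\textit{gran}}$; the same local argument then applies near the end of the horizon.
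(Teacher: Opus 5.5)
Your proof is correct and follows the paper's route. The paper only says the corollary follows directly from the counterexample of Observation~\ref{obs:counter-example-optimality}, and your Steps~1--3 make explicit what that one-liner leaves implicit: the loop halts at the first feasible horizon, and \emph{every} solution in $\mathcal{S}_{\leq 5}$, not just $\mathbf{S}_1$, has $\ATFCMoDELAY \geq 2$. Your Step~2 conflict argument holds under any sector configuration, because all capacities are $1$ and merged sectors take the maximum. Your shifting/padding remark is also the right way to reconcile the instance with the $t \geq 24 \cdot T_{\textit{gran}}$ hypothesis.
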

\begin{proof}
    Directly follows from Observation~\ref{obs:counter-example-optimality}.
\end{proof}

\begin{observation}
\label{obs:sequential-max-delay}
    The delay of sequential flight scheduling (max one flight airborne) is upper bounded by $\ATFCMoDELAY^{seq} \leq \frac{1}{2} \cdot  |F| \cdot (|F| + 1) \cdot |T|$.
\end{observation}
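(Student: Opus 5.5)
We prove the bound by constructing an explicit sequential schedule and bounding, by induction, the arrival time of each flight in it. In this schedule every flight keeps its filed route, the sector configuration stays the initial one ($\mathbf{SEC} = \textit{sec}$), and ground delay is the only action used. Since each flight keeps its filed trajectory shape, its arrival delay $d_f$ equals its departure delay.

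\textbf{Construction.} Order the flights $f_1,\dots,f_{|F|}$ by planned departure time, breaking ties arbitrarily. For multi-leg flights of the same aircraft, this order agrees with the planned leg order, because the planned trajectories of the legs do not overlap in time. Let $p_k$ be the planned departure time of $f_k$, $\delta_k$ its (fixed) trajectory duration, and $e_k=p_k+\delta_k$ its planned arrival time. Since the filed trajectory lies in $T$, we have $\delta_k \le |T|-1$ and $p_k,e_k \le |T|-1$. Define actual departures recursively by $a_1=p_1$ and $a_k=\max\{p_k,\ a_{k-1}+\delta_{k-1}+1\}$. The actual arrival is then $\textit{arr}_k=a_k+\delta_k$.

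\textbf{Validity.} The occupancy function $\textit{over}$ assigns a flight to sectors exactly on the interval $[a_k,\textit{arr}_k]$. The $+1$ in the recursion makes these intervals pairwise disjoint, so at every timestep at most one flight occupies any sector. Hence $q(i,t)\le 1\le \textit{cap}(i,t)$ for all $i,t$. This uses the assumption that every atomic capacity is at least $1$; composite capacities are a maximum of atomic ones, so they are at least $1$ as well. I expect this to be the main obstacle, since it is where the implicit assumptions enter. I will state explicitly that unit capacity per timestep is assumed, as in all generated instances (Table~\ref{tab:scenario-description}). Note that with $T_{\textit{gran}}>1$ the distributed capacity $\textit{cap}(i,t)$ could in principle drop to $0$, which would break the argument. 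The remaining requirements are straightforward. The trajectories are the filed ones shifted in time, so they remain valid paths with correct timesteps. Departures are never earlier than planned. Multi-leg consecutiveness holds because $a_k>\textit{arr}_{k-1}$.

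\textbf{Bound.} We show by induction that $\textit{arr}_k\le k|T|$.
\begin{itemize}
\item Base case: $\textit{arr}_1=e_1\le |T|-1$.
\item Step: $a_k\le \max\{|T|-1,\ (k-1)|T|+1\}$, and adding $\delta_k\le |T|-1$ gives $\textit{arr}_k\le k|T|$.
\end{itemize}
Therefore $d_{f_k}=\textit{arr}_k-e_k\le k|T|$, which is consistent with $d_f=\max\{t^a_f-t^e_f,0\}$. Summing over $k$ gives
\[
\ATFCMoDELAY^{seq}\le \sum_{k=1}^{|F|} k|T| = \tfrac12|F|(|F|+1)|T|.
\]
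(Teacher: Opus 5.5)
Your proof is correct. The paper states Observation~\ref{obs:sequential-max-delay} without any proof, so there is no argument of theirs to compare it against. Only the triangular form of the bound hints at the intended reasoning: the $k$-th flight in the sequence is delayed by at most $k|T|$, because it waits for predecessors that each last at most $|T|$ steps. Your greedy recursion $a_k=\max\{p_k,\,a_{k-1}+\delta_{k-1}+1\}$, together with the induction $\textit{arr}_k\le k|T|$, makes exactly this rigorous. The one step that needs care is that $\delta_k\le |T|-1$ absorbs the $+1$ separation, and you handle it correctly.

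A few remarks:
\begin{itemize}
\item Choosing a specific non-idling schedule is necessary, not merely convenient. Read literally as a claim about every schedule with at most one flight airborne, the statement is false, since such a schedule may insert arbitrary idle time. What Corollary~\ref{col:upper-optimal-bound} actually needs is the existence of one valid sequential solution with this delay, and your construction supplies it.
\item The capacity issue you flag is not an obstacle for the observation itself: the delay bound is pure arithmetic on the schedule. It is, however, a genuine hidden hypothesis of Corollary~\ref{col:upper-optimal-bound}, which simply assumes that a valid $\mathbf{S}_{seq}$ exists. You are right that for $T_{\textit{gran}}>1$ the distributed per-timestep capacity can drop to $0$.
\item Your bound does not depend on the order of the flights. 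Ordering by planned arrival time instead of planned departure time would therefore match the encoding's multi-leg constraint directly, since that constraint is keyed on planned arrival times. This removes the need to assume that filed legs of the same aircraft do not overlap.
\end{itemize}
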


\begin{theorem}
\label{thm:upper-bounding-optimal-time}
%
Let $\mathcal{I}$ be an instance, where $\mathbf{S}_{feas}$ is an arbitrary valid solution thereof with delay $\ATFCMoDELAY^{feas}$.
There exists an optimal solution $\mathbf{S}_{opt}$ such that:
$$\max\{\mathbf{T}_{opt}\} \leq \max_{f \in \mathbf{F}} \{t^e_f\} + \ATFCMoDELAY^{feas}$$
\end{theorem}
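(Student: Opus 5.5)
We argue via the lexicographic order of Definition~\ref{def:atfcm-optimization-problem}, whose first component is the arrival delay $\ATFCMoDELAY = \sum_{f} d_f$, with $d_f = \max\{t^a_f - t^e_f, 0\}$ as stipulated just before Observation~\ref{obs:counter-example-optimality}.

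\textbf{Existence of an optimum.} First we make sure an optimal solution exists at all. The set of valid solutions is nonempty, since $\mathbf{S}_{feas}$ is one. The objective tuple takes values in $\mathbb{N}^5$, and the lexicographic order on $\mathbb{N}^5$ is a well-order. Hence a lexicographically minimal value is attained by some valid solution $\mathbf{S}_{opt}$, even though the solution space itself is infinite because $\mathbf{T}$ is unbounded. We must also ensure that the objective is independent of how $\mathbf{T}$ is padded beyond the last arrival. The issue is that $\ATFCMoNUMSEC$ and $\ATFCMoRECONFIG$ sum over $\mathbf{T}$, so a longer horizon could add sectors or reconfigurations. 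To avoid this, we restrict to solutions with $\max\{\mathbf{T}\} = \max\{\max T, \max_f t^a_f\}$, as in the convention of $c^{-1}$. Truncating $\mathbf{T}$ to this value only removes nonnegative summands and leaves validity intact, so an optimal solution of this form exists.

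\textbf{Bounding the horizon.} Lexicographic optimality gives $\ATFCMoDELAY^{opt} \le \ATFCMoDELAY^{feas}$, since otherwise $\mathbf{S}_{feas} \prec \mathbf{S}_{opt}$. Now take any flight $f$. Because every $d_{f'}$ is nonnegative,
\[
t^a_f \le t^e_f + d_f \le t^e_f + \sum_{f'} d_{f'} = t^e_f + \ATFCMoDELAY^{opt} \le \max_{f} t^e_f + \ATFCMoDELAY^{feas}.
\]
Maximizing over $f$ bounds the last arrival time. Combined with the normalization of $\mathbf{T}$ above, this yields $\max\{\mathbf{T}_{opt}\} \le \max_f t^e_f + \ATFCMoDELAY^{feas}$, provided that $\max T \le \max_f t^e_f$.

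\textbf{Main obstacle: the base horizon $T$.} The remaining case is $\max T > \max_f t^e_f$, where the instance horizon exceeds every planned arrival. In that case we would interpret the statement with $\max\{\mathbf{T}\}$ measured by the solution's last arrival time, which is consistent with the definition of $\mathcal{S}_{=t}$ in the correctness section. Alternatively, we would note that in generated instances $T$ is induced by the flight plans (\texttt{induced\_max\_t}), so $\max T = \max_f t^e_f$. The same care is needed for the trailing entries of $\mathbf{SEC}$ on $t > \max T$: here we would use the convention $\textit{sec}(s,t) = \textit{sec}(s,\max\{T\})$, which ensures that truncating $\mathbf{T}$ cannot create invalidity.
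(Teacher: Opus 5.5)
Your core chain ($\ATFCMoDELAY^{opt}\le\ATFCMoDELAY^{feas}$ from the first lexicographic component, $d_f\le\ATFCMoDELAY^{opt}$ by nonnegativity, hence every arrival is at most $\max_f t^e_f+\ATFCMoDELAY^{feas}$) is exactly the paper's argument; your existence/truncation step and your use of $t^a_f\le t^e_f+d_f$ (the paper writes equality) are careful additions the paper takes for granted. Your flagged obstacle is real: the paper writes $\max\{\mathbf{T}_{opt}\}=\max\{\max_f\{t^e_f+d_f\}\cup\{|T|-1\}\}$ and then silently drops the $|T|-1$ term, and the running example ($\max T=96$, planned arrivals $7$ and $9$, a zero-delay valid solution) shows the literal bound is false when $\mathbf{T}\supseteq T$. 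So your reading (a), measuring the horizon by the last arrival as in $\mathcal{S}_{=t}$, is genuinely needed. Your alternative (b) does not hold, since $\max T=dur\cdot T_{\textit{gran}}$ (or \texttt{max\_t} in the encoding) typically exceeds the last planned arrival.
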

\begin{proof}
Since $\mathbf{S}_{opt}$ optimizes arrival delay as its highest priority, its total delay must satisfy $\ATFCMoDELAY^{opt} \leq \ATFCMoDELAY^{feas}$.
Since individual delays are non-negative ($d_f \geq 0$),
no single flight can absorb a delay greater than the total delay: $\forall f \in \mathbf{F}: d_f \leq \ATFCMoDELAY^{opt} \leq \ATFCMoDELAY^{feas}$.
The arrival time of any flight $f$ in the optimal solution is $t^a_f = t^e_f + d_f$.
Note $|T|-1=\max\{T\}$.
Therefore, $\max\{\mathbf{T}_{opt}\} = \max\{\max_{f} \{t^e_f + d_f\} \cup \{|T|-1\}\} \leq \max_{f \in \mathbf{F}}\{t^e_f\} + \ATFCMoDELAY^{feas}$.
\end{proof}

\begin{corollary}
\label{col:upper-optimal-bound}
    Let $\mathcal{I}$ be an instance, where $\mathbf{S}_{seq}$ is a valid solution where flights are sequentially scheduled.
    There exists an optimal solution $\mathbf{S}_{opt}$ such that:
    $$\max\{\mathbf{T}_{opt}\} \leq |T| +  \frac{1}{2} \cdot  |F| \cdot (|F| + 1) \cdot |T|$$
\end{corollary}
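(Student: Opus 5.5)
The plan is to obtain the corollary by combining Theorem~\ref{thm:upper-bounding-optimal-time} with Observation~\ref{obs:sequential-max-delay}. The sequential solution $\mathbf{S}_{seq}$ serves as the feasible witness $\mathbf{S}_{feas}$ in the theorem.

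First, instantiate Theorem~\ref{thm:upper-bounding-optimal-time} with $\mathbf{S}_{feas} \coloneqq \mathbf{S}_{seq}$. The corollary assumes this is a valid solution, so the theorem applies. It yields an optimal solution $\mathbf{S}_{opt}$ with
$\max\{\mathbf{T}_{opt}\} \leq \max_{f \in \mathbf{F}}\{t^e_f\} + \ATFCMoDELAY^{seq}$.

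Second, bound the two summands separately.
\begin{enumerate}
\item For the expected arrival times, every filed flight's trajectory carries timestamps in $T$ (Definition~\ref{def:airplane-model}). Hence $t^e_f \leq \max\{T\} = |T|-1 \leq |T|$ for every $f$.
\item For the delay term, Observation~\ref{obs:sequential-max-delay} gives $\ATFCMoDELAY^{seq} \leq \frac{1}{2}\cdot|F|\cdot(|F|+1)\cdot|T|$.
\end{enumerate}
Substituting both bounds gives $\max\{\mathbf{T}_{opt}\} \leq |T| + \frac{1}{2}\cdot|F|\cdot(|F|+1)\cdot|T|$, which is the claim.

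The main point needing care is the first bound. One must check that the expected arrival time $t^e_f$ is indeed the last timestamp of the filed plan and therefore lies in $T$. One must also check that the delay convention $d_f = \max\{t^a_f - t^e_f, 0\}$ stated just before Observation~\ref{obs:counter-example-optimality} holds, since Theorem~\ref{thm:upper-bounding-optimal-time} relies on non-negative individual delays. If Observation~\ref{obs:sequential-max-delay} itself is considered unproven, I would justify it as follows:
\begin{enumerate}
\item Schedule the flights one after another, so at most one flight is airborne at any time.
\item Each flight's duration is at most $|T|$.
\item The $k$-th flight in this order is therefore delayed by at most $k\cdot|T|$.
\item Summing over $k=1,\ldots,|F|$ gives the triangular-number bound.
\end{enumerate}
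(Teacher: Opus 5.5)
Your proposal is correct and is essentially the paper's proof: instantiate Theorem~\ref{thm:upper-bounding-optimal-time} with $\mathbf{S}_{feas} = \mathbf{S}_{seq}$, bound $\max_{f} t^e_f \le |T|$ because filed timestamps lie in $T$, and bound $\ATFCMoDELAY^{seq}$ via Observation~\ref{obs:sequential-max-delay}, where your ``$\le$'' is more accurate than the paper's ``$=$''. One rigor point you inherit from the theorem's proof is that, since $\mathbf{T}_{opt} \supseteq T$, one actually has $\max\{\mathbf{T}_{opt}\} \le \max\{\max_f t^e_f + \ATFCMoDELAY^{seq},\, |T|-1\}$, and the theorem's final step silently drops the $|T|-1$ term; your conclusion is unaffected, because the right-hand side $|T| + \frac{1}{2}|F|(|F|+1)|T|$ already exceeds $|T|-1$.
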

\begin{proof}
By Theorem~\ref{thm:upper-bounding-optimal-time} we know
$\max\{\mathbf{T}_{opt}\} \leq \max_{f \in \mathbf{F}} \{t^e_f\} + \ATFCMoDELAY^{seq}$
and by Observation~\ref{obs:sequential-max-delay}
$\ATFCMoDELAY^{seq} = \frac{1}{2} \cdot  |F| \cdot (|F| + 1) \cdot |T|$.
Therefore 
        $\max\{\mathbf{T}_{opt}\} \leq \max_{f \in \mathbf{F}} \{t^e_f\} +  \frac{1}{2} \cdot  |F| \cdot (|F| + 1) \cdot |T|$.
Further, we know $\max_{f \in \mathbf{F}} \{t^e_f\} \leq |T|$, and therefore 
$\max\{\mathbf{T}_{opt}\} \leq |T| +  \frac{1}{2} \cdot  |F| \cdot (|F| + 1) \cdot |T|$.
\end{proof}

\section{Limitations}
``All models are wrong but some are useful'', is a phrase which captures the core of our limitations well.
Whether our model will be useful will be judged by time, where we think that it can be a cornerstone of pushing research in ATFCM to more comparable and grounded work.
Therefore, our mathematical formal model applies abstractions to the safety-critical socio-technical ATFCM task.
For example, we need to discretize time for our time model and space for our navpoint graph, which may induce compounding errors.
Further, our current version does not model several aspects, which however, are required for operational use --- where we argue that they can be integrated in future work, such as:
(a) variable speeds of aircraft, (b) uncertainty and stochasticity, for example due to weather, or (c) modelling passengers/airlines and fairness, \dots
\section{Statement on the use of AI}
As we are non-native English speakers, we improved the readability of the paper by using generative AI for spelling, grammar, and wording suggestions.
AI was used as an assistant during the development phase of the prototype including setting up and running the experiments on the cluster.
Further, AI was partially used to generate Python scripts for plotting the figures (except schematics, which we made ourselves entirely) and generating the \LaTeX tables.
We used variants of ChatGPT-5 and Gemini 3.1 Pro.

\section{Acknowledgments}
This research was supported by Frequentis and the Austrian Science Fund (FW), grant 10.557766/COE12.
Hecher has been supported by the French National Research Agency (ANR), grant ANR-25-CE23-7647-01, and the Austrian Science Fund (FWF), grant 10.55776/J4656.
\clearpage
\onecolumn
\input{appendix_heatmaps}

\onecolumn
\clearpage

\clearpage
\onecolumn
\input{large_table}
\onecolumn
\clearpage

%% file: appendix_tables.tex
\begin{table*}[t]
\centering
\tiny
\resizebox{\textwidth}{!}{%
\begin{tabular}{lrrrrrrrrrrrrrrrrrrrr}
\toprule
Scenario & \#I & \multicolumn{2}{c}{$ASP_{\neg r, \neg d, \neg s}$} & \multicolumn{2}{c}{$ASP_{\neg r, \neg d, sp}$} & \multicolumn{2}{c}{$ASP_{\neg r, \neg d, s}$} & \multicolumn{2}{c}{$ASP_{rp, \neg d, \neg s}$} & \multicolumn{2}{c}{$ASP_{rp, \neg d, sp}$} & \multicolumn{2}{c}{$ASP_{rp, \neg d, s}$} & \multicolumn{2}{c}{$ASP_{r, \neg d, \neg s}$} & \multicolumn{2}{c}{$ASP_{r, \neg d, sp}$} & \multicolumn{2}{c}{$ASP_{r, \neg d, s}$} & Draws \\
 &  & \#W & \#S & \#W & \#S & \#W & \#S & \#W & \#S & \#W & \#S & \#W & \#S & \#W & \#S & \#W & \#S & \#W & \#S & \#D \\
\midrule
CE-5x5 & 40 & 0 & 1 & 0 & 3 & 0 & 3 & 0 & 2 & 0 & 4 & 2 & 4 & 0 & 8 & 5 & 12 & 1 & 11 & 0 \\
EA-3x3 & 40 & 0 & 0 & 0 & 0 & 0 & 0 & 0 & 0 & 0 & 0 & 0 & 0 & 0 & 4 & 0 & 4 & 0 & 4 & 1 \\
IND-4x10 & 40 & 0 & 0 & 0 & 0 & 0 & 0 & 0 & 0 & 0 & 1 & 0 & 1 & 0 & 4 & 0 & 8 & 2 & 7 & 0 \\
EUR-10x10 & 40 & 0 & 0 & 0 & 1 & 0 & 1 & 0 & 0 & 3 & 6 & 2 & 3 & 0 & 9 & 0 & 11 & 0 & 5 & 0 \\
USA-7x7 & 40 & 0 & 0 & 0 & 0 & 0 & 0 & 0 & 0 & 0 & 1 & 0 & 1 & 0 & 5 & 2 & 13 & 2 & 8 & 0 \\
\midrule
Overall & 200 & 0 & 1 & 0 & 4 & 0 & 4 & 0 & 2 & 3 & 12 & 4 & 9 & 0 & 30 & 7 & 48 & 5 & 35 & 1 \\
\addlinespace[0.6em]
Scenario & \#I & \multicolumn{2}{c}{$ASP_{\neg r, dp, \neg s}$} & \multicolumn{2}{c}{$ASP_{\neg r, dp, sp}$} & \multicolumn{2}{c}{$ASP_{\neg r, dp, s}$} & \multicolumn{2}{c}{$ASP_{rp, dp, \neg s}$} & \multicolumn{2}{c}{$ASP_{rp, dp, sp}$} & \multicolumn{2}{c}{$ASP_{rp, dp, s}$} & \multicolumn{2}{c}{$ASP_{r, dp, \neg s}$} & \multicolumn{2}{c}{$ASP_{r, dp, sp}$} & \multicolumn{2}{c}{$ASP_{r, dp, s}$} & Draws \\
 &  & \#W & \#S & \#W & \#S & \#W & \#S & \#W & \#S & \#W & \#S & \#W & \#S & \#W & \#S & \#W & \#S & \#W & \#S & \#D \\
\midrule
CE-5x5 & 40 & 0 & 10 & 0 & 14 & 1 & 14 & 0 & 12 & 1 & 22 & 1 & 17 & 0 & 11 & 9 & 25 & 0 & 12 & 0 \\
EA-3x3 & 40 & 0 & 3 & 0 & 3 & 1 & 3 & 0 & 3 & 0 & 3 & 0 & 2 & 0 & 4 & 1 & 6 & 1 & 4 & 1 \\
IND-4x10 & 40 & 0 & 3 & 0 & 9 & 0 & 8 & 0 & 4 & 11 & 13 & 6 & 11 & 0 & 5 & 1 & 9 & 0 & 7 & 0 \\
EUR-10x10 & 40 & 0 & 4 & 0 & 21 & 0 & 6 & 0 & 8 & \textbf{30} & \textbf{31} & 0 & 8 & 0 & 9 & 0 & 8 & 0 & 3 & 0 \\
USA-7x7 & 40 & 0 & 5 & 0 & 14 & 1 & 11 & 0 & 6 & \textbf{13} & 19 & 3 & 17 & 0 & 6 & 0 & 10 & 1 & 7 & 0 \\
\midrule
Overall & 200 & 0 & 25 & 0 & 61 & 3 & 42 & 0 & 33 & 55 & 88 & 10 & 55 & 0 & 35 & 11 & 58 & 2 & 33 & 1 \\
\addlinespace[0.6em]
Scenario & \#I & \multicolumn{2}{c}{$ASP_{\neg r, d, \neg s}$} & \multicolumn{2}{c}{$ASP_{\neg r, d, sp}$} & \multicolumn{2}{c}{$ASP_{\neg r, d, s}$} & \multicolumn{2}{c}{$ASP_{rp, d, \neg s}$} & \multicolumn{2}{c}{$ASP_{rp, d, sp}$} & \multicolumn{2}{c}{$ASP_{rp, d, s}$} & \multicolumn{2}{c}{$ASP_{r, d, \neg s}$} & \multicolumn{2}{c}{$ASP_{r, d, sp}$} & \multicolumn{2}{c}{$ASP_{r, d, s}$} & Draws \\
 &  & \#W & \#S & \#W & \#S & \#W & \#S & \#W & \#S & \#W & \#S & \#W & \#S & \#W & \#S & \#W & \#S & \#W & \#S & \#D \\
\midrule
CE-5x5 & 40 & 0 & 16 & 1 & 25 & 0 & 17 & 0 & 20 & \textbf{14} & \textbf{30} & 0 & 21 & 0 & 12 & 2 & 24 & 3 & 12 & 0 \\
EA-3x3 & 40 & 8 & 11 & 11 & \textbf{12} & 0 & 8 & 2 & 9 & \textbf{14} & 11 & 0 & \textbf{12} & 0 & 4 & 0 & 6 & 1 & 4 & 1 \\
IND-4x10 & 40 & 0 & 10 & 0 & 11 & 0 & 9 & 0 & 10 & \textbf{18} & \textbf{17} & 1 & 12 & 0 & 7 & 1 & 9 & 0 & 7 & 0 \\
EUR-10x10 & 40 & 0 & 19 & 0 & 18 & 0 & 8 & 1 & 24 & 3 & 26 & 1 & 9 & 0 & 9 & 0 & 8 & 0 & 2 & 0 \\
USA-7x7 & 40 & 0 & 16 & 0 & 19 & 0 & 12 & 5 & 20 & 10 & \textbf{25} & 1 & 16 & 0 & 7 & 1 & 10 & 1 & 7 & 0 \\
\midrule
Overall & 200 & 8 & 72 & 12 & 85 & 0 & 54 & 8 & 83 & \textbf{59} & \textbf{109} & 3 & 70 & 0 & 39 & 4 & 57 & 5 & 32 & 1 \\
\bottomrule
\end{tabular}
}
\caption{Tournament wins by ASP scenario (lexicographic: overload, delay, sectors, reroutes, reconfigs).}
\label{tab:tournament-wins-asp}
\end{table*}

\begin{table}[t]
\centering
\scriptsize
\resizebox{\textwidth}{!}{%
\begin{tabular}{lrrrrrrrr}
\toprule
Scenario & Overload [\#] & Arrival Delay [h] & Sector-Number [\#] & Sector-Diff [\#] & Reroute [\#] & Reconfig [\#] & Execution-Time [s] & RAM-Usage [MB] \\
\midrule
East Asia (3x3) & $116.8 \pm 2.9$ & $155.6 \pm 5.5$ & $294.7 \pm 1.4$ & $42.1 \pm 1.2$ & $34.2 \pm 1.0$ & $73.0 \pm 1.9$ & $1623.950 \pm 16.201$ & $453 \pm 11$ \\
Central Europe (5x5) & $30.6 \pm 1.3$ & $93.5 \pm 3.8$ & $391.4 \pm 4.8$ & $152.1 \pm 3.8$ & $35.1 \pm 0.9$ & $274.8 \pm 6.4$ & $1589.719 \pm 17.455$ & $628 \pm 20$ \\
India Subcontinent (4x10) & $88.8 \pm 2.7$ & $137.4 \pm 5.3$ & $552.1 \pm 7.2$ & $305.9 \pm 8.0$ & $36.9 \pm 1.0$ & $483.1 \pm 11.1$ & $1617.627 \pm 16.295$ & $1690 \pm 72$ \\
USA East Coast (7x7) & $58.7 \pm 2.3$ & $124.7 \pm 5.5$ & $601.8 \pm 8.5$ & $416.7 \pm 10.8$ & $35.0 \pm 1.0$ & $648.3 \pm 15.1$ & $1614.960 \pm 16.388$ & $2068 \pm 95$ \\
Major Europe (10x10) & $55.0 \pm 2.2$ & $108.1 \pm 5.2$ & $1154.8 \pm 23.1$ & $794.0 \pm 22.7$ & $30.7 \pm 1.0$ & $1338.7 \pm 34.1$ & $1594.361 \pm 16.714$ & $6350 \pm 302$ \\
\bottomrule
\end{tabular}
}
\caption{Results aggregated by scenario (across all instances and options)}
\label{tab:summary-by-scenario}
\end{table}

\begin{table}[t]
\centering
\scriptsize
\resizebox{\textwidth}{!}{%
\begin{tabular}{lrrrrrrrr}
\toprule
Instance Size & Overload [\#] & Arrival Delay [h] & Sector-Number [\#] & Sector-Diff [\#] & Reroute [\#] & Reconfig [\#] & Execution-Time [s] & RAM-Usage [MB] \\
\midrule
10 & $0.8 \pm 0.1$ & $3.4 \pm 0.3$ & $498.2 \pm 14.1$ & $354.2 \pm 19.0$ & $3.5 \pm 0.1$ & $507.7 \pm 26.2$ & $1205.494 \pm 36.230$ & $550 \pm 29$ \\
20 & $4.5 \pm 0.4$ & $20.3 \pm 1.3$ & $559.4 \pm 18.8$ & $378.1 \pm 21.3$ & $9.9 \pm 0.3$ & $562.4 \pm 29.7$ & $1384.554 \pm 31.683$ & $883 \pm 62$ \\
30 & $16.1 \pm 0.9$ & $51.8 \pm 2.9$ & $577.4 \pm 19.3$ & $357.0 \pm 19.9$ & $17.5 \pm 0.4$ & $561.1 \pm 29.5$ & $1606.870 \pm 23.702$ & $1181 \pm 91$ \\
40 & $36.4 \pm 1.6$ & $89.0 \pm 4.0$ & $589.9 \pm 19.5$ & $346.7 \pm 19.4$ & $25.7 \pm 0.6$ & $551.4 \pm 29.2$ & $1666.500 \pm 20.139$ & $1533 \pm 124$ \\
50 & $54.2 \pm 2.1$ & $113.0 \pm 5.2$ & $602.5 \pm 21.7$ & $315.2 \pm 18.9$ & $31.6 \pm 0.8$ & $562.0 \pm 31.9$ & $1684.741 \pm 18.667$ & $2285 \pm 222$ \\
60 & $76.1 \pm 2.7$ & $145.1 \pm 6.3$ & $613.5 \pm 21.9$ & $319.7 \pm 19.2$ & $38.2 \pm 1.0$ & $563.7 \pm 31.8$ & $1693.949 \pm 17.846$ & $2655 \pm 257$ \\
70 & $98.5 \pm 3.0$ & $164.5 \pm 7.0$ & $598.3 \pm 20.0$ & $311.4 \pm 18.5$ & $45.2 \pm 1.2$ & $544.8 \pm 29.5$ & $1711.340 \pm 16.577$ & $2539 \pm 207$ \\
80 & $119.1 \pm 3.5$ & $206.8 \pm 9.1$ & $626.8 \pm 22.4$ & $315.4 \pm 19.6$ & $52.3 \pm 1.4$ & $552.0 \pm 32.0$ & $1707.297 \pm 16.581$ & $3470 \pm 308$ \\
90 & $143.1 \pm 3.8$ & $218.5 \pm 9.1$ & $605.6 \pm 21.3$ & $312.0 \pm 19.4$ & $60.2 \pm 1.6$ & $536.2 \pm 30.5$ & $1712.087 \pm 16.230$ & $3321 \pm 294$ \\
100 & $170.2 \pm 4.3$ & $251.8 \pm 10.9$ & $617.4 \pm 22.1$ & $314.2 \pm 20.4$ & $66.0 \pm 1.8$ & $539.0 \pm 31.7$ & $1708.402 \pm 16.312$ & $3961 \pm 345$ \\
\bottomrule
\end{tabular}
}
\caption{Results aggregated by instance size (across all scenarios and options)}
\label{tab:summary-by-size}
\end{table}

%% file: appendix_heatmaps.tex
\begin{figure}
    \centering
    \begin{subfigure}{0.5\textwidth}
        \includegraphics[width=8cm]{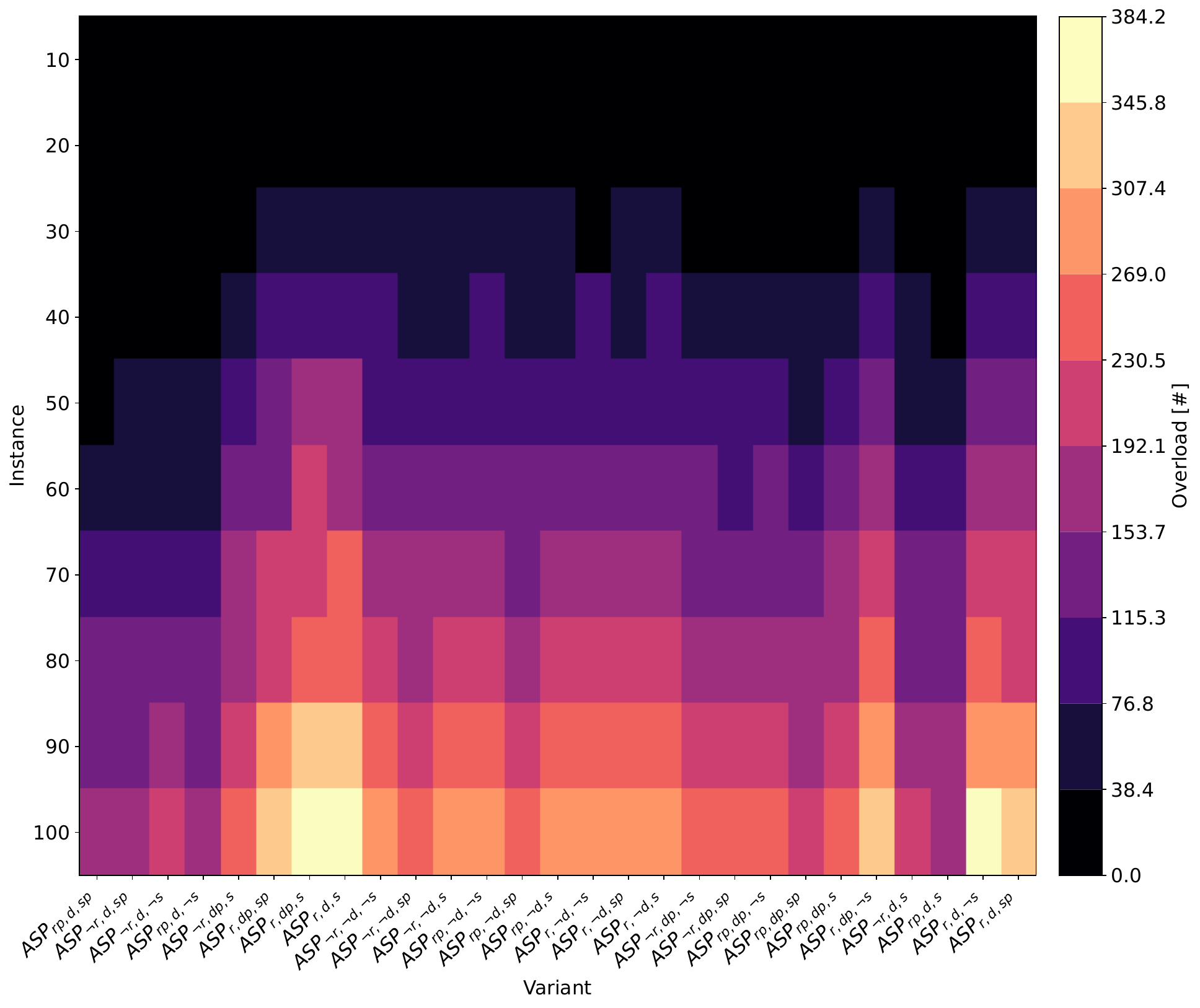}
        \caption{Overload.}        
    \end{subfigure}
    \begin{subfigure}{0.49\textwidth}
        \includegraphics[width=8cm]{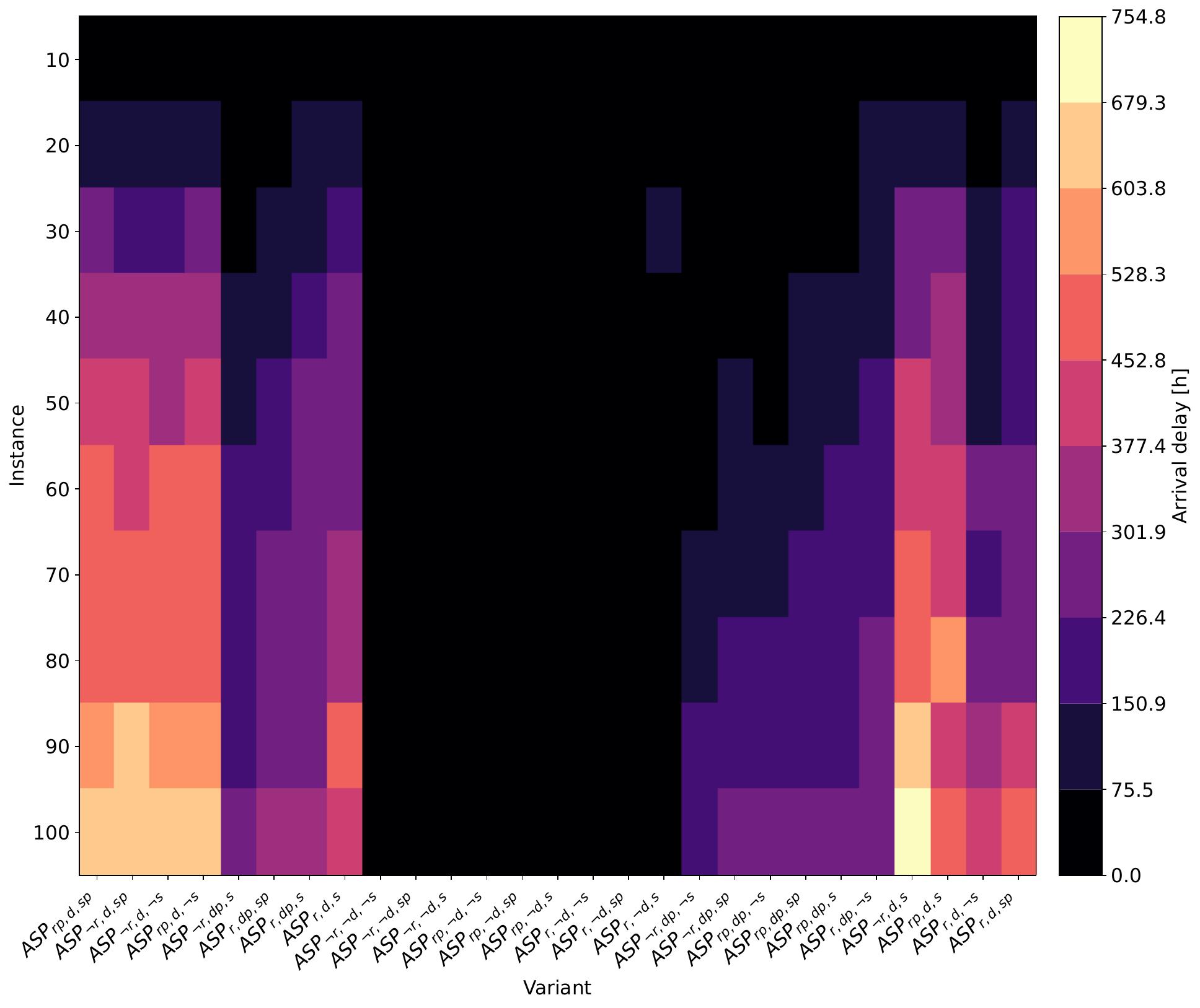}
        \caption{Arrival Delay.}        
    \end{subfigure}
    
    \begin{subfigure}{0.5\textwidth}
        \includegraphics[width=8cm]{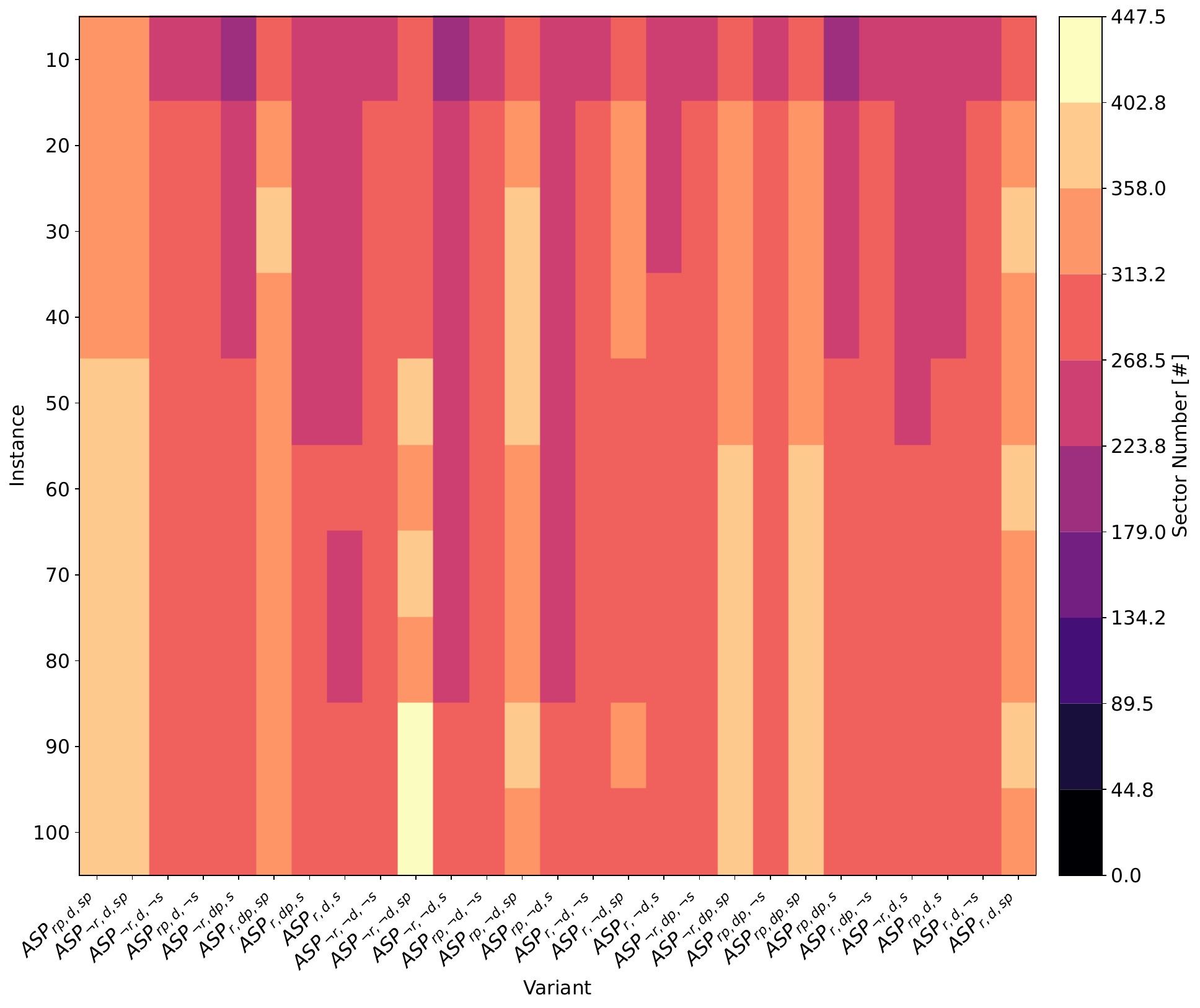}
        \caption{Sector Number.}        
    \end{subfigure}
    \begin{subfigure}{0.49\textwidth}
        \includegraphics[width=8cm]{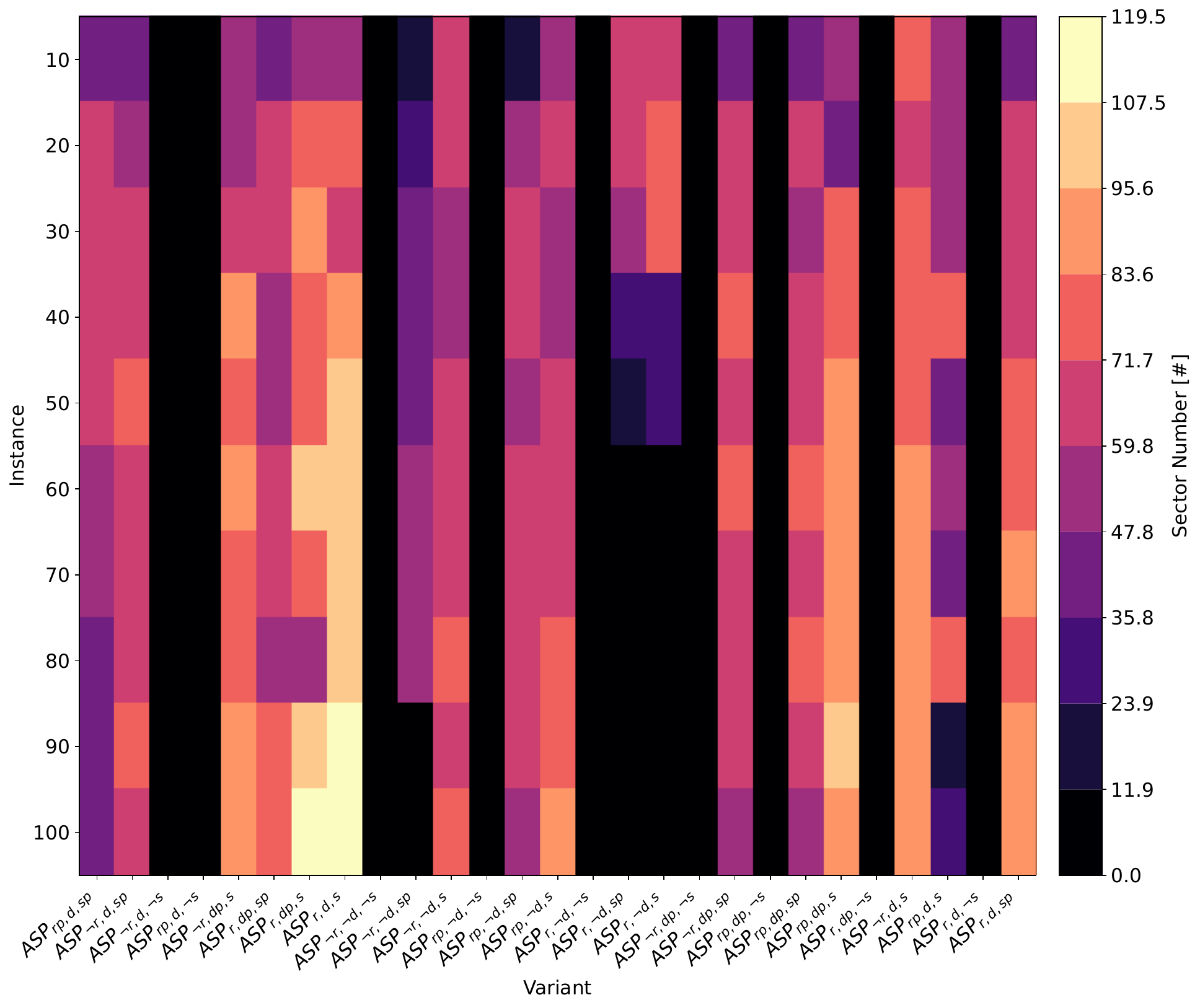}
        \caption{Sector Diff.}        
    \end{subfigure}
    
    \begin{subfigure}{0.5\textwidth}
        \includegraphics[width=8cm]{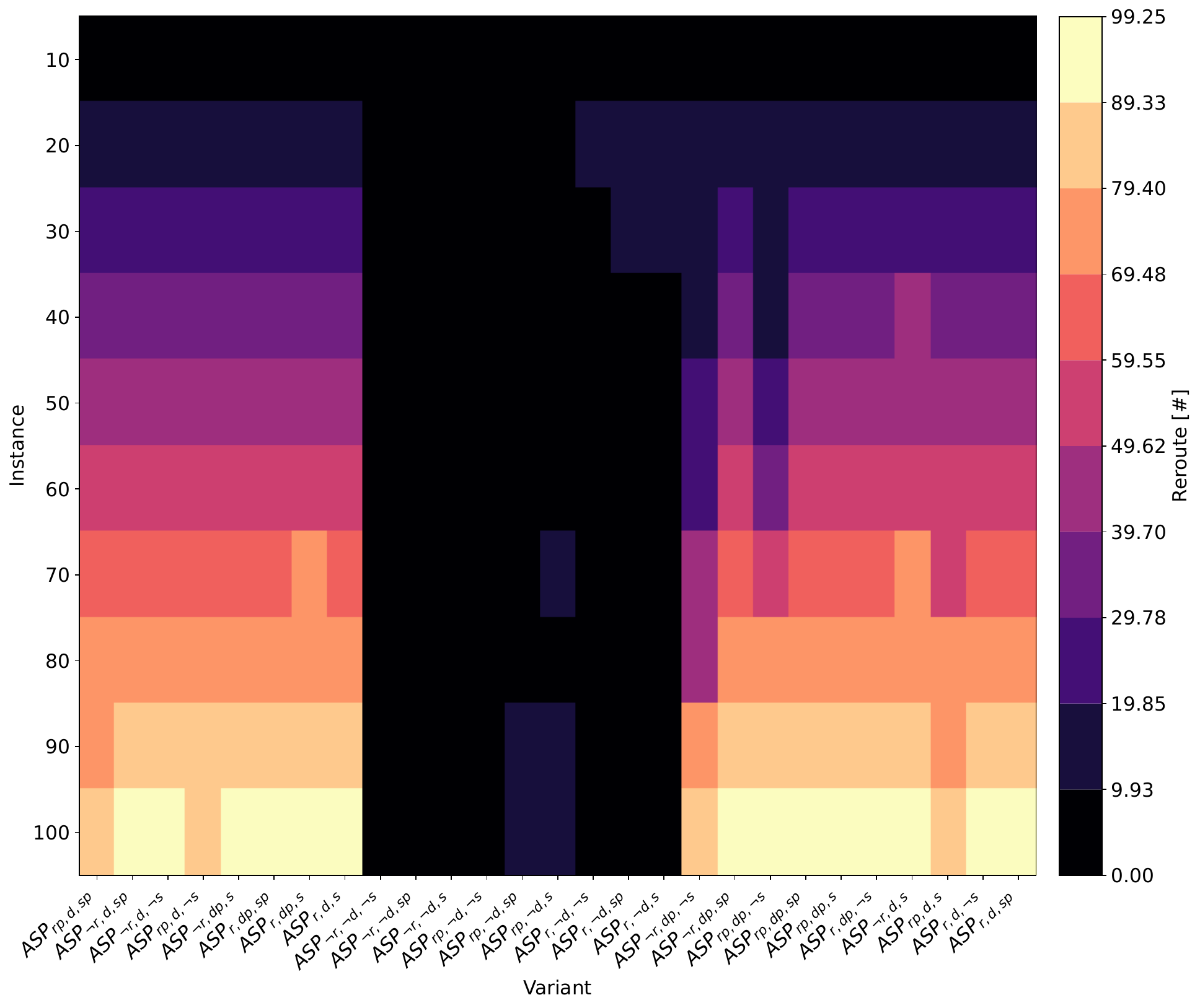}
        \caption{Reroute.}        
    \end{subfigure}
    \begin{subfigure}{0.49\textwidth}
        \includegraphics[width=8cm]{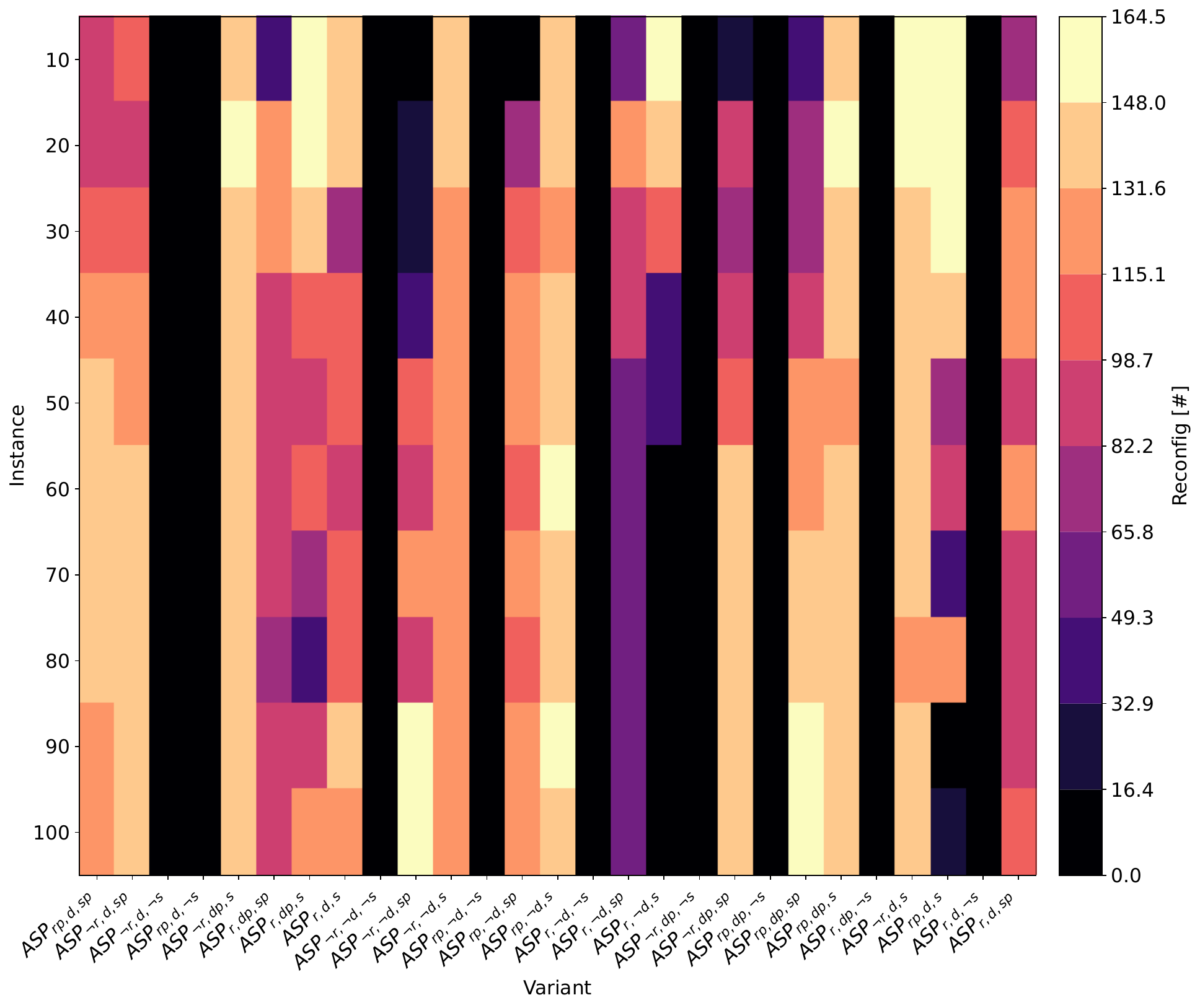}
        \caption{Reconfig.}        
    \end{subfigure}
    \caption{East Asia (3x3)}
    \label{fig:heat-maps-east-asia}
\end{figure}

\begin{figure}
    \centering
    \begin{subfigure}{0.5\textwidth}
        \includegraphics[width=8cm]{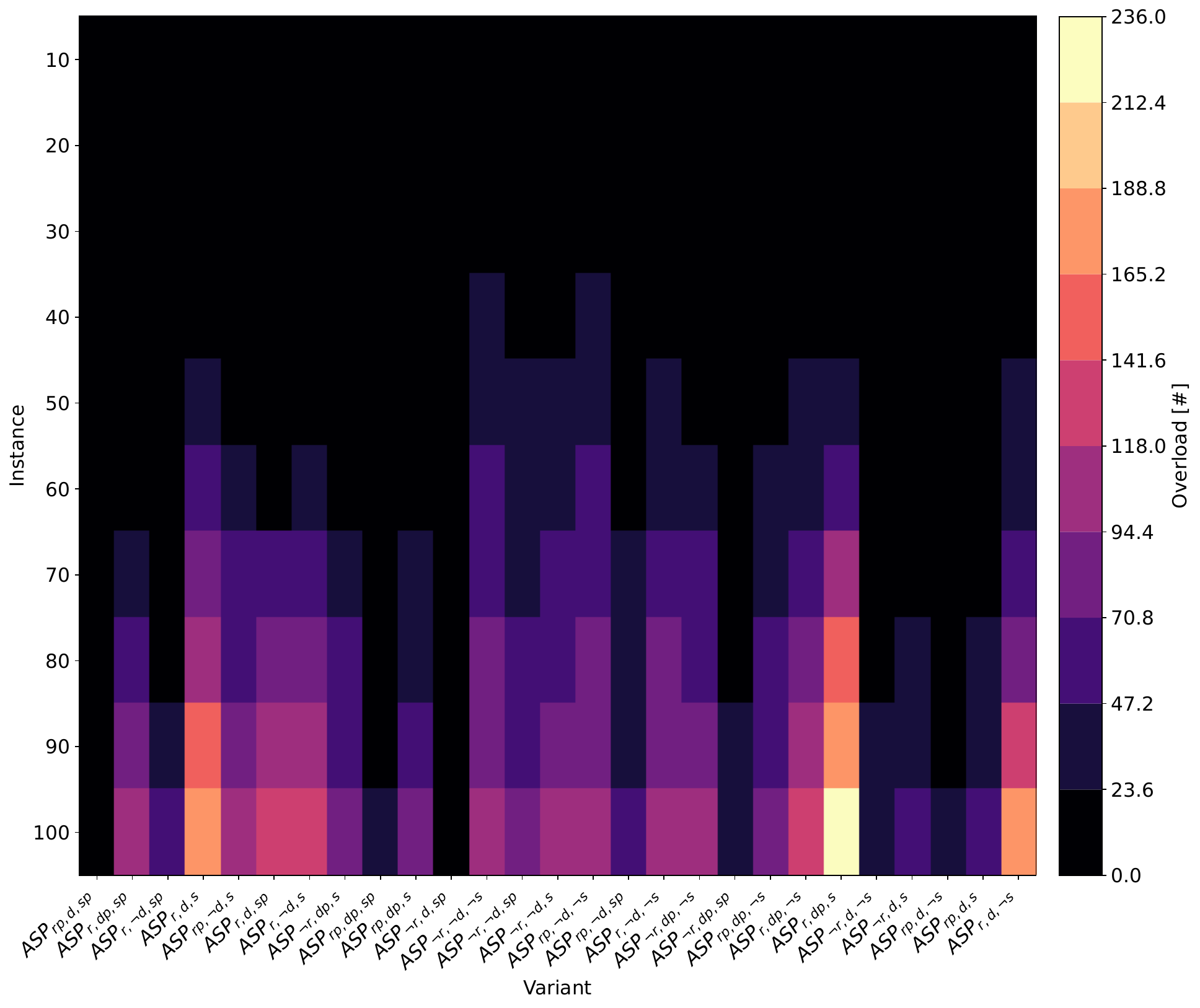}
        \caption{Overload.}        
    \end{subfigure}
    \begin{subfigure}{0.49\textwidth}
        \includegraphics[width=8cm]{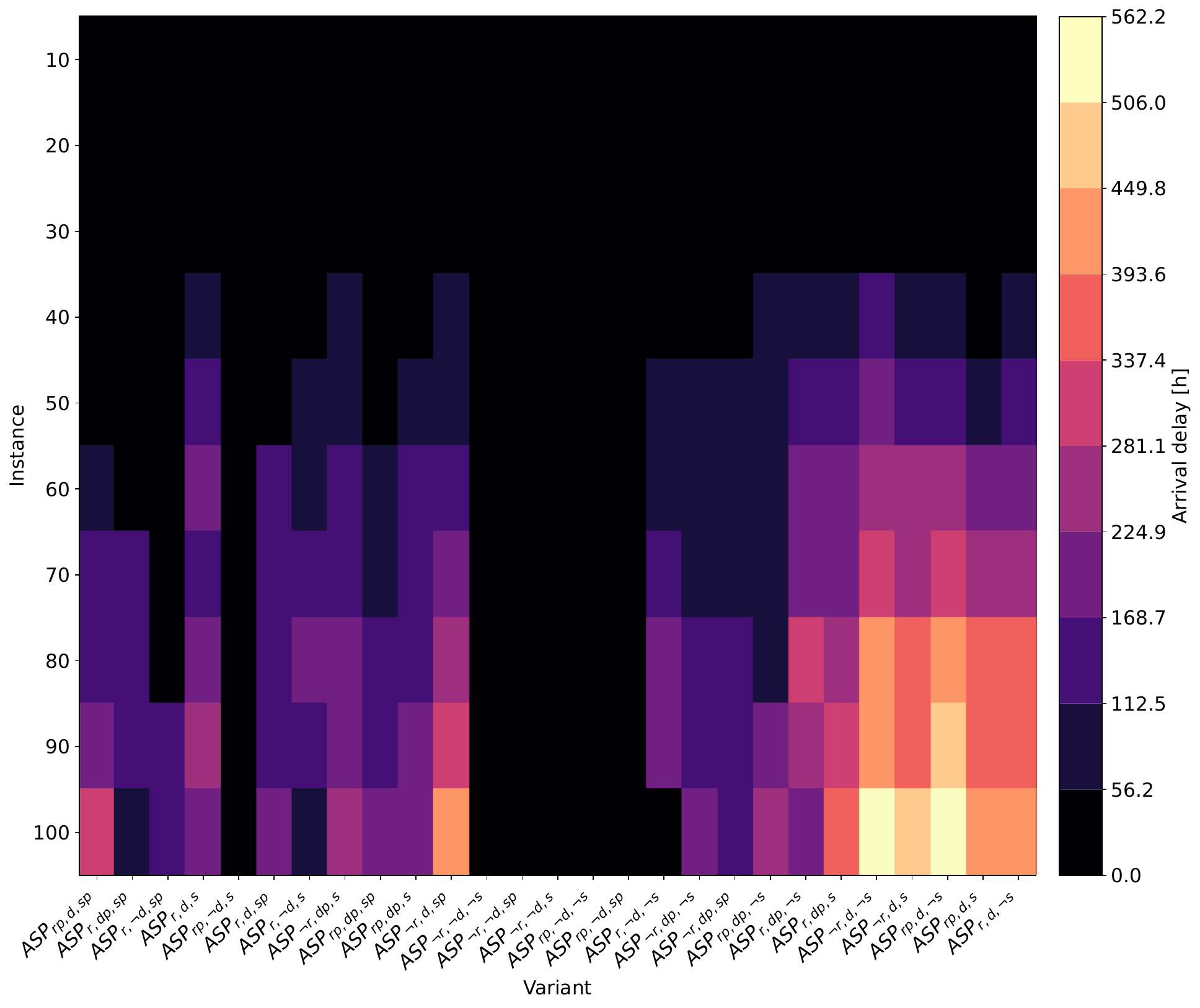}
        \caption{Arrival Delay.}        
    \end{subfigure}
    
    \begin{subfigure}{0.5\textwidth}
        \includegraphics[width=8cm]{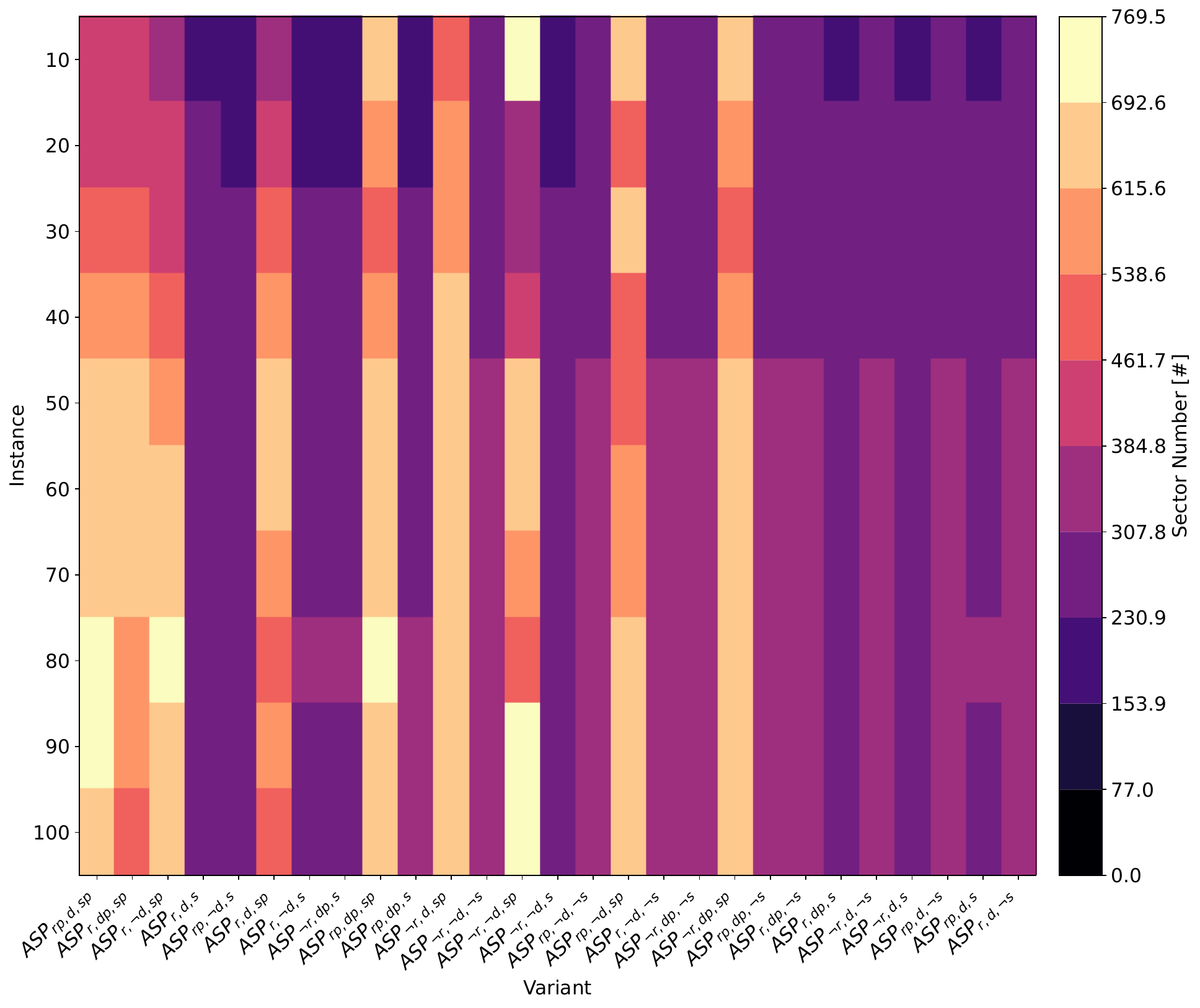}
        \caption{Sector Number.}        
    \end{subfigure}
    \begin{subfigure}{0.49\textwidth}
        \includegraphics[width=8cm]{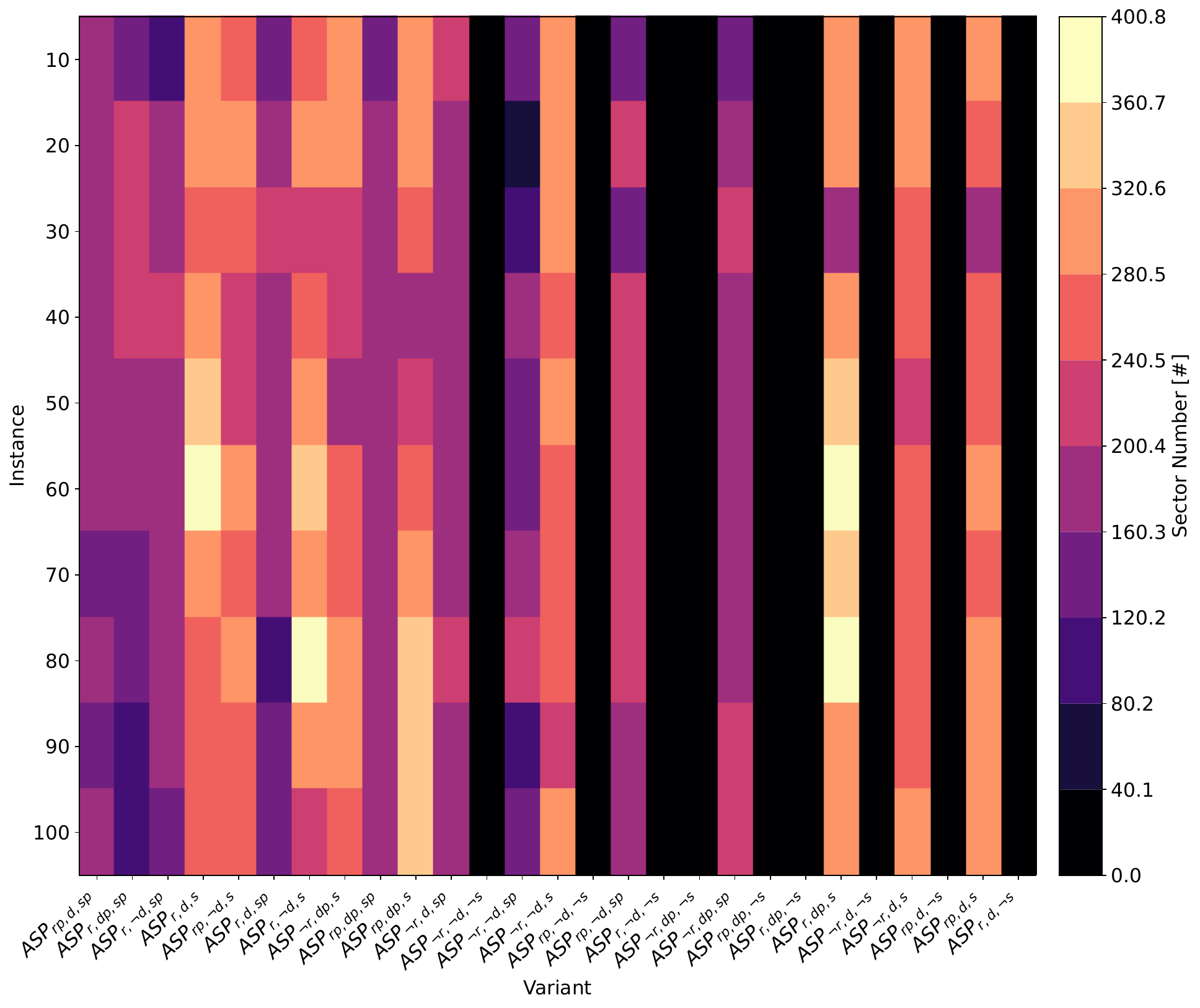}
        \caption{Sector Diff.}        
    \end{subfigure}
    
    \begin{subfigure}{0.5\textwidth}
        \includegraphics[width=8cm]{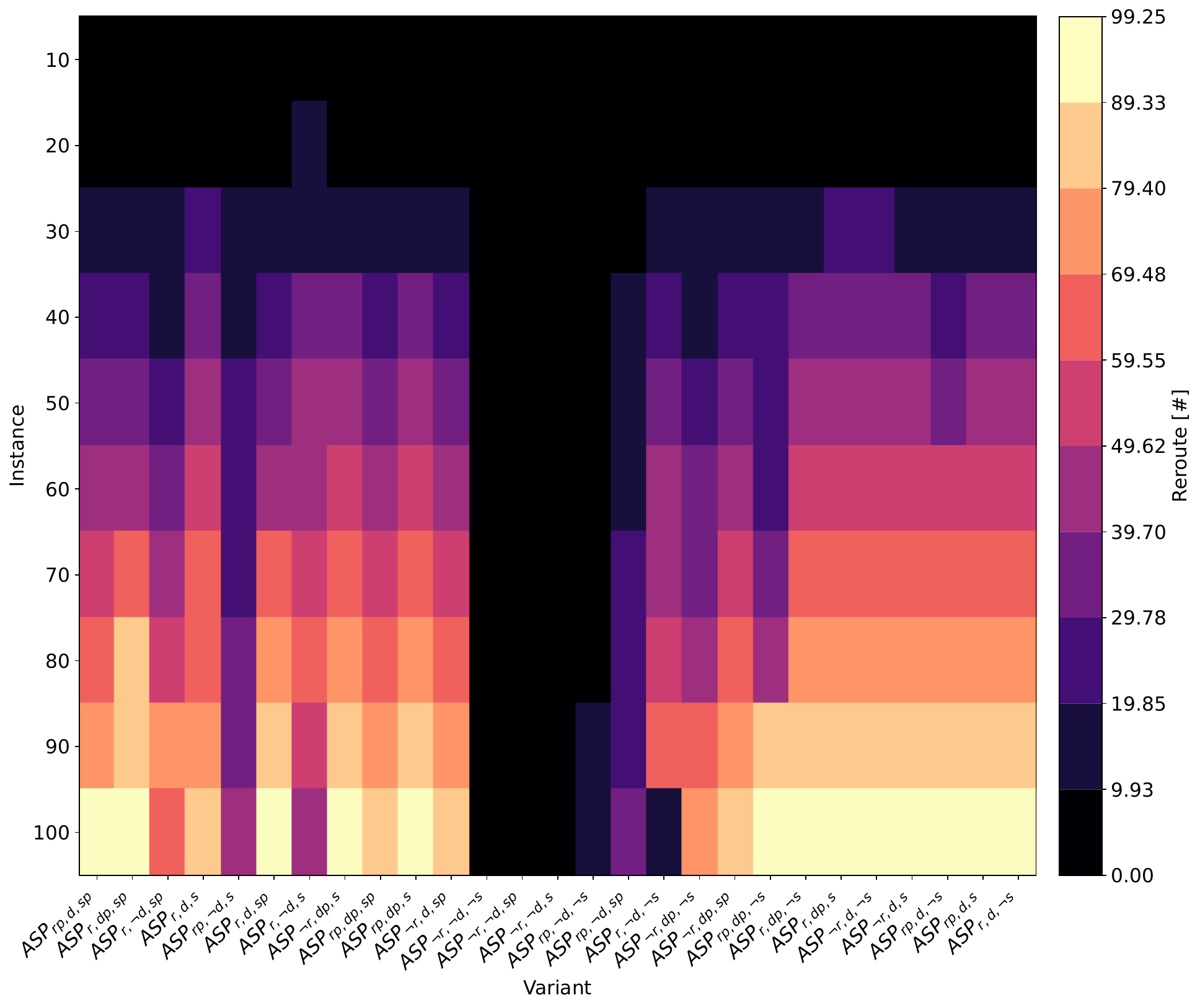}
        \caption{Reroute.}        
    \end{subfigure}
    \begin{subfigure}{0.49\textwidth}
        \includegraphics[width=8cm]{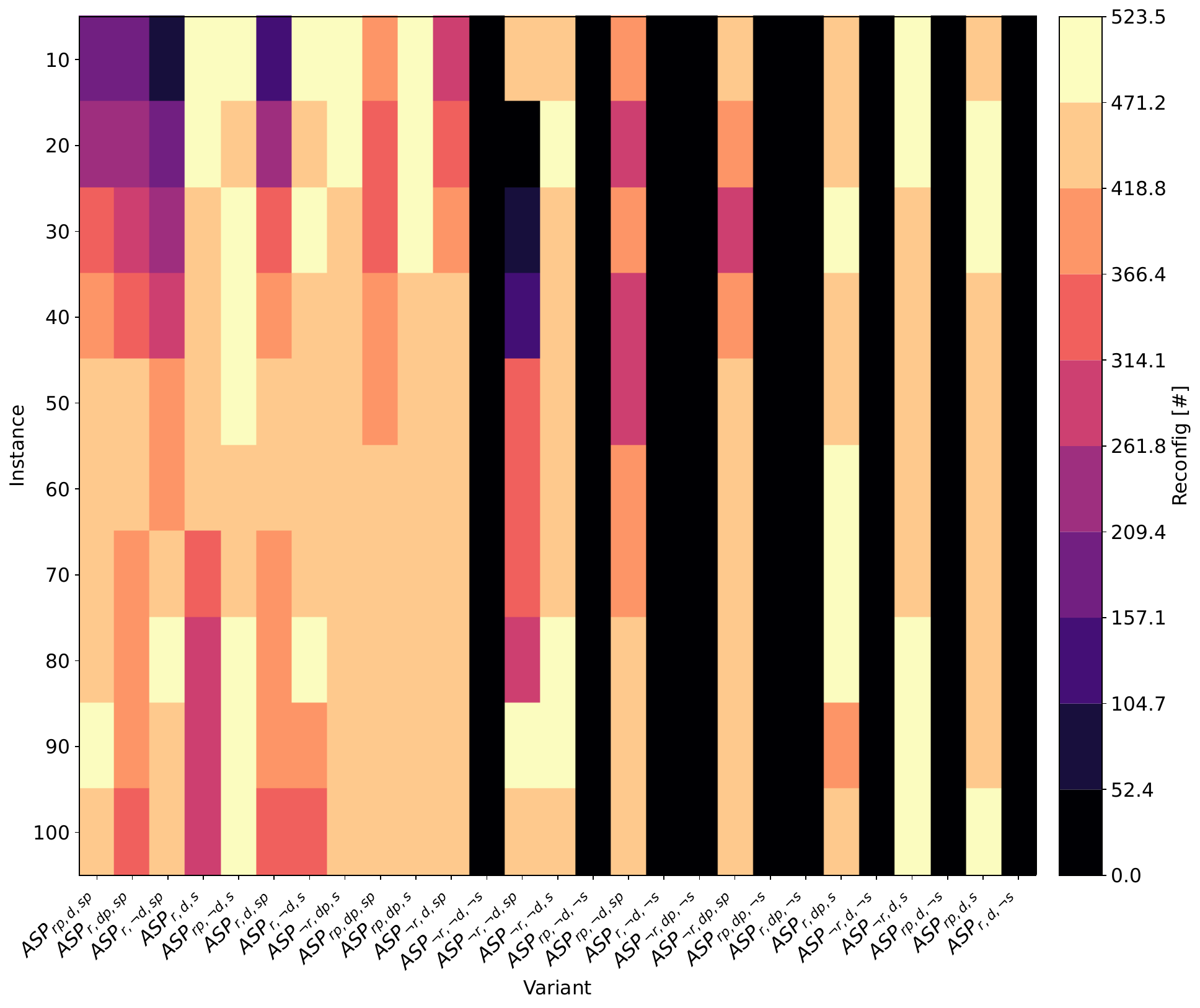}
        \caption{Reconfig.}        
    \end{subfigure}
    \caption{Central Europe (5x5)}
    \label{fig:heatmaps-central-europe}
\end{figure}

\begin{figure}
    \centering
    \begin{subfigure}{0.5\textwidth}
        \includegraphics[width=8cm]{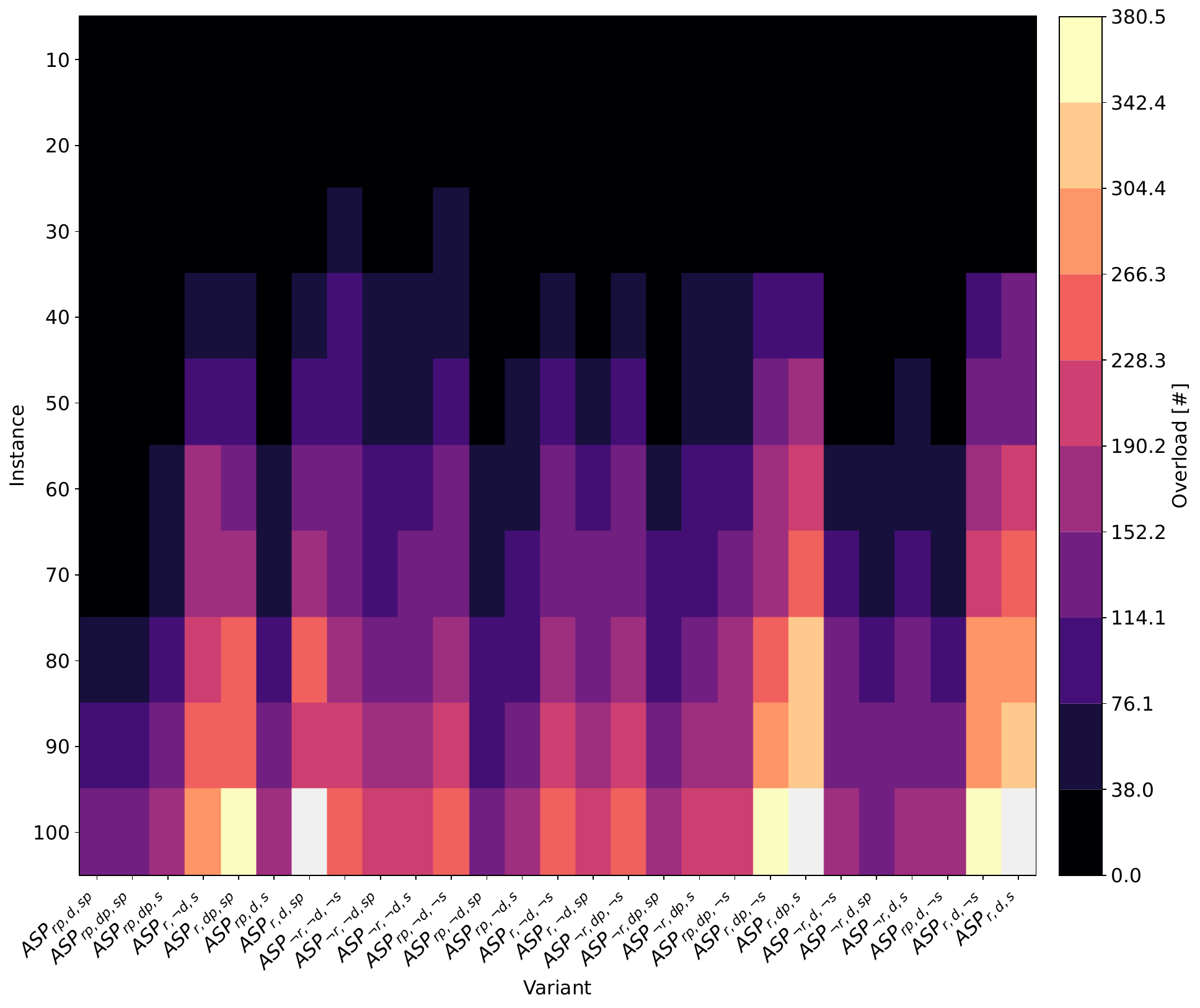}
        \caption{Overload.}        
    \end{subfigure}
    \begin{subfigure}{0.49\textwidth}
        \includegraphics[width=8cm]{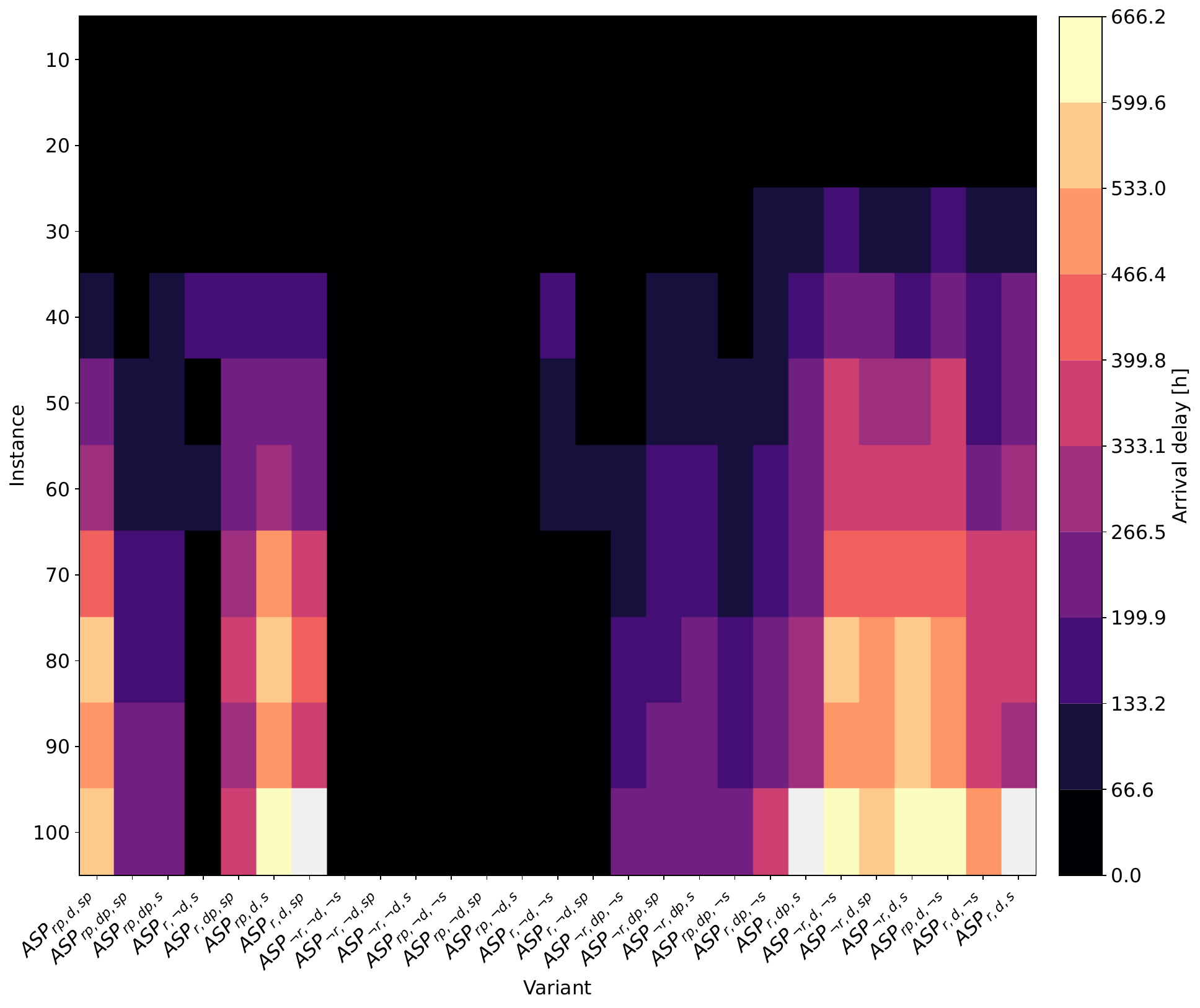}
        \caption{Arrival Delay.}        
    \end{subfigure}
    
    \begin{subfigure}{0.5\textwidth}
        \includegraphics[width=8cm]{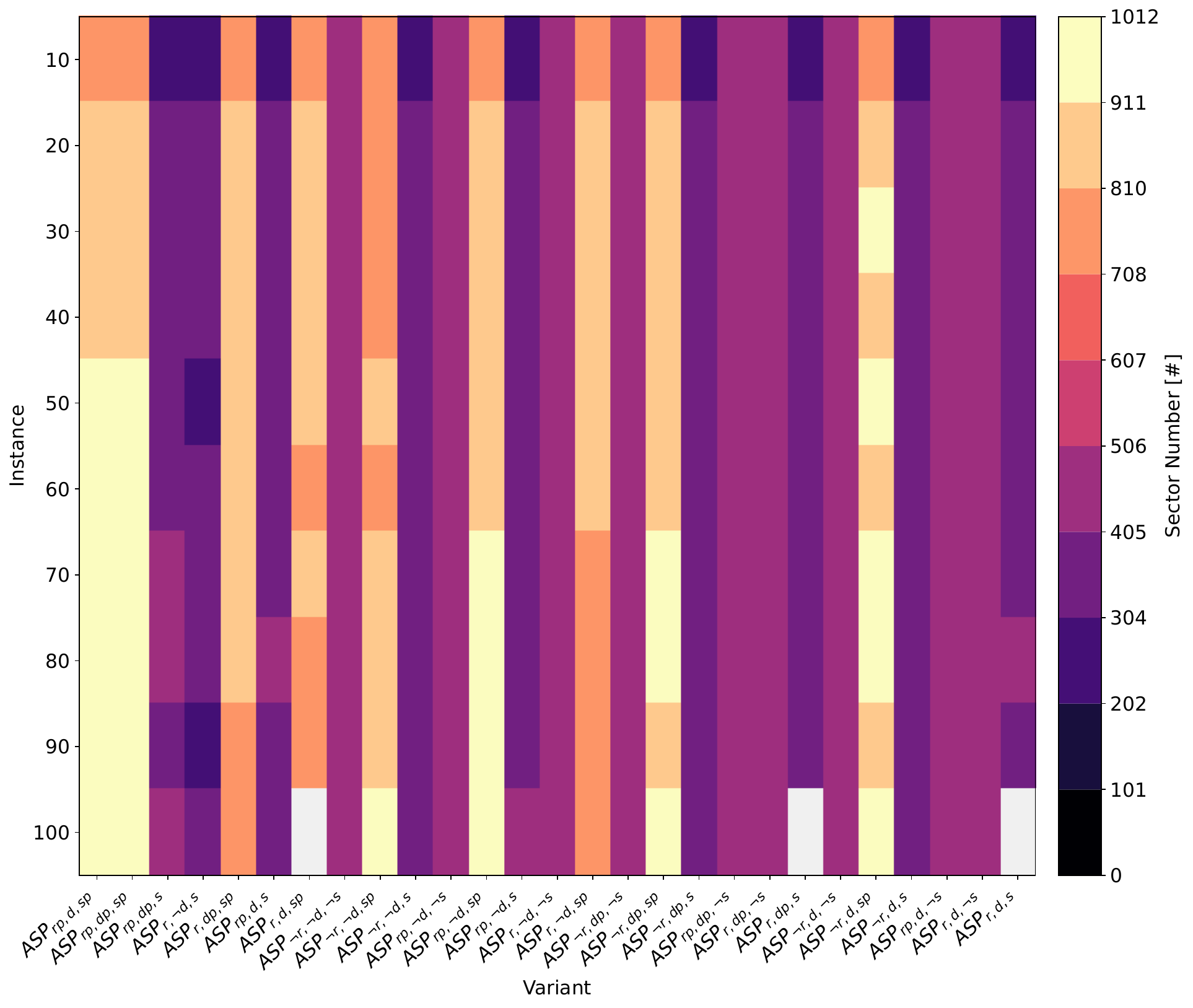}
        \caption{Sector Number.}        
    \end{subfigure}
    \begin{subfigure}{0.49\textwidth}
        \includegraphics[width=8cm]{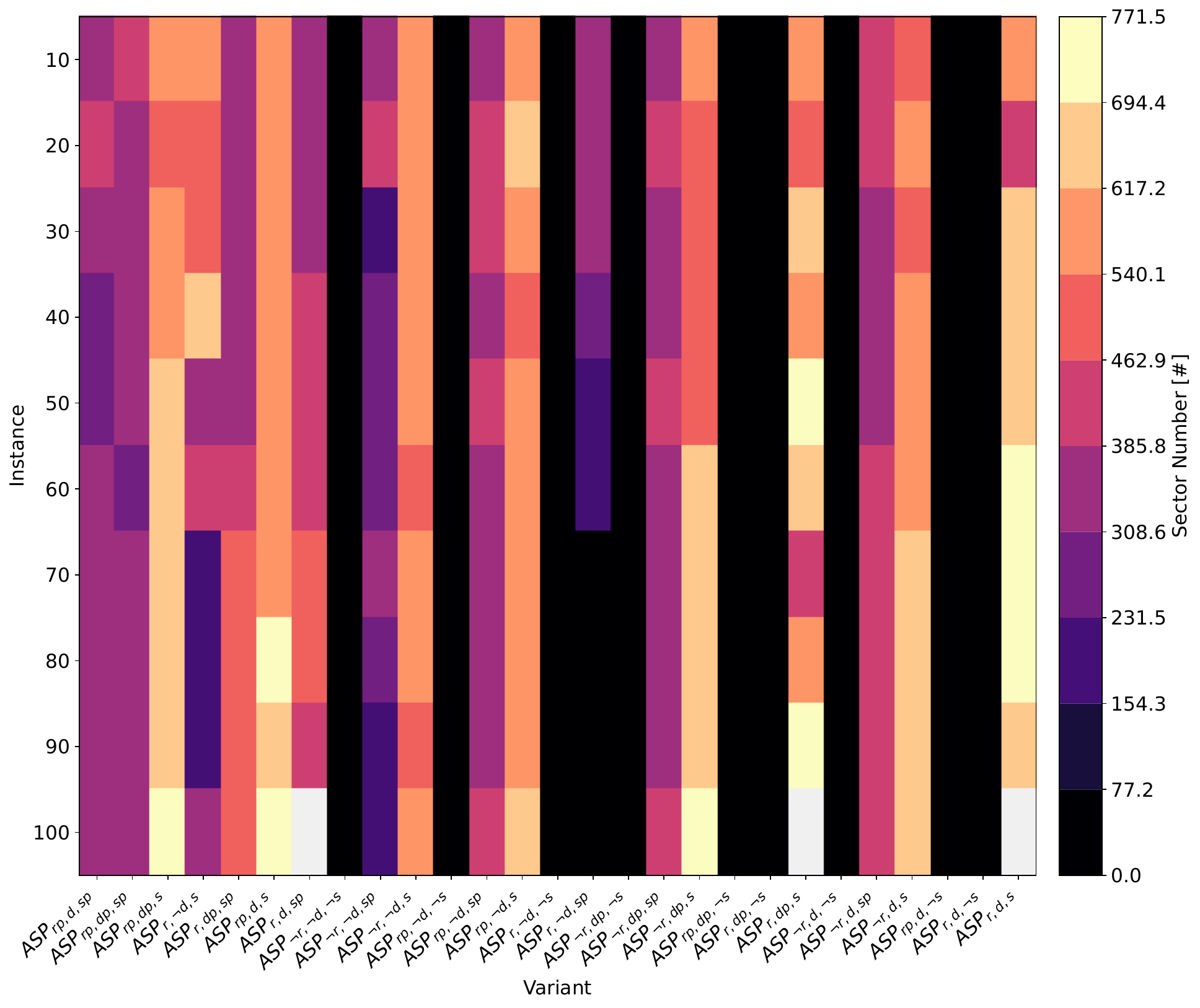}
        \caption{Sector Diff.}        
    \end{subfigure}
    
    \begin{subfigure}{0.5\textwidth}
        \includegraphics[width=8cm]{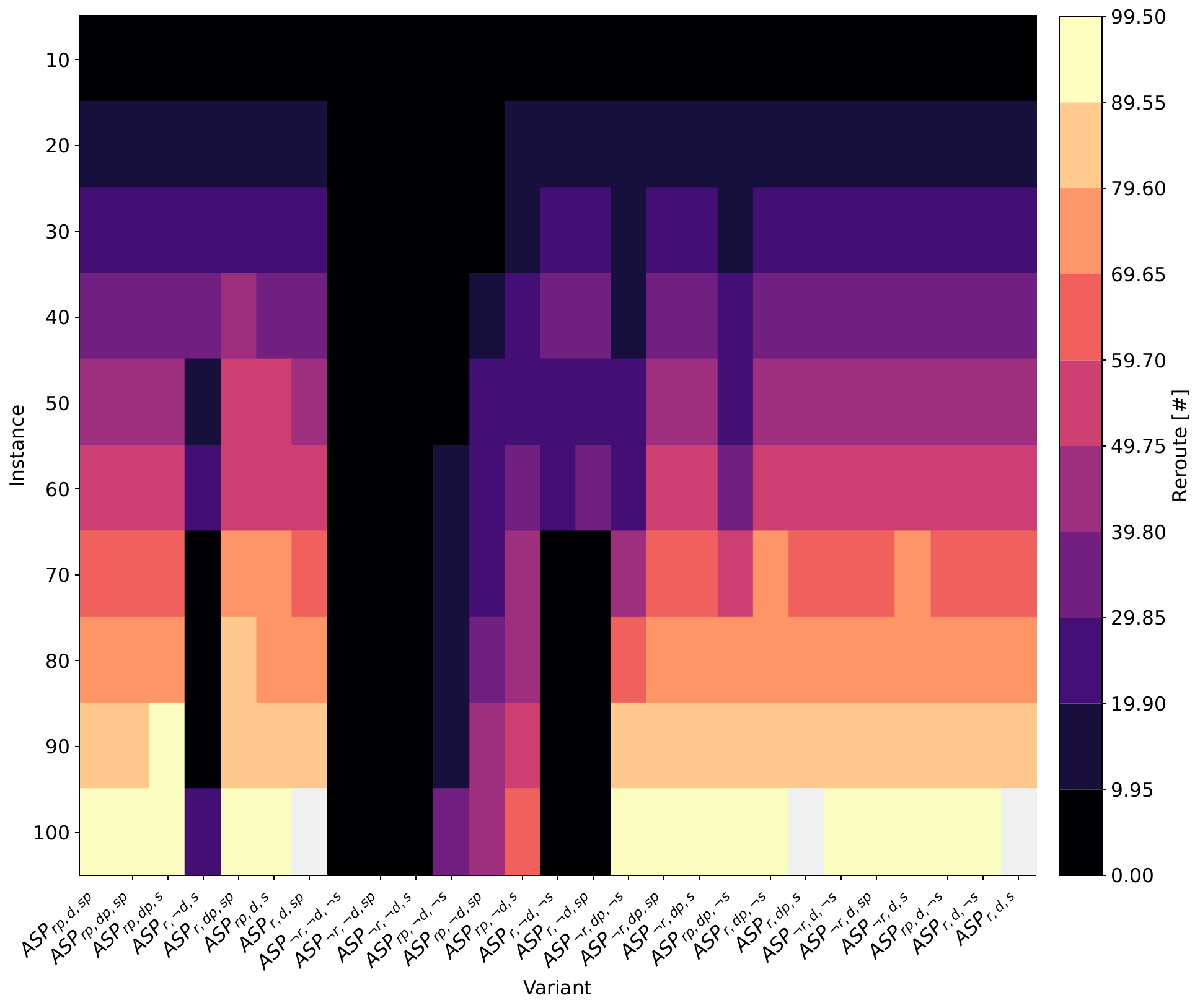}
        \caption{Reroute.}        
    \end{subfigure}
    \begin{subfigure}{0.49\textwidth}
        \includegraphics[width=8cm]{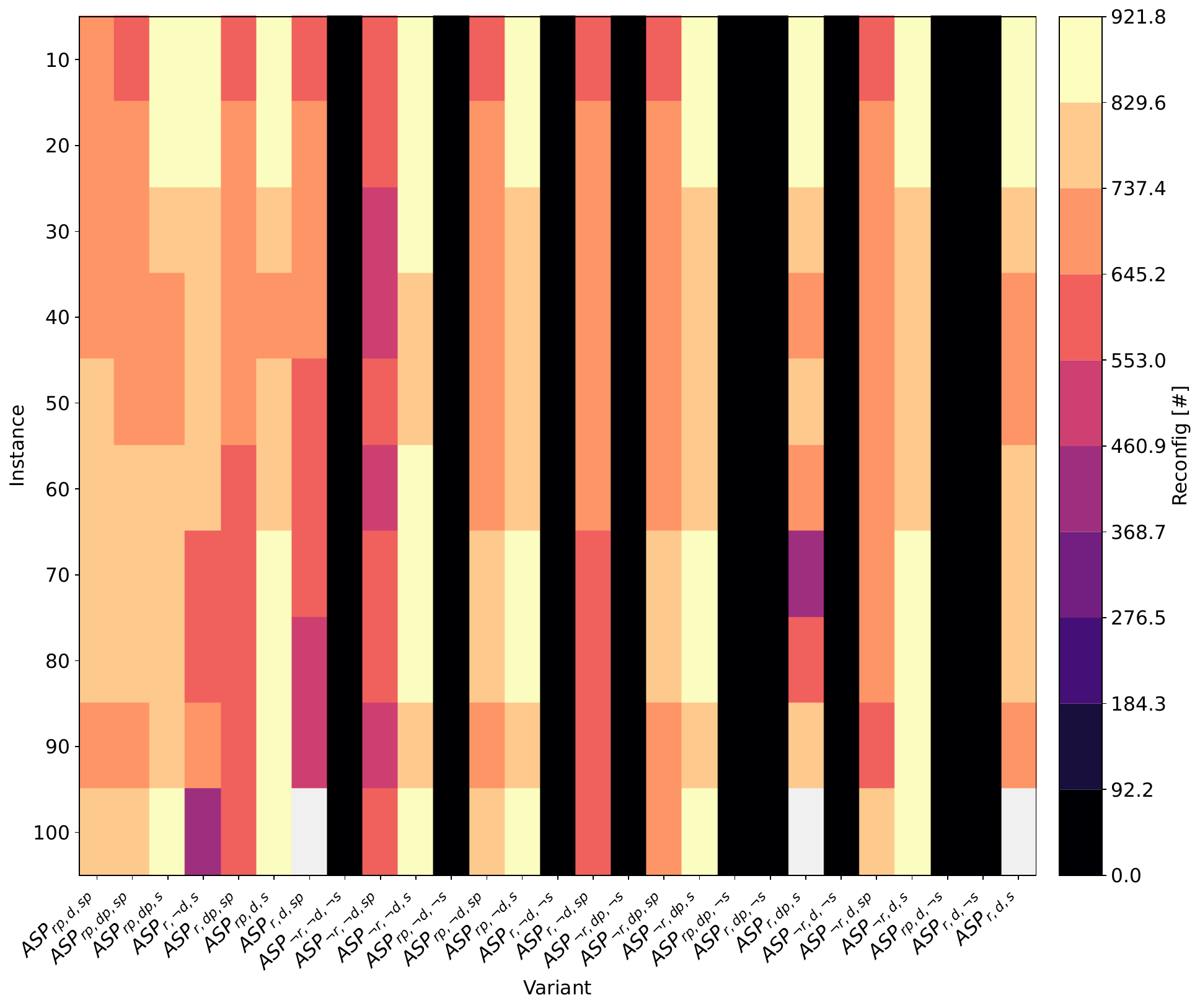}
        \caption{Reconfig.}        
    \end{subfigure}
    \caption{India (4x10)}
    \label{fig:heatmaps-india}
\end{figure}

\begin{figure}
    \centering
    \begin{subfigure}{0.5\textwidth}
        \includegraphics[width=8cm]{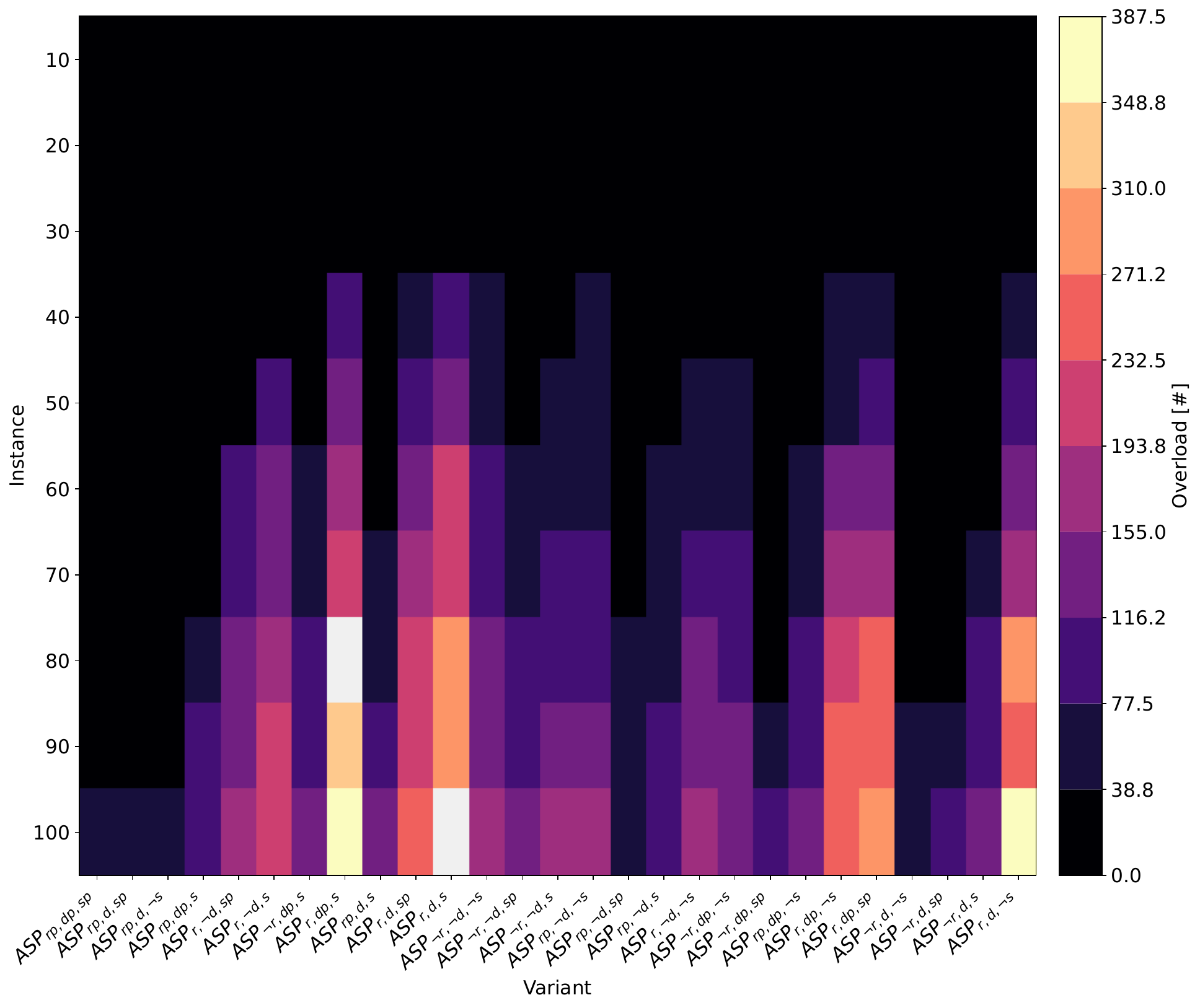}
        \caption{Overload.}        
    \end{subfigure}
    \begin{subfigure}{0.49\textwidth}
        \includegraphics[width=8cm]{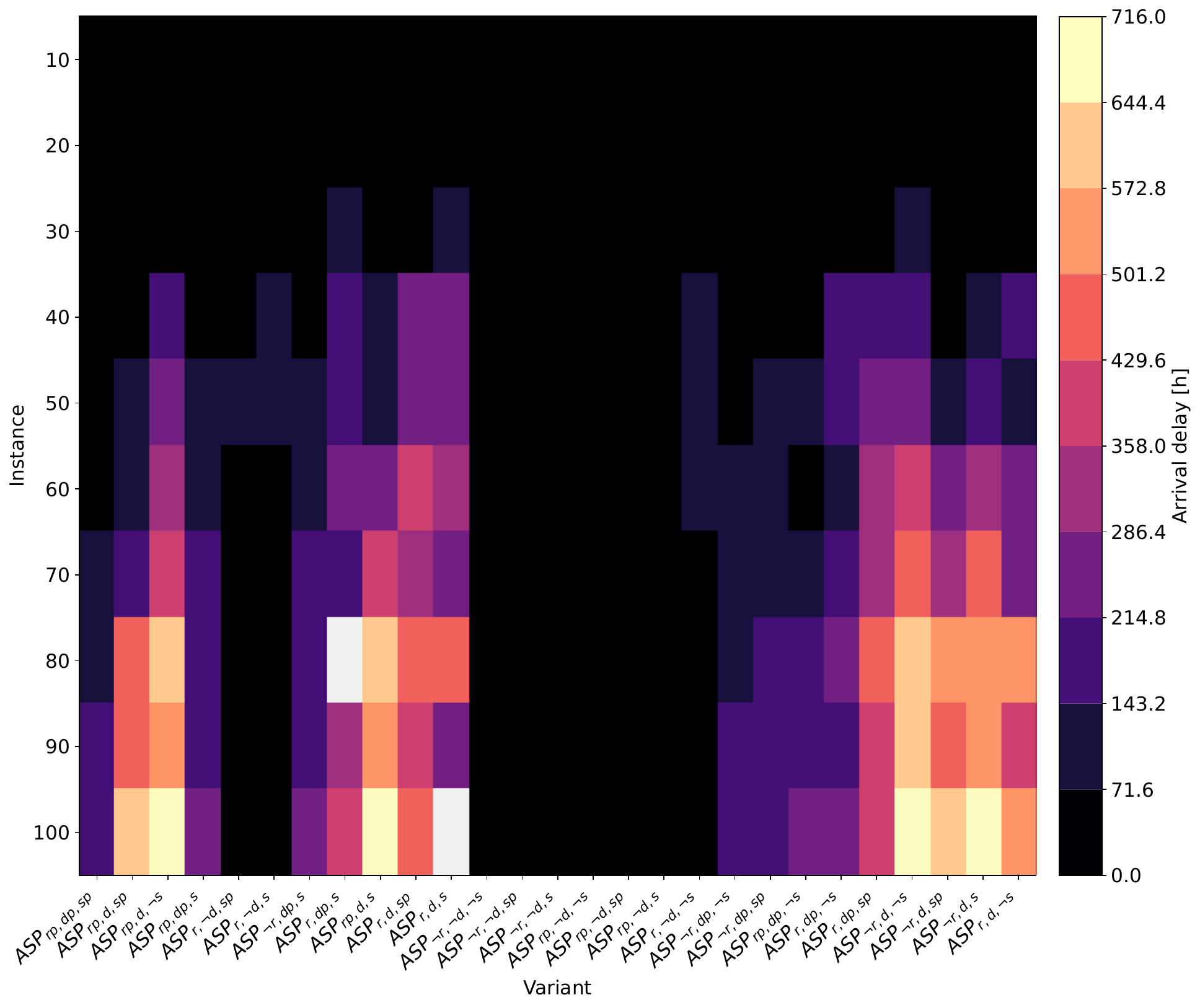}
        \caption{Arrival Delay.}        
    \end{subfigure}
    
    \begin{subfigure}{0.5\textwidth}
        \includegraphics[width=8cm]{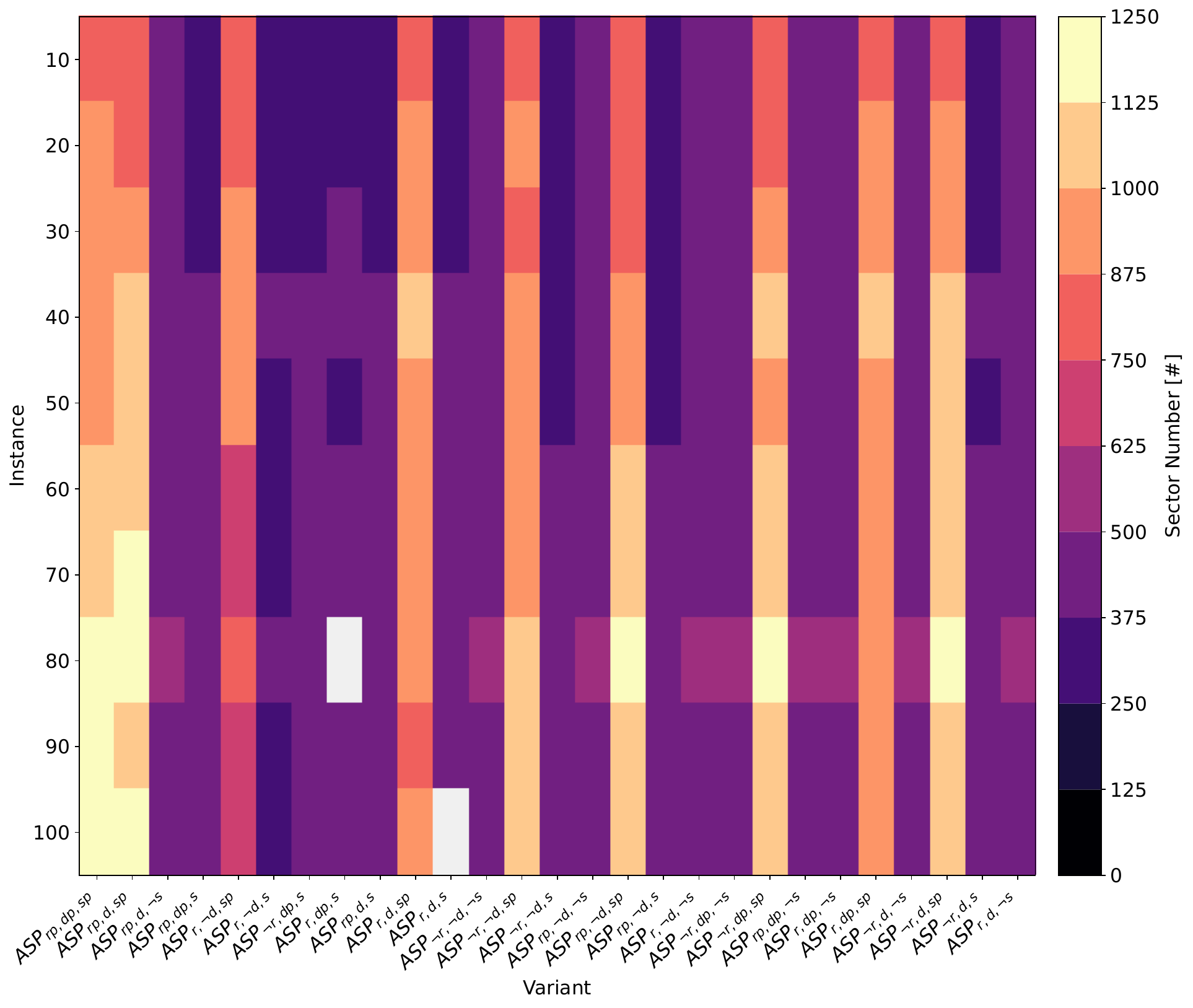}
        \caption{Sector Number.}        
    \end{subfigure}
    \begin{subfigure}{0.49\textwidth}
        \includegraphics[width=8cm]{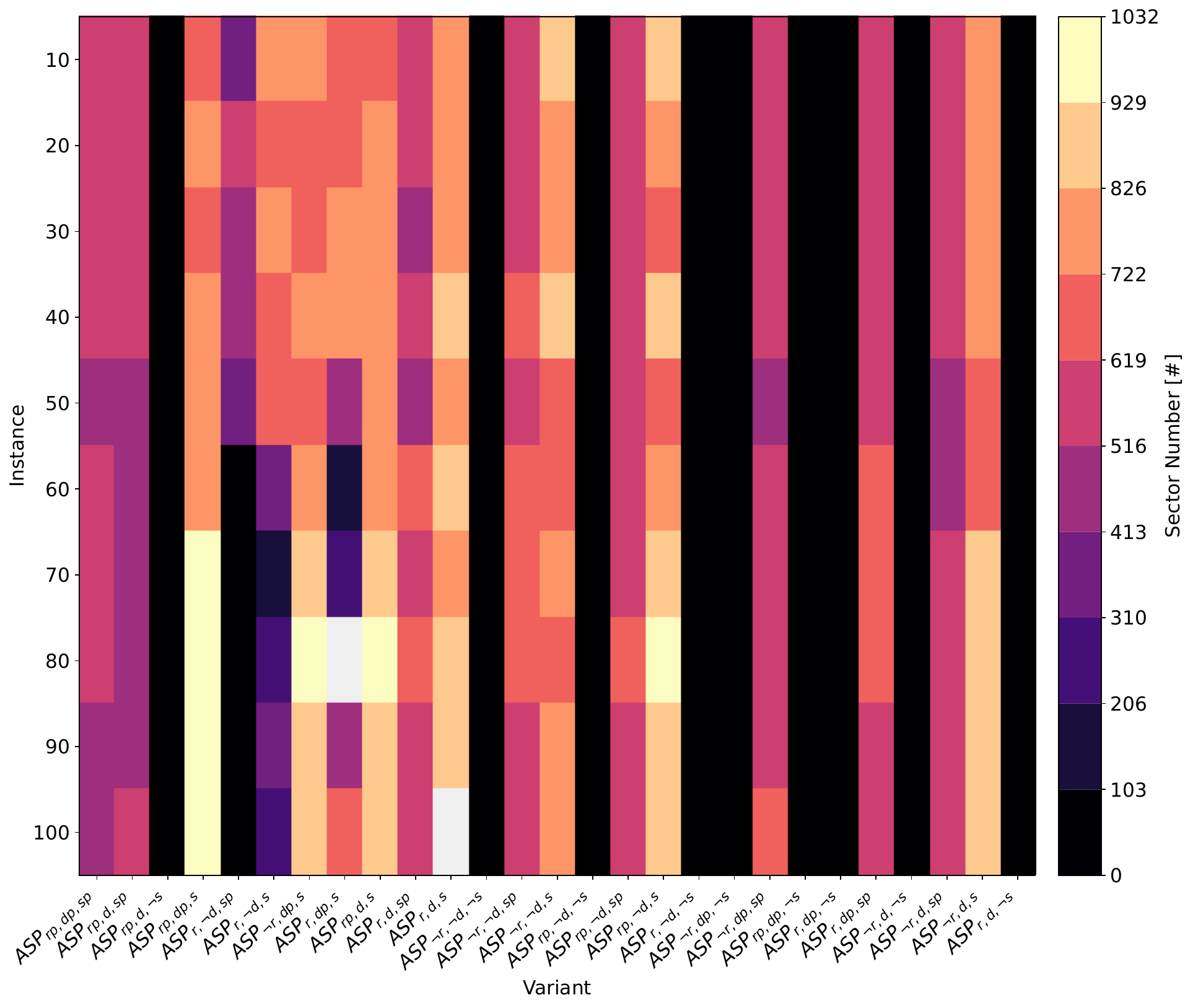}
        \caption{Sector Diff.}        
    \end{subfigure}
    
    \begin{subfigure}{0.5\textwidth}
        \includegraphics[width=8cm]{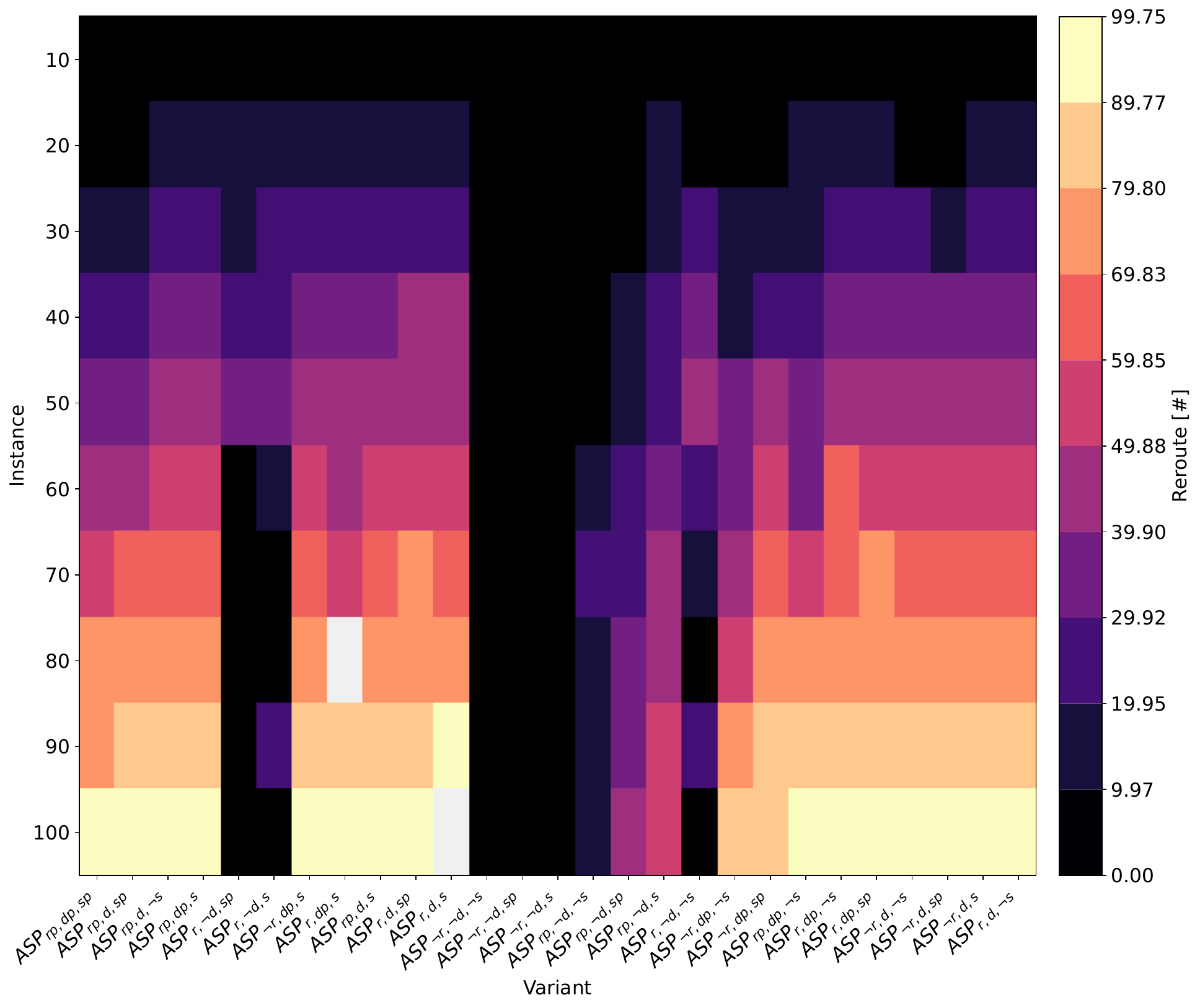}
        \caption{Reroute.}        
    \end{subfigure}
    \begin{subfigure}{0.49\textwidth}
        \includegraphics[width=8cm]{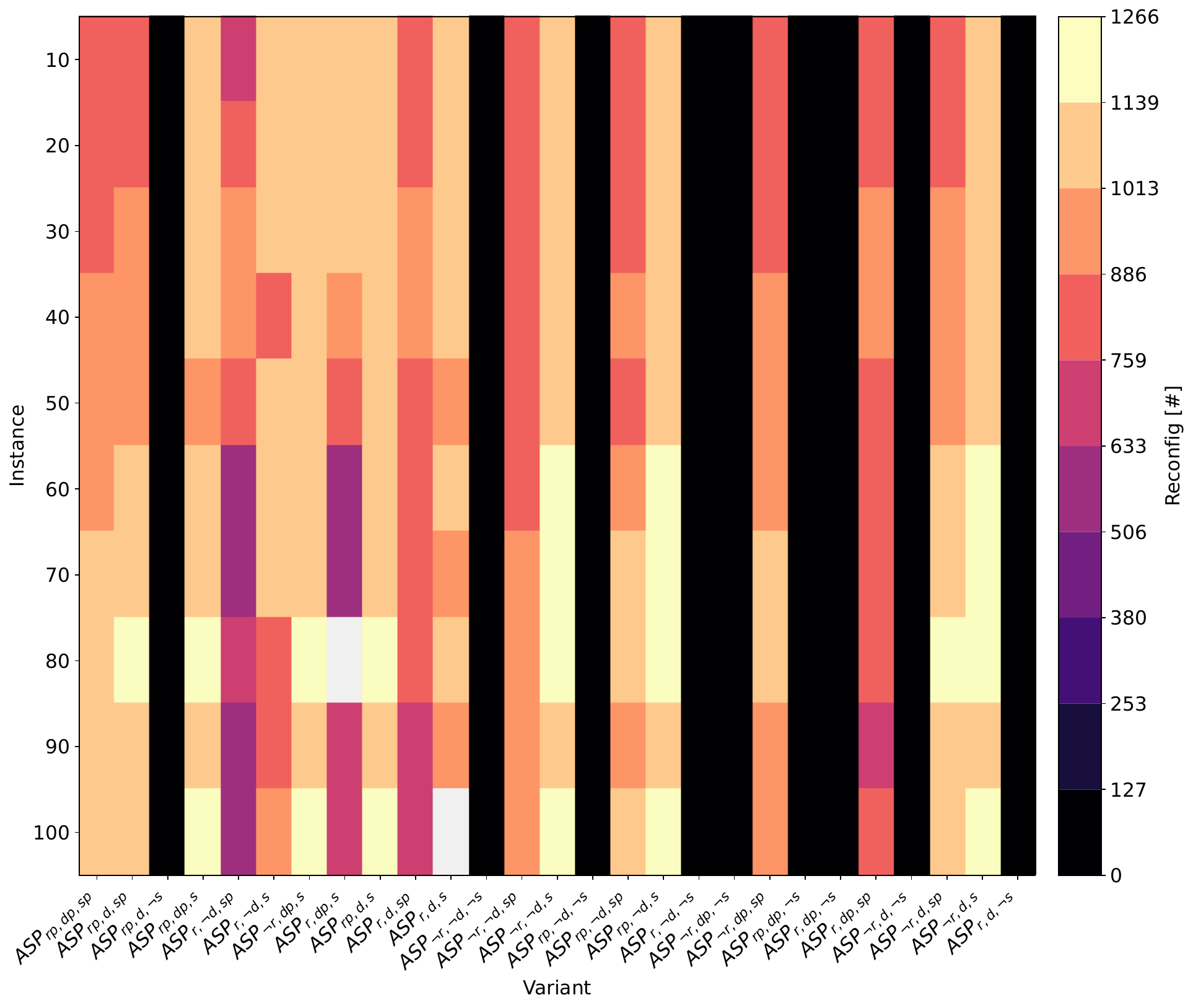}
        \caption{Reconfig.}        
    \end{subfigure}
    \caption{USA East (7x7)}
    \label{fig:heatmaps-USA}
\end{figure}

\begin{figure}
    \centering
    \begin{subfigure}{0.5\textwidth}
        \includegraphics[width=8cm]{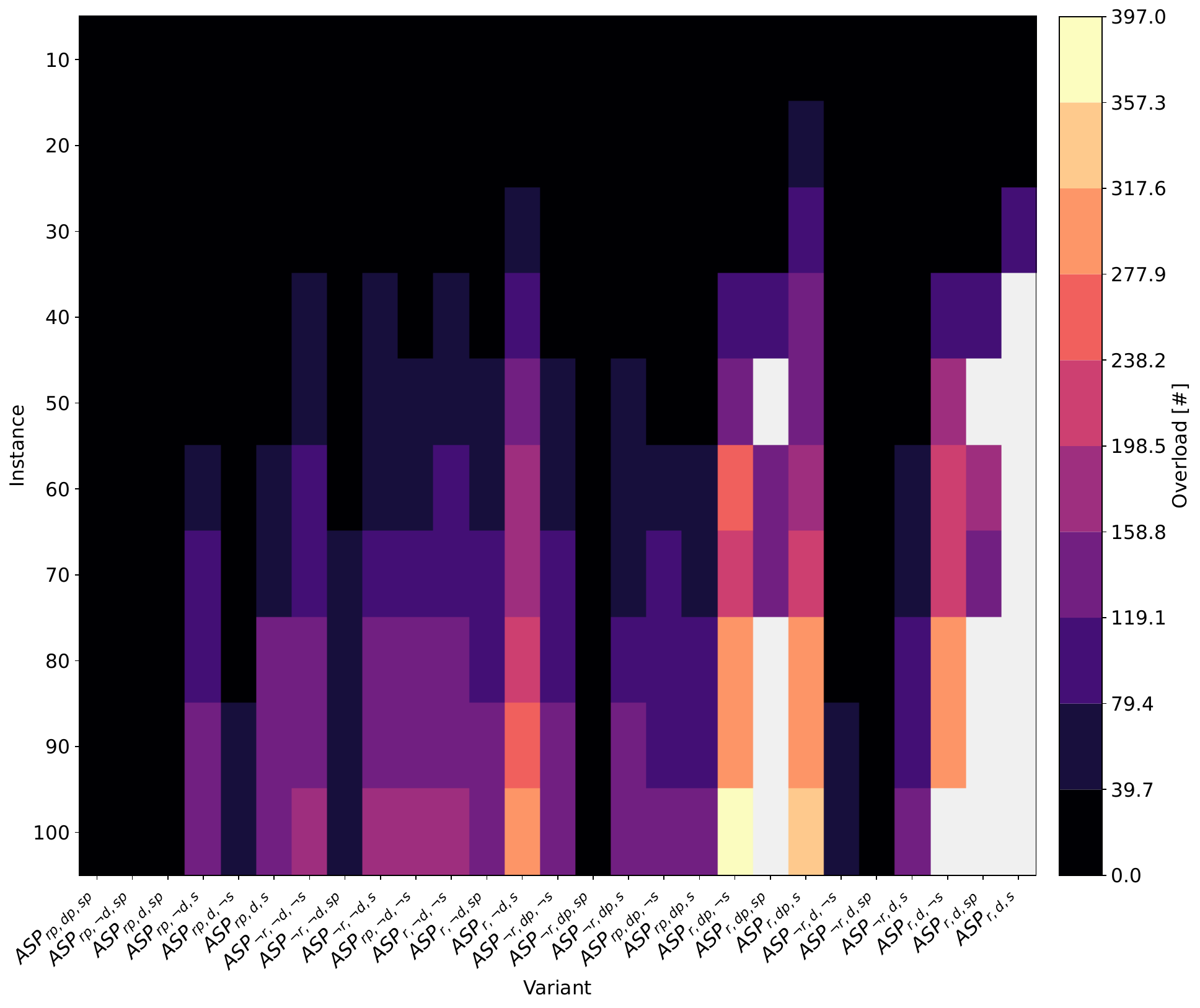}
        \caption{Overload.}        
    \end{subfigure}
    \begin{subfigure}{0.49\textwidth}
        \includegraphics[width=8cm]{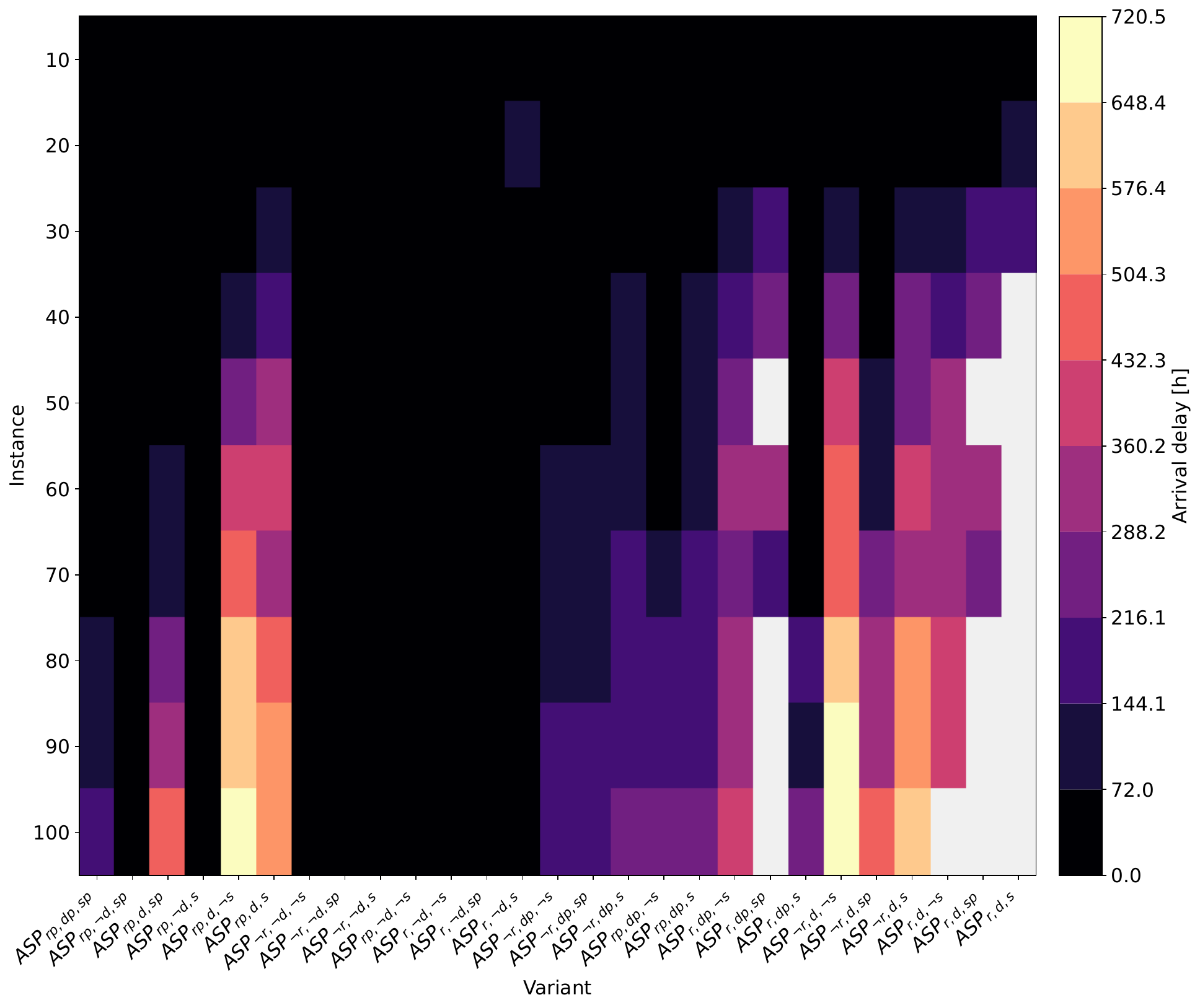}
        \caption{Arrival Delay.}        
    \end{subfigure}
    
    \begin{subfigure}{0.5\textwidth}
        \includegraphics[width=8cm]{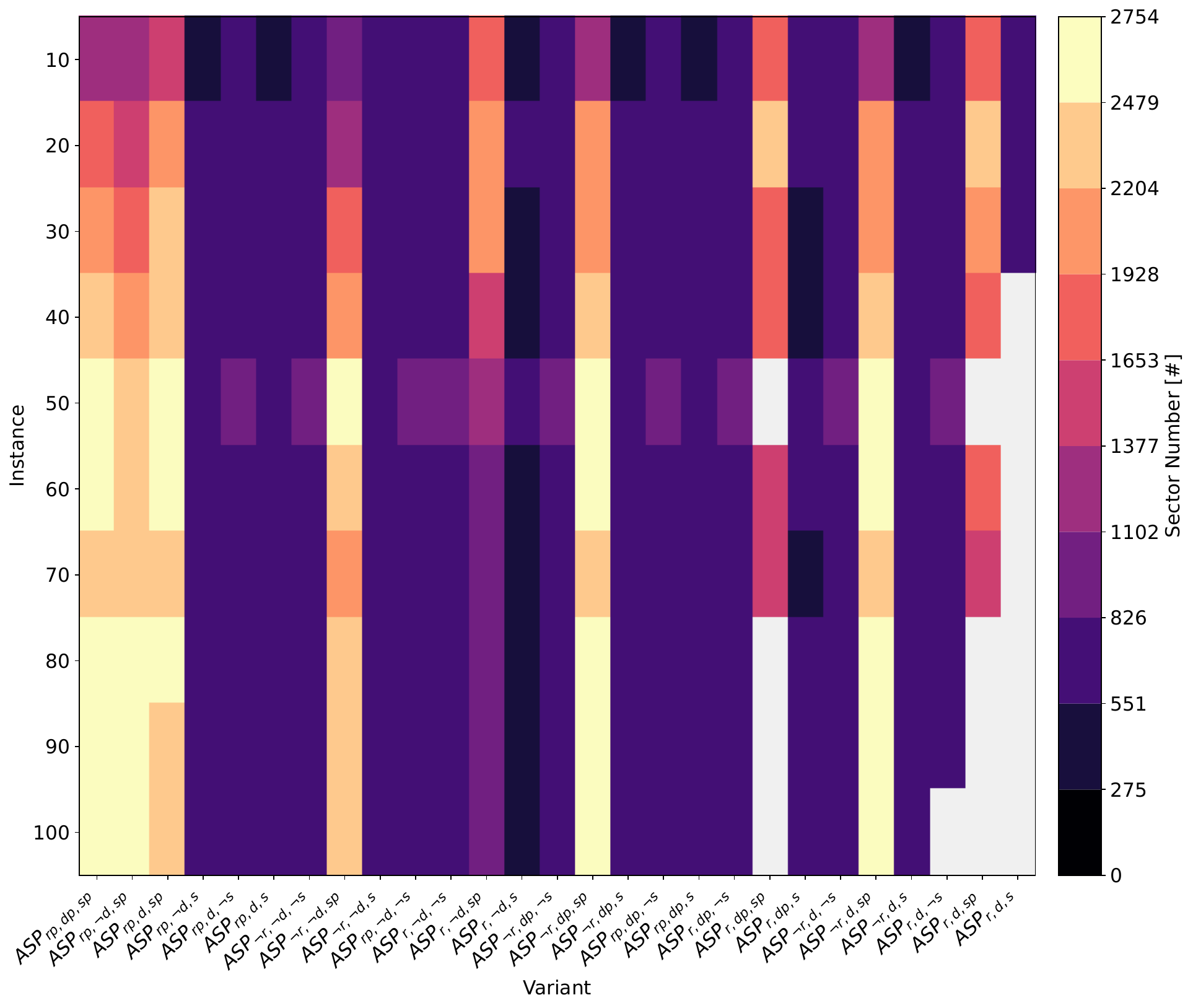}
        \caption{Sector Number.}        
    \end{subfigure}
    \begin{subfigure}{0.49\textwidth}
        \includegraphics[width=8cm]{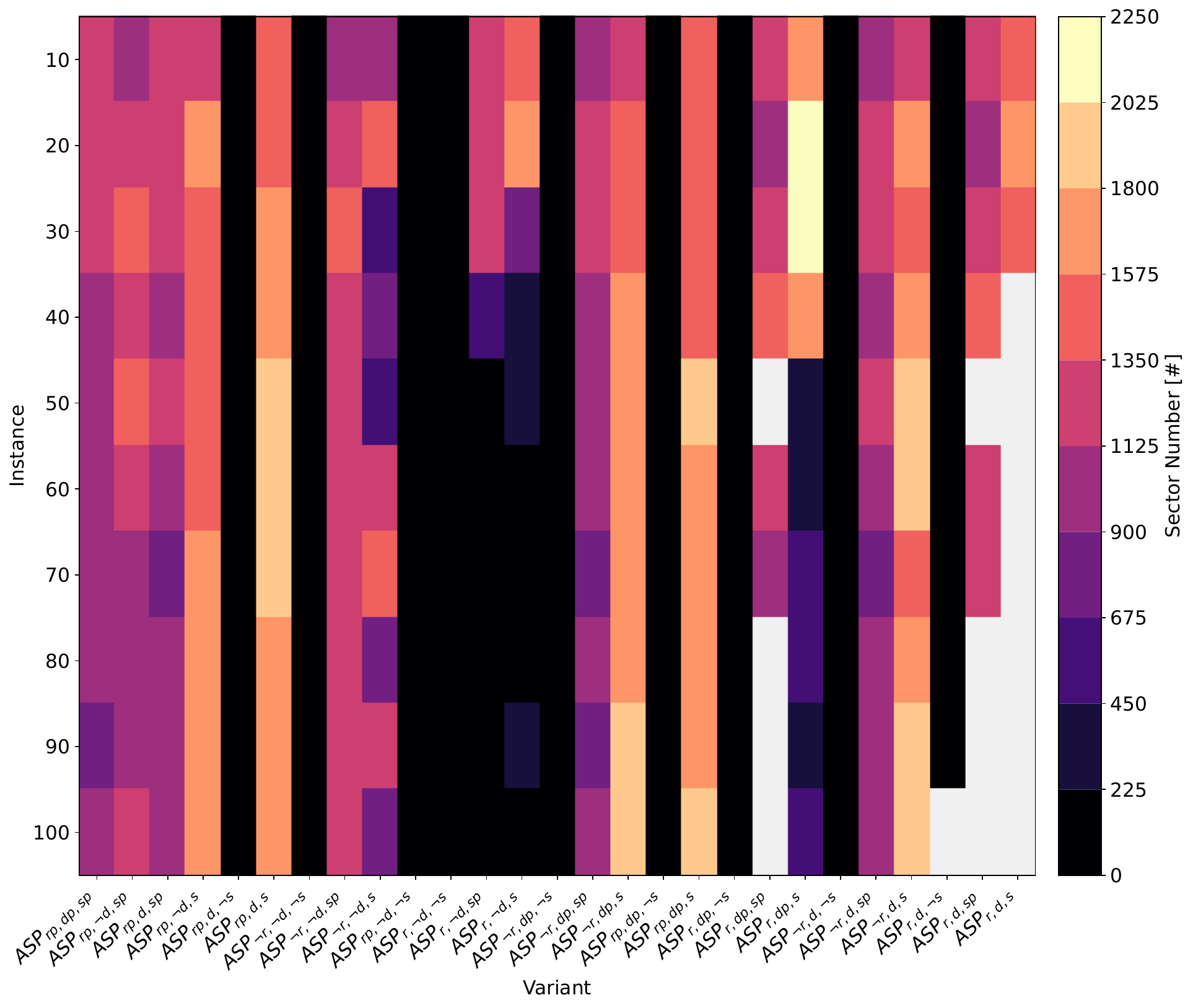}
        \caption{Sector Diff.}        
    \end{subfigure}
    
    \begin{subfigure}{0.5\textwidth}
        \includegraphics[width=8cm]{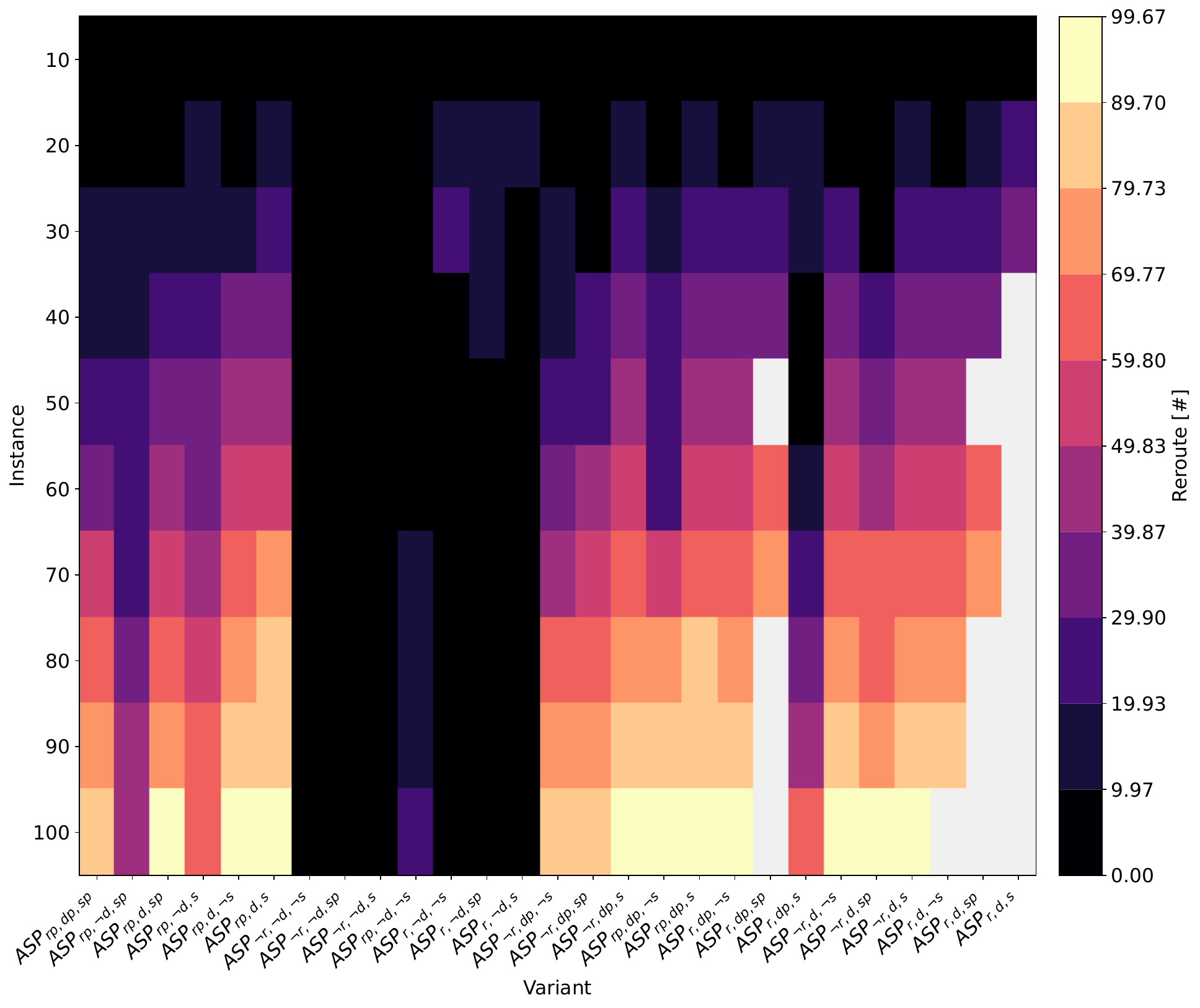}
        \caption{Reroute.}        
    \end{subfigure}
    \begin{subfigure}{0.49\textwidth}
        \includegraphics[width=8cm]{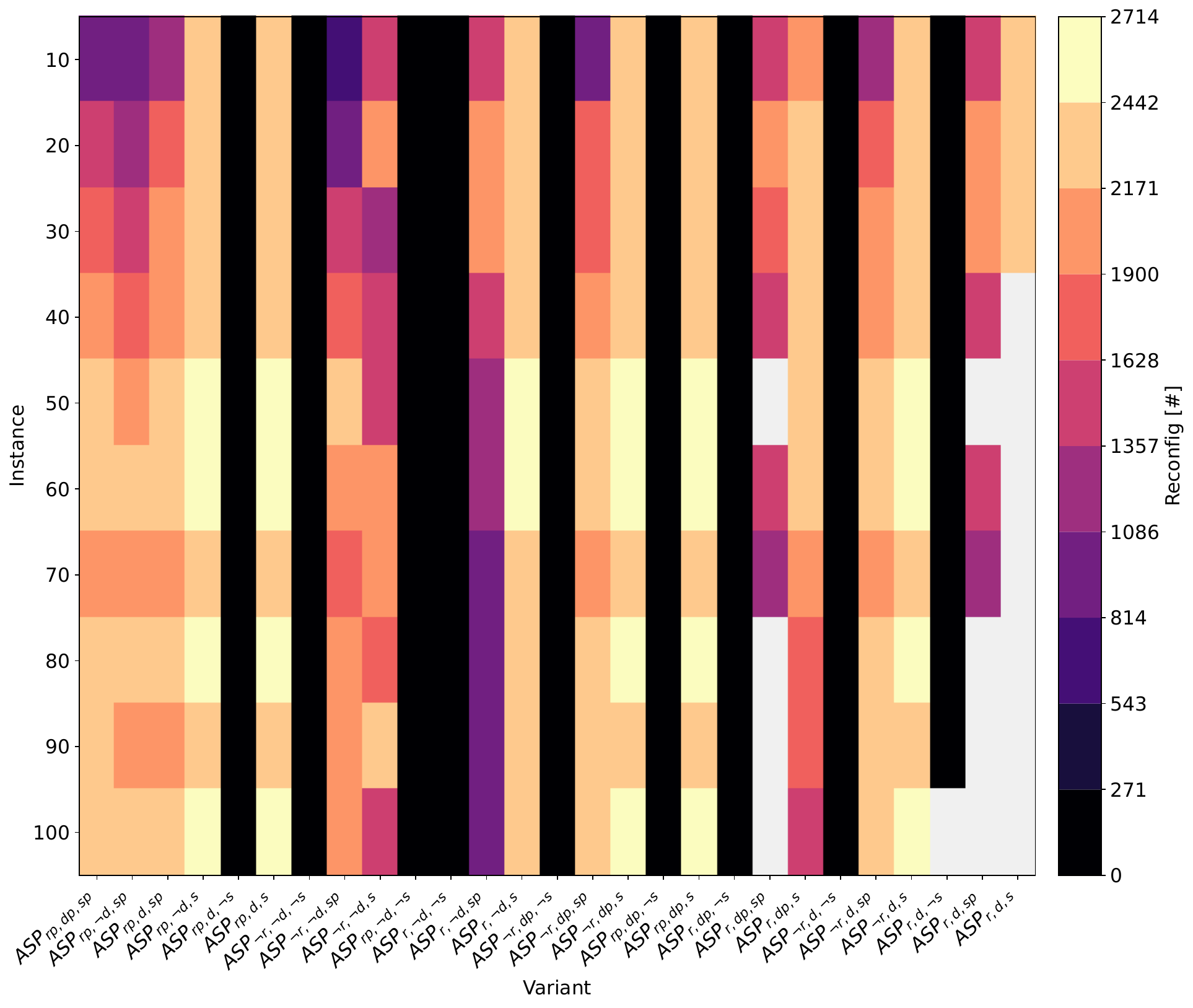}
        \caption{Reconfig.}        
    \end{subfigure}
    \caption{Major Europe (10x10)}
    \label{fig:heatmaps-MAJOR-EUROPE}
\end{figure}